\documentclass{article}
\usepackage[T1]{fontenc}
\PassOptionsToPackage{table}{xcolor}
\usepackage{preprint_style,times}
\usepackage{amsmath,amssymb,amsfonts,amsthm,mathtools,bm}
\usepackage{booktabs,tabularx,array,multirow}
\usepackage{graphicx}
\usepackage{wrapfig}
\usepackage{algorithm}
\usepackage[noend]{algpseudocode}
\usepackage{xcolor}
\definecolor{DeepCiteBlue}{RGB}{0,51,153}
\usepackage{caption}
\usepackage{microtype}
\usepackage{hyperref}
\usepackage{url}
\hypersetup{
  colorlinks=true,
  citecolor=DeepCiteBlue,
  urlcolor=blue,
  linkcolor=black,
  filecolor=blue,
  pdfborder={0 0 0}
}
\allowdisplaybreaks
\newcommand{\R}{\mathbb{R}}
\newcommand{\E}{\mathbb{E}}
\newcommand{\Prob}{\mathbb{P}}
\newcommand{\cE}{\mathcal{E}}

\newcommand{\cL}{\mathcal{L}}
\newcommand{\cN}{\mathcal{N}}

\newcommand{\ba}{\bm a}
\newcommand{\op}{\mathrm{op}}

\newcommand{\diag}{\operatorname{diag}}

\let\standardtop\top
\newcommand{\loweredtop}[2]{\raisebox{-0.25ex}{\ensuremath{#1\standardtop}}}
\renewcommand{\top}{{\mathord{\mathpalette\loweredtop\relax}}}

\newcommand{\lemsr}[2]{\ensuremath{#1{\scriptstyle\,\pm #2}}}
\newcommand{\lembsr}[2]{\ensuremath{\mathbf{#1}{\scriptstyle\,\pm #2}}}

\newtheorem{theorem}{Theorem}[section]
\newtheorem{proposition}[theorem]{Proposition}
\newtheorem{corollary}[theorem]{Corollary}

\theoremstyle{definition}

\newtheorem{remark}[theorem]{Remark}
\newtheorem{assumption}[theorem]{Assumption}
\theoremstyle{plain}

\title{Control-Geometry Straightening for\\
Sampling-Based Latent Planning}
\author{%
\textbf{Ziang Fu}$^{1}$ \quad \textbf{Ning Ning}$^{1}$\hspace{-0.12em}\thanks{Corresponding author: \texttt{patning@tamu.edu}}\\[5pt]
{\normalfont $^{1}$Texas A\&M University}
}
\preprintfinalcopy

\begin{document}
% Hide main context entries in the appendix-only table of contents.
\addtocontents{toc}{\protect\setcounter{tocdepth}{-1}}
\maketitle
\vspace{-15pt}
\lhead{Preprint}

\begin{abstract}
Joint-embedding predictive architectures enable planning with latent
world models, but accurate transition prediction alone does not ensure
that the planning objective is easy to optimize.
We introduce Control-Geometry Straightening (CGS), a single auxiliary
loss that learns planner-friendly representations by directly
straightening control geometry for sampling-efficient planning.
CGS matches pairwise cosine similarities among actions to those among
corresponding latent differences only using local transitions from pixel--action pairs.
The loss can be applied across world-model architectures using end-to-end
learned or pretrained representations.
Under linear-dynamics, our theoretical analysis
connects this objective to temporal straightening and more balanced
terminal-cost curvature across the full planning horizon, yielding
finite-budget guarantees for MPPI, local contraction results for CEM,
and convergence bounds for gradient descent.
Across four control environments and multiple planners, CGS improves
planning with fewer sampled candidates and refinement steps, achieving
success-rate gains up to $20$ and $12.6$ percentage points over
LeWorldModel (LeWM) and its temporal-straightening variant (LeWM+TS),
respectively, with sampling-based planners using $128$ candidates
per update.
Probes, comparisons with DINO-WM architecture, and planner-side ablations clarify how
latent motion organization, state dependence, and dynamical context
shape planning behavior.
Straightening control geometry thus makes good action sequences
easier to find under limited planning budgets.
\end{abstract}

% ===== BEGIN intro.tex =====
\section{Introduction}
\label{sec:introduction}

Latent world models enable agents to plan by predicting the consequences of
actions in a learned representation space
\citep{hafner2019planet,lecun2022path}.
Joint-embedding predictive architectures (JEPAs) predict future embeddings
\citep{assran2023ijepa,bardes2024vjepa}.
DINO-WM demonstrates the effectiveness of prediction on pretrained visual
features \citep{zhou2025dinowm}, while LeWorldModel (LeWM) enables stable
end-to-end learning of the encoder and predictor directly from pixels
\citep{maes2026lewm}. Across these approaches, planning relies on anti-collapsing predictive representations, while the geometry exposed to action optimization remains underspecified.

Sampling-based planners address nonconvex action-sequence optimization by evaluating predicted rollouts in parallel. CEM and MPPI optimize action sequences through elite selection or cost-weighted updates iteratively
\citep{deboer2005cem,williams2017mppi,yi2024covo}. A central bottleneck is
{planning time}, especially under tight sampling budgets, since each refinement requires many multi-step model
rollouts. We therefore ask: \emph{what representation geometry
makes good action sequences easier to find with limited sampled candidates
and refinement steps?}

Moving from predictive to \emph{planner-friendly representations} requires more than accurate transition prediction: the representation geometry must also make the planning objective easier to optimize.
Temporal Straightening (TS) regularizes latent-trajectory curvature, making latent distances reflect feasible progress and improving conditioning for gradient-based (GD) planning
\citep{wang2026ts}.
LeWM already exhibits implicit temporal straightening without a curvature loss
\citep{maes2026lewm}; adding explicit TS shows environment-dependent effects in straightness and planning
(Figure~\ref{fig:lewm-ts-cosine-effect}; Table~\ref{tab:lewm-main-k128}).
Yet TS aligns consecutive latent differences without explicitly structuring how actions shape latent transitions.
This motivates directly shaping \emph{control geometry}: how action effects are represented and accumulate through learned dynamics to determine the terminal-cost landscape.

\begin{figure*}[t]
    \centering
    \includegraphics[width=\textwidth]{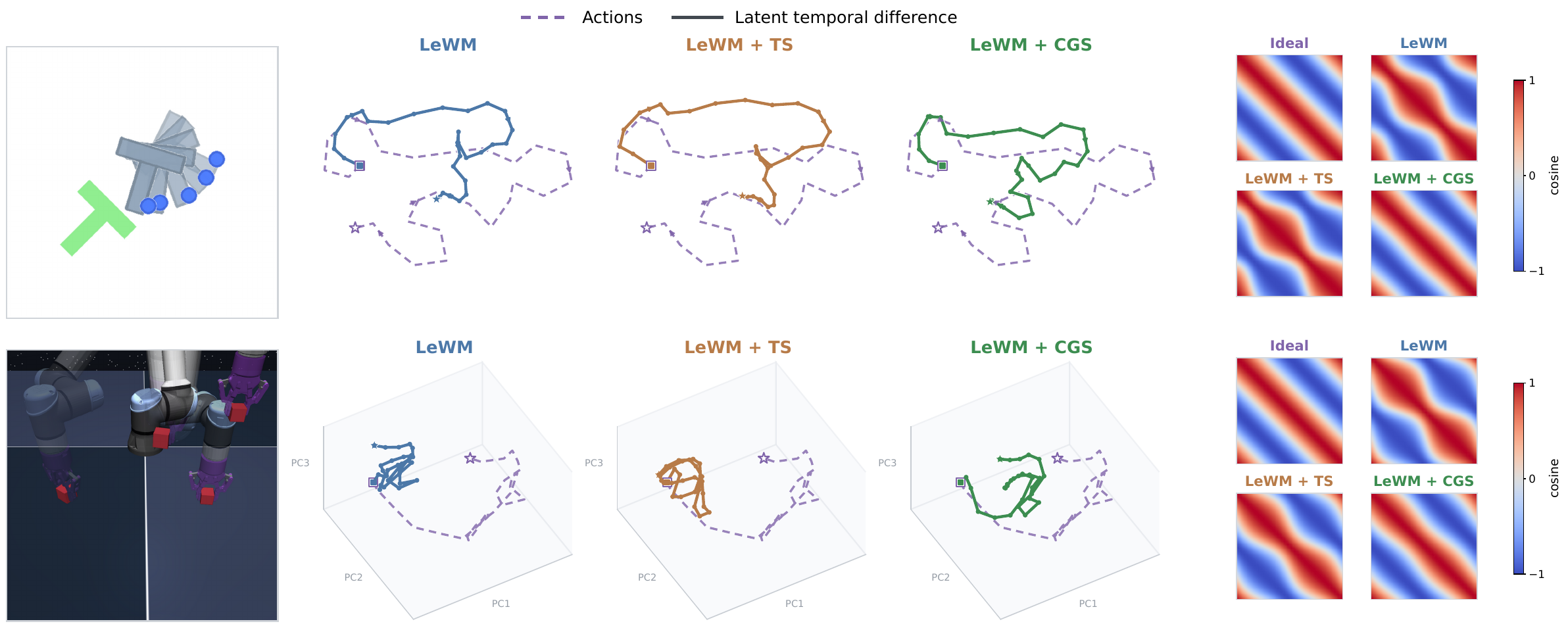}
    \caption{\textbf{CGS preserves action-related turns and control-response
    geometry.}
    PushT (top) and Cube (bottom), with LeWM (blue), LeWM+TS (brown),
    and LeWM+CGS (green).
    \textbf{Middle: transition shaping.} CGS
    makes latent transitions follow action-related turns more closely than LeWM and
    TS.
    \textbf{Right: control-response geometry.} The heatmaps show pairwise
    cosines of predicted endpoint changes induced by action-sequence
    perturbations; \emph{Ideal} gives the angular relations among the
    perturbations themselves. CGS more closely preserves these relations in
    the tested control plane.}
    \label{fig:pusht-cgs-intro}
\end{figure*}

We introduce \textbf{Control-Geometry Straightening (CGS)}, which learns
this geometry from local transitions by matching pairwise cosine
similarities among actions to those among their corresponding
latent differences. This single auxiliary loss applies directly to latent transitions and
transfers across both end-to-end and pretrained world-model representations. Our linear-dynamics theoretical analysis shows how CGS
\emph{straightens} this geometry, promoting more balanced terminal-cost
curvature across the full planning horizon for more effective action search.
Experiments show substantial success-rate gains over LeWM and LeWM+TS
across multiple planners, together with improved performance using fewer sampled candidates and refinement steps.

Our contributions are threefold:
\begin{itemize}
    \setlength{\itemsep}{1pt}
    \setlength{\parsep}{0pt}
    \setlength{\parskip}{0pt}

    \item \textbf{Planner-friendly representation learning.}
    We introduce CGS, a single auxiliary loss that matches action and
    latent-difference cosine similarities to learn planner-friendly
    representations only using local transitions from pixel--action pairs, directly straightening
    control geometry for more effective multi-step planning under
    limited planning budgets.

    \item \textbf{Geometry-to-planning theory.}
    Under linear-dynamics, we develop a theoretical analysis linking
    world-model representation geometry to the optimization behavior of
    planners through finite-horizon planning curvature. This yields finite-budget
    guarantees for MPPI, local contraction results for CEM, and convergence bounds
    for gradient descent.

    \item \textbf{Empirical validation and characterization.}
    Across multiple planners, CGS improves planning performance and sample
    efficiency, reaching success-rate gains up to $20$ and $12.6$ percentage
    points over LeWM and LeWM+TS, respectively. Probes, cross-architecture
    comparisons with DINO-WM, and targeted planner-side ablations further
    clarify how latent motion organization, state dependence, and dynamical
    context shape planning behavior.
\end{itemize}

\section{Related Work}
\label{sec:related-work}

\noindent\textbf{Latent world models and JEPA.}
Latent world models learn compact dynamics for planning
\citep{ha2018worldmodels,hafner2019planet,hansen2024tdmpc2}.
JEPAs predict embeddings without reconstructing pixels
\citep{lecun2022path,assran2023ijepa,bardes2024vjepa}.
DINO-WM and V-JEPA~2-AC build on pretrained visual features, while PLDM
and LeWM learn representations and dynamics jointly
\citep{zhou2025dinowm,assran2025vjepa2,sobal2025pldm,maes2026lewm}.
Much of this development addresses representation collapse through stable
targets, variance regularization, or distribution matching
\citep{assran2023ijepa,bardes2022vicreg,balestriero2025lejepa}.
These mechanisms stabilize predictive learning, while leaving underdetermined
how representation geometry should support planning by making good action
sequences easier to find~\citep{li2026predictive}.

\noindent\textbf{Planner-friendly representations.}
Earlier work learns locally linear latent dynamics for trajectory optimization
or representations through differentiable planners
\citep{watter2015embed,zhang2019solar,srinivas2018upn}.
Recent JEPA methods add goal-conditioned values, budget-conditioned
reachability, or future-goal inverse control
\citep{destrade2026value,li2026predictive,sun2026intact}.
Inverse-dynamics objectives retain action information in latent
transitions~\citep{ivashkov2026sensorimotor,zhang2026delta}.
Temporal Straightening aligns consecutive latent displacements to reduce
trajectory curvature \citep{wang2026ts}, without structuring
how actions shape latent transitions.
Relational and similarity-preserving objectives in vision
knowledge distillation transfer structure between teacher and student
representations~\citep{park2019relational,tung2019similarity}.

\noindent\textbf{Sampling-based MPC.}
CEM fits proposals to low-cost elites; MPPI weights candidates
by rollout cost~\citep{deboer2005cem,williams2017mppi}.
iCEM reuses samples, CoVO-MPC adapts covariance, and
Biased-MPPI and PRISM use controller or learned-prior proposals
\citep{pinneri2021icem,yi2024covo,trevisan2024biased,wang2026prism}.
Theory studies MPPI updates and finite-sample stability
\citep{fazlyab2026mppi,yoon2026finite}.
Sampling efficiency also depends on the horizon, effective action dimension,
cost landscape, and proposal design~\citep{yoon2022sampling}.
See Appendix~\ref{app:related-work} for more discussions about extended related work.

\section{Control-Geometry Straightening}
\label{sec:cgs}

For our main analysis, we formulate CGS using the end-to-end LeWM architecture
\citep{maes2026lewm} and offline pixel--action trajectories $(o_t,a_t)$.
CGS learns control geometry by matching pairwise
cosine similarities among actions to those among their corresponding latent
differences. As developed below, this local geometric constraint straightens
the finite-horizon planning geometry. Figure~\ref{fig:cgs_diag} shows the training and planning pipeline;
the SIGReg term is omitted from the diagram.

\begin{figure}[t]
    \centering
    \includegraphics[width=\linewidth]{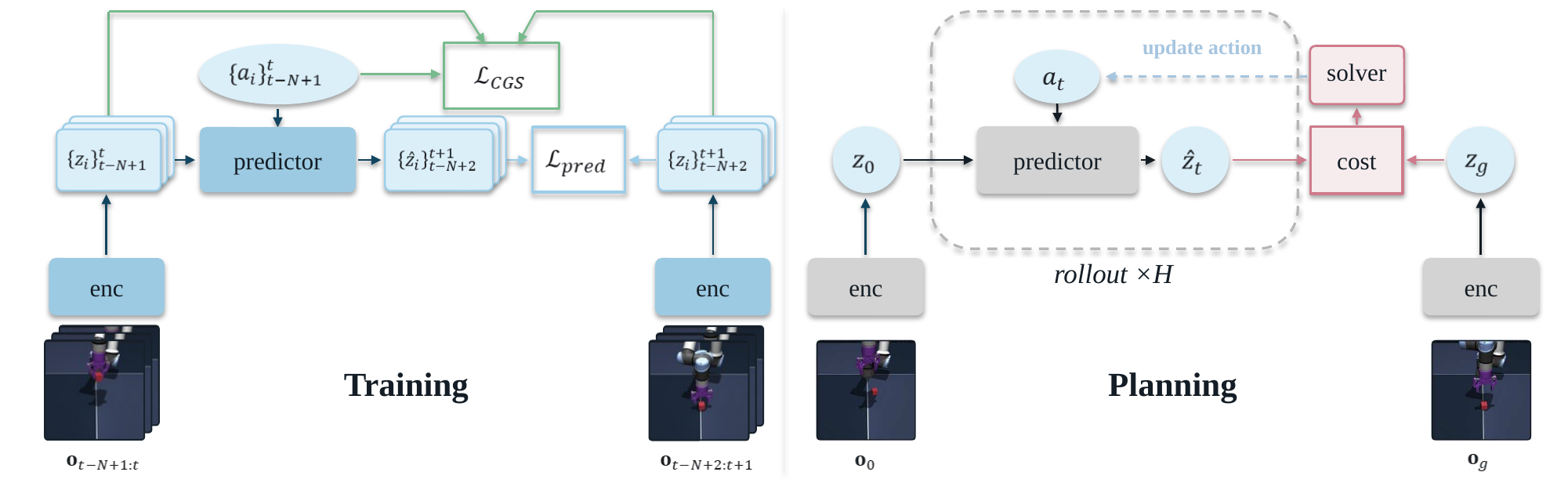}
    \caption{\textbf{CGS training and planning.}
    During training, we minimize the prediction loss between predicted and
    encoded target embeddings, together with a CGS loss that matches pairwise
    cosine similarities of standardized actions and consecutive latent
    differences; SIGReg is omitted from the diagram. During planning, we roll
    out candidate action sequences for $H$ steps using the frozen predictor
    and refine them with a sampling-based planner to minimize the distance
    between the predicted terminal embedding and the goal embedding.}
    \label{fig:cgs_diag}
\end{figure}

\subsection{Latent world model and planning setup}
\label{sec:cgs-setup}

Following LeWM \citep{maes2026lewm}, our latent world model has two components:
\begin{equation}
\begin{aligned}
    \text{Encoder:\quad}& z_t = f_\theta(o_t), \\
    \text{Predictor:\quad}& \widehat z_{t-N+2:t+1}
    =
    g_\phi\!\left(
        z_{t-N+1:t},
        a_{t-N+1:t}
    \right).
\end{aligned}
\label{eq:cgs-world-model}
\end{equation}
The encoder $f_\theta$ is a tiny ViT \citep{dosovitskiy2021vit}. It constructs embedding $z_t \in \mathbb{R}^d$ with observation $o_t$ from the output [CLS] token through an MLP projection step. The transformer predictor $g_{\phi}$, followed by an MLP projector, takes a history of $N$ frame representations and corresponding actions $a_t\in\mathbb{R}^{d_a}$ as input. Equation~\eqref{eq:cgs-world-model} expresses prediction in sequence-to-sequence form, with $N$ input frames and $N$ next-step
predictions. Specifically, temporal causal masking is used in the predictor to restrict each next-step prediction to inputs at or before the current time, as shown in Eq.~\eqref{eq:causal-masking}:
\begin{equation}
\widehat z_{t-N+2} = g_\phi\!\left(
        z_{t-N+1},
        a_{t-N+1}\right), \ \dots \ ,\ \widehat z_{t+1} =  g_\phi\!\left(
        z_{t-N+1:t},
        a_{t-N+1:t}
    \right).
\label{eq:causal-masking}
\end{equation}

At planning time, we optimize an action sequence $\ba=(a_0,\ldots,a_{H-1})\in\R^{H\times d_a}$ with frozen world model as the dynamical model. We encode the initial and goal observation pixels to $z_0$ and $z_g$, and search for an action sequence whose predicted latent rollout minimizes the goal-matching cost
\begin{equation}
C(\ba) = \tfrac12 \|\widehat F_H(\ba)-z_g\|_2^2,
\label{eq:cgs-planning-cost}
\end{equation}
where the terminal latent prediction over a horizon $H$ is denoted as $\widehat F_H(\ba)$.

\subsection{Learning Control Geometry by Pairwise Similarity Matching}
\label{sec:cgs-objective}

We seek to straighten control geometry in the latent space by matching the pairwise cosine similarities of latent differences to those of actions. For each transition $t$, let
$\Delta z_t=f_\theta(o_{t+1})-f_\theta(o_t)$.
Consider a history window with $N\geq 2$ transitions, and define
\(u_t=\frac{a_t}{\|a_t\|_2}\),\(v_t=\frac{\Delta z_t}{\|\Delta z_t\|_2}.\)
Stack them as the rows of $U\in\R^{N\times d_a}$ and
$V\in\R^{N\times d}$. Their \emph{cosine Gram matrices} are
\begin{equation}
    K_A=UU^\top,
    \qquad
    K_Z=VV^\top,
\label{eq:cgs-relational-grams}
\end{equation}
Thus, $K_A$ and $K_Z$ contain pairwise action and latent-difference cosines, respectively. CGS minimizes
\begin{equation}
    \cL_{\rm CGS}
    =
    \frac{\left\|K_Z-K_A\right\|_F^2}{N(N-1)}
    = \frac{1}{N(N-1)}\sum_{\substack{t\ne s}}
    \left[\cos(\Delta z_t,\Delta z_s)-\cos(a_t,a_s)\right]^2,
\label{eq:cgs-objective}
\end{equation}
where $\|\cdot\|_F$ denotes the Frobenius norm. Both matrices have unit
diagonal, so the loss averages squared discrepancies over $N(N-1)$
off-diagonal entries. CGS uses action directions as a geometric reference for latent transitions:
aligned, orthogonal, and opposing actions encourage corresponding angular
relationships between latent differences. The pairwise matching form resembles similarity-preserving knowledge
distillation~\citep{tung2019similarity}. Here, actions instead provide
the reference geometry for latent transitions, and the matched relations
encode control geometry for planning. As developed in Section~\ref{sec:cgs_theory}, this local
geometric constraint promotes more balanced finite-horizon planning curvature
and more effective action search under limited sampling budgets.

\textbf{Relation to temporal straightening.}
When consecutive actions point in the same direction, $u_t^\top u_{t+1}=1$,
TS and CGS both encourage consecutive latent differences to align.
The adjacent CGS term is then $(1-v_t^\top v_{t+1})^2$, the squared angular
TS loss in \cite{wang2026ts}.

\subsection{Training objective}
\label{sec:cgs-implementation}

We jointly train the encoder and predictor with CGS and the LeWM objectives.
Using the $N$-frame history window, the prediction loss is
\begin{equation}
    \cL_{\rm pred}
    =
    (Nd)^{-1}\,
        \left\|
            \widehat z_{t-N+2:t+1}
            -
            z_{t-N+2:t+1}
        \right\|_F^2.
\label{eq:cgs-prediction-loss}
\end{equation}
The factor $(Nd)^{-1}$ averages squared errors over the $N$ next-step
predictions and $d$ latent coordinates. The base objective of LeWM
\citep{maes2026lewm} is
\begin{equation}
    \cL_{\rm LeWM}
    =
    \cL_{\rm pred}
    +
    \lambda_{\rm SIGReg}\cL_{\rm SIGReg},
\label{eq:cgs-lewm-objective}
\end{equation}
where $\cL_{\rm SIGReg}$ is LeWM's SIGReg anti-collapse term, which promotes
feature diversity by encouraging an isotropic Gaussian distribution of latent embeddings \citep{maes2026lewm}. The full training objective is
\begin{equation}
    \cL
    =
    \cL_{\rm LeWM}
    +
    \lambda_{\rm CGS}
    \cL_{\rm CGS},
\label{eq:cgs-lewm-training}
\end{equation}
where $\lambda_{\rm CGS} \geq 0$ controls the strength of the control geometry straightening. To help preserve the predictive geometry while gradually introducing control-geometry regularization, we linearly ramp the CGS weight from zero to the final value $\lambda_{\rm CGS}$ over the first $5\%$ of training updates. We use the default history length $N=3$ throughout training, following LeWM~\citep{maes2026lewm}. Additional ablations and discussion of the ramp schedule and history length appear in Appendix~\ref{app:training-ablations}.

% ===== END CGS_new.tex =====
% ===== BEGIN theory_new.tex =====
\section{Theory: From CGS to Sampling-Efficient Planning}
\label{sec:cgs_theory}

In this section, we analyze how CGS supports sampling-efficient planning under linear latent dynamics. We connect CGS to temporal straightening and balanced action effects, then derive consequences for finite-budget planning over a finite horizon.
Full proofs, theoretical results of CEM and gradient-based planners, and finite-budget
detailed discussions are provided in Appendix~\ref{app:cgs_full_theory}.

\paragraph{Setup.}
For analysis, we study linear latent dynamics,
\begin{equation}
  z_{t+1}=Az_t+Ba_t,
  \label{eq:linear-dynamics} 
\end{equation}
with
\(A\in\mathbb R^{d\times d}\) and
\(B\in\mathbb R^{d\times d_a}\).
We first consider \(d_a=d\) with \(B\) invertible; see
Remark~\ref{rem:cgs_low_dim_actions} for \(d_a<d\). For goal-reaching planning, we optimize the action sequence
$\ba=(a_0,\ldots,a_{H-1})\in\R^{H\times d_a}$, \(n_H = H \times d_a\), to minimize the terminal latent goal-matching objective 
\[
  C(\mathbf a)=\tfrac12\|z_H-z_g\|_2^2,\quad z_H = \widehat F_H(\ba),
\]
with latent goal $z_g$ and terminal prediction $\widehat F_H(\ba)$ as in Eq.~\eqref{eq:cgs-planning-cost}, where planning horizon $H \geq 2$.

\subsection{Latent Control Geometry Learned by CGS}
\label{sec:cgs_main_geometry}

Letting \(D:=A-I_d\), with \(I_d\) denoting the \(d\times d\)-dimensional identity matrix, the latent difference is \(\Delta z_t=Dz_t+Ba_t\).
Under the fixed-radius assumption in Eq.~\eqref{eq:app_cgs_fixed_radius},
the population CGS loss is a positive multiple of
\(\mathcal L_G(c):=\mathbb E[(\Delta z^\top\Delta z'-c\,a^\top a')^2]\),
where \((\Delta z,a)\) and \((\Delta z',a')\) are independent
transition samples and \(c>0\) is a scale factor set by fixed radii of actions and latent differences.

\begin{assumption}[Bounded features and joint coverage]
\label{ass:cgs_joint_coverage}
Following \citet{yin2022nearoptimal}, we assume there exists a feature map \(\psi\) satisfying
(i) boundedness, \(\|\psi(z_t,a_t)\|_2^2\leq \bar \rho<\infty\) a.s., and
(ii) coverage, \(\lambda_{\min}(\Sigma_x)\geq\rho>0\), where
\(\Sigma_x:=\mathbb E[\psi(z_t,a_t)\psi(z_t,a_t)^\top]\).
Here we use \(\psi(z_t,a_t)=x_t=[z_t^\top,a_t^\top]^\top\).
\end{assumption}

\begin{proposition}[Temporal straightening and control isotropy under CGS]
\label{prop:cgs_joint_straightening}
Under Assumption~\ref{ass:cgs_joint_coverage}, \(\rho I_{2d}\preceq\Sigma_x\preceq\bar\rho I_{2d}\). With
\(\mathcal R(A,B):=\|D^\top D\|_F^2+2\|D^\top B\|_F^2+\|B^\top B-cI_{d_a}\|_F^2\),
we have
\begin{equation}
    \rho^2\mathcal R(A,B)\leq\mathcal L_G(c)
    \leq\bar\rho^2\mathcal R(A,B).
    \label{eq:cgs_main_joint_equivalence}
\end{equation}
In this model class, \(\mathcal L_G(c)=0\) iff \(A=I_d\) and \(B^\top B=cI_{d_a}\); Proof is in Appendix~\ref{app:cgs_joint_straightening_proof}.
\end{proposition}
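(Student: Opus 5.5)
Since $\Delta z_t = Dz_t+Ba_t = Mx_t$ with $M:=[D\;\,B]\in\R^{d\times 2d}$ and $x_t=[z_t^\top,a_t^\top]^\top$, the plan is to write the CGS residual as a single bilinear form in two independent copies $x,x'$ of the feature vector. This reduces $\mathcal L_G(c)$ to a weighted Frobenius norm of one fixed matrix. The first claim, $\rho I_{2d}\preceq\Sigma_x\preceq\bar\rho I_{2d}$, is immediate. The lower bound is assumption (ii). For the upper bound, $\lambda_{\max}(\Sigma_x)\le \operatorname{tr}\Sigma_x=\E\|x_t\|_2^2\le\bar\rho$ by (i); more directly, $v^\top\Sigma_x v=\E(v^\top x)^2\le \|v\|^2\E\|x\|^2\le\bar\rho\|v\|^2$.

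Next I compute the residual. With $a=Px$ for the selector $P=[0\;\,I_{d_a}]$, we have
\[
\Delta z^\top\Delta z'-c\,a^\top a' = x^\top (M^\top M - cP^\top P)x' =: x^\top Q x'.
\]
In block form,
\[
Q=\begin{pmatrix} D^\top D & D^\top B\\ B^\top D & B^\top B-cI_{d_a}\end{pmatrix},
\]
so $\|Q\|_F^2=\|D^\top D\|_F^2+2\|D^\top B\|_F^2+\|B^\top B-cI\|_F^2=\mathcal R(A,B)$. Because $x$ and $x'$ are independent with the same second moment $\Sigma_x$,
\[
\mathcal L_G(c)=\E[(x^\top Qx')^2]=\E\operatorname{tr}(Q^\top xx^\top Q x'x'^\top)=\operatorname{tr}(Q^\top\Sigma_x Q\Sigma_x).
\]
Here only independence and the second moments are used; no centering is needed, since $\E[xx^\top]=\Sigma_x$ is the uncentered moment matrix.

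The sandwich bound then follows by applying the Loewner bounds twice. Write $\operatorname{tr}(Q^\top\Sigma_xQ\Sigma_x)=\|\Sigma_x^{1/2}Q\Sigma_x^{1/2}\|_F^2$. The map $X\mapsto\operatorname{tr}(X^\top\Sigma X)$ is monotone in $\Sigma$ in the Loewner order. So replacing the left $\Sigma_x$ by $\rho I$ (resp.\ $\bar\rho I$) gives $\rho\,\|Q\Sigma_x^{1/2}\|_F^2\le\cdot\le\bar\rho\,\|Q\Sigma_x^{1/2}\|_F^2$. Then $\|Q\Sigma_x^{1/2}\|_F^2=\operatorname{tr}(Q\Sigma_xQ^\top)$ lies between $\rho\|Q\|_F^2$ and $\bar\rho\|Q\|_F^2$, which yields Eq.~\eqref{eq:cgs_main_joint_equivalence}. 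This is the step needing the most care, namely keeping track of which side each factor sits on, but it is routine.

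Finally, the characterization of zeros. Since $\rho>0$, $\mathcal L_G(c)=0$ iff $\mathcal R(A,B)=0$, i.e., iff $D^\top D=0$, $D^\top B=0$ and $B^\top B=cI$. Now $D^\top D=0$ gives $\|D\|_F^2=\operatorname{tr}(D^\top D)=0$, so $D=0$, i.e., $A=I_d$, and then $D^\top B=0$ holds automatically. Conversely, $A=I_d$ and $B^\top B=cI_{d_a}$ make all three terms vanish. Thus the zero set is exactly $\{A=I_d,\;B^\top B=cI_{d_a}\}$, as claimed.
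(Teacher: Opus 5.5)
Your proposal is correct and follows the paper's proof essentially step for step: the same bilinear form $x^\top Qx'$ with the same block matrix $Q$, the trace identity $\mathcal L_G(c)=\operatorname{tr}(Q\Sigma_xQ\Sigma_x)$, the sandwich obtained from $\rho I\preceq\Sigma_x\preceq\bar\rho I$, and $D^\top D=0\Rightarrow D=0$ for the zero set. The only cosmetic difference is in the sandwich step: you apply Loewner monotonicity to each factor of $\Sigma_x$ in turn, whereas the paper notes that the singular values of the map $Q\mapsto\Sigma_x^{1/2}Q\Sigma_x^{1/2}$ lie in $[\rho,\bar\rho]$; the two arguments are equivalent.
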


Under Assumption~\ref{ass:cgs_joint_coverage}, CGS encourages temporal straightening (\(A\approx I_d\)), reduces drift--control coupling (\(D^\top B\approx0\)) and promotes balanced action effects (\(B^\top B\approx cI_{d_a}\)).
During joint training,
prediction and other losses may favor different representation geometry, so the
CGS loss may remain nonzero. CGS can still improve control geometry under these constraints; see Appendix~\ref{app:cgs_prediction_constraint}.

\subsection{From Finite-Horizon Planning Geometry to Finite-Budget MPPI}
\label{sec:cgs_main_planner}

Unrolling \eqref{eq:linear-dynamics} gives \(z_H=A^Hz_0+\Gamma_H\mathbf a\), where
\(\Gamma_H=[A^{H-1}B,\ldots,B]\),
and the planning Hessian is
\(G_H:=\nabla_{\mathbf a}^2C=\Gamma_H^\top\Gamma_H\).
Since \(B\) is invertible, \(G_H\) has rank \(d\), and its
positive eigenvalues coincide with those of the controllability
Gramian \(W_H:=\Gamma_H\Gamma_H^\top\).
We measure conditioning by the ratio of the largest to smallest positive eigenvalues. Let \(P_H:=H^{-1}(\mathbf1_H\mathbf1_H^\top)\otimes I_{d_a}\), where \(\mathbf1_H\in\R^H\) is the all-ones vector.
This rank-\(d_a\) orthogonal projector maps onto the subspace of
action-sequence perturbations constant across time.
When \(A=I_d\) and \(B^\top B=cI_{d_a}\), \(z_H=z_0+B\sum_ta_t\) and \(G_H=cHP_H\),
yielding equal curvature along all directions in this active subspace, termed \textit{planning isotropy}.
With isotropic Gaussian proposals, population MPPI updates then contract
all active directions equally, balancing refinement across action directions.
The rightmost heatmaps of Figure~\ref{fig:pusht-cgs-intro} illustrate CGS's strong performance in \textit{planning isotropy} (Appendix~\ref{app:geometry-heatmaps}).

\begin{corollary}[From CGS loss to drift and control bounds]
\label{cor:cgs_loss_to_epsilon}
Under Assumption~\ref{ass:cgs_joint_coverage},
Proposition~\ref{prop:cgs_joint_straightening} implies the following.
If \(\mathcal L_G(c)\leq\ell\), define
\(\varepsilon=(\sqrt\ell/\rho)\max\{1,(2c)^{-1/2},c^{-1}\}\).
Then \(\|D\|_{\op}^2\leq\varepsilon\),
\(\|D^\top B\|_{\op}\leq\sqrt c\,\varepsilon\), and
\(\|B^\top B-cI_{d_a}\|_{\op}\leq c\varepsilon\).
Here \(\|\cdot\|_{\op}\) is the spectral norm. These quantify worst-direction
geometric deviations. We call these the
\(\varepsilon\)-CGS bounds. Proof is in Appendix~\ref{app:cgs_loss_to_geometry}.
\end{corollary}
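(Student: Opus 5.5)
The plan is to combine the lower bound in Proposition~\ref{prop:cgs_joint_straightening} with the elementary fact that the spectral norm is dominated by the Frobenius norm. From \(\rho^2\mathcal R(A,B)\le\mathcal L_G(c)\le\ell\) we get \(\mathcal R(A,B)\le \ell/\rho^2\). Since \(\mathcal R\) is a sum of three nonnegative terms, each term is bounded by \(\ell/\rho^2\) separately:
\[
\|D^\top D\|_F\le \tfrac{\sqrt\ell}{\rho},\qquad
\|D^\top B\|_F\le \tfrac{\sqrt\ell}{\sqrt2\,\rho},\qquad
\|B^\top B-cI_{d_a}\|_F\le \tfrac{\sqrt\ell}{\rho}.
\]

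Next we pass from Frobenius to operator norms using \(\|M\|_{\op}\le\|M\|_F\). For the drift term, \(\|D\|_{\op}^2=\|D^\top D\|_{\op}\le\sqrt\ell/\rho\), and this is at most \(\varepsilon\) because the maximum defining \(\varepsilon\) is at least \(1\). For the control term, \(\|B^\top B-cI_{d_a}\|_{\op}\le \sqrt\ell/\rho\), and we need this to be at most \(c\varepsilon\). Since \(\varepsilon\ge (\sqrt\ell/\rho)c^{-1}\), we have \(c\varepsilon\ge\sqrt\ell/\rho\), which gives the bound.

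The coupling term is the one to check most carefully. We have \(\|D^\top B\|_{\op}\le \sqrt\ell/(\sqrt2\rho)\), and we need this to be at most \(\sqrt c\,\varepsilon\). Because \(\varepsilon\ge(\sqrt\ell/\rho)(2c)^{-1/2}\), it follows that \(\sqrt c\,\varepsilon\ge \sqrt\ell/(\sqrt2\rho)\), and the bound holds.

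So each of the three entries in the maximum defining \(\varepsilon\) is designed to absorb exactly one of the three terms. The only subtlety is bookkeeping the factor \(2\) on the cross term in \(\mathcal R\) and the powers of \(c\). No step appears to be a genuine obstacle, since the whole argument reduces to termwise bounds followed by the Frobenius-to-spectral comparison.
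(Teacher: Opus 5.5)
Your proposal is correct and follows the paper's proof essentially line for line. Like the paper, you use the lower half of Proposition~\ref{prop:cgs_joint_straightening} to get $\mathcal R(A,B)\le\ell/\rho^2$ and split this into the three Frobenius bounds, including the $\sqrt2$ from the cross term. You then pass to spectral norms via $\|D\|_{\op}^2=\|D^\top D\|_{\op}\le\|D^\top D\|_F$ and $\|\cdot\|_{\op}\le\|\cdot\|_F$, and absorb each bound into the corresponding entry of the maximum defining $\varepsilon$.
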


\begin{theorem}[CGS controls the planning Hessian]
\label{thm:cgs_horizon_curvature}
Suppose the three bounds in Corollary~\ref{cor:cgs_loss_to_epsilon}
hold for \(0\leq\varepsilon<1\), there exists $\xi_H$ (defined in Eq.~\eqref{eq:cgs_horizon_distortion_app}) such that
\begin{equation}
    \left\|\frac{G_H}{cH}-P_H\right\|_{\op}
    \leq\xi_H(\varepsilon),\quad
    \xi_H(\varepsilon)
    =(2H^2-3H+2)\varepsilon+O_H(\varepsilon^{3/2}),
    \label{eq:cgs_main_horizon_hessian_bound}
\end{equation}
where \(O_H\) hides constants depending
only on the fixed horizon \(H\).
If \(\xi_H:=\xi_H(\varepsilon)<1\), then, writing \(r=d_a=d\) and
\(\sigma_1(G_H)\geq\cdots\geq\sigma_{n_H}(G_H)\), the first \(r\)
singular values (equivalently, eigenvalues) lie in
\([cH(1-\xi_H),cH(1+\xi_H)]\), all remaining singular values are zero, and
\(\kappa_{\rm eff}(G_H):=\sigma_1(G_H)/\sigma_r(G_H)
\leq(1+\xi_H)/(1-\xi_H)\). Proof is in Appendix~\ref{app:cgs_horizon_curvature_proof}.
\end{theorem}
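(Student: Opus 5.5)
The plan is to work directly with the block structure of \(G_H=\Gamma_H^\top\Gamma_H\). Index the action blocks by \(i,j\in\{0,\dots,H-1\}\) and set \(p=H-1-i\), \(q=H-1-j\). The \((i,j)\) block of \(G_H\) is \(B^\top (A^{p})^\top A^{q}B\), and the corresponding block of \(cHP_H\) is \(cI_{d_a}\). So the deviation block is
\[
E_{ij}=(B^\top B-cI_{d_a})+B^\top\big[(A^{p})^\top A^{q}-I_d\big]B .
\]
The first term has norm at most \(c\varepsilon\) by the \(\varepsilon\)-CGS bounds of Corollary~\ref{cor:cgs_loss_to_epsilon}.

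For the second term, substitute \(A=I_d+D\) and expand \(A^q=\sum_k\binom{q}{k}D^k\). The first-order part is \(B^\top(pD^\top+qD)B\). Each piece \(B^\top DB\) is bounded by \(\|B\|_{\op}\|D^\top B\|_{\op}\). Here \(\|B\|_{\op}^2\le c(1+\varepsilon)\) because \(\|B^\top B-cI\|_{\op}\le c\varepsilon\), and \(\|D^\top B\|_{\op}\le\sqrt c\,\varepsilon\). Together these give \(c\varepsilon\sqrt{1+\varepsilon}=c\varepsilon+O(\varepsilon^2)\).

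Every higher-order term contains at least two factors of \(D\). I will peel one \(D\) next to each \(B\) where possible and use \(\|D\|_{\op}\le\sqrt\varepsilon\) on the rest. A term \(B^\top (D^\top)^aD^bB\) with \(a+b\ge2\) is then bounded by \(\|D^\top B\|\,\|D\|^{a+b-1}\|B\|\lesssim c\,\varepsilon^{3/2}\) when one end is a bare \(D\), and by \(c\varepsilon^2\) when both ends can be peeled. This is exactly where the \(O_H(\varepsilon^{3/2})\) remainder comes from, with binomial coefficients depending only on \(H\). The result is the per-block bound
\[
\|E_{ij}\|_{\op}\le c\varepsilon(1+p+q)+O_H(\varepsilon^{3/2}).
\]

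Next I convert the block bounds into an operator-norm bound on \(G_H/(cH)-P_H\), whose blocks are \(E_{ij}/(cH)\). I will use the symmetric block Schur test: the operator norm is at most the maximal block-row sum \(\max_i\sum_j\|E_{ij}\|_{\op}/(cH)\). A cruder option is the sum of all block norms divided by \(cH\). Summing \(1+p+q\) over the indices gives a linear-in-\(\varepsilon\) coefficient that is polynomial in \(H\). I expect this bookkeeping to be the main obstacle: the exact constant \(2H^2-3H+2\) in \(\xi_H\) depends on which accounting the appendix uses. My approach is to take the bound that reproduces the stated \(\xi_H\) as the definition in Eq.~\eqref{eq:cgs_horizon_distortion_app}, since any such bound of the form \(\mathrm{poly}(H)\,\varepsilon+O_H(\varepsilon^{3/2})\) suffices for the rest of the argument.

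For the spectral conclusions, I first note that \(\Gamma_H\) contains the invertible block \(B\), so it has full row rank \(d\). Hence \(G_H\) has rank exactly \(d=r\), and its remaining \(n_H-r\) eigenvalues are zero. The projector \(P_H\) has eigenvalue \(1\) with multiplicity \(d_a=r\) and eigenvalue \(0\) otherwise. Weyl's inequality applied to \(G_H/(cH)=P_H+(G_H/(cH)-P_H)\) places the top \(r\) eigenvalues of \(G_H/(cH)\) in \([1-\xi_H,1+\xi_H]\). When \(\xi_H<1\) these are strictly positive, so they are exactly the nonzero eigenvalues. Because \(G_H\succeq0\), singular values and eigenvalues coincide. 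Rescaling by \(cH\) gives the stated interval, and taking the ratio gives \(\kappa_{\rm eff}(G_H)\le(1+\xi_H)/(1-\xi_H)\).
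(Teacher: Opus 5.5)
\paragraph{The remainder estimate fails.} Your peeling only works at an end where the factor next to $B$ produces $B^\top D=(D^\top B)^\top$ on the left or $D^\top B$ on the right. The hypotheses bound $\|D^\top B\|_{\op}$, not $\|DB\|_{\op}$. In $B^\top(D^\top)^aD^bB$, the left end is peelable only when $a=0$ and the right end only when $b=0$. So for $a+b\ge2$ both ends are never peelable, and for $a,b\ge1$ neither is. In particular, the cross term $pq\,B^\top D^\top DB=pq\,(DB)^\top(DB)$ is bounded only by $pq\|B\|_{\op}^2\|D\|_{\op}^2\le pq\,c(1+\varepsilon)\varepsilon$, which is first order in $\varepsilon$.

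This is sharp. Take the matrices of Remark~\ref{rem:cgs_additional_directions} ($d=2$, $d_a=1$, $c=1$, $B=e_1$, $D=\delta e_2e_1^\top$) for general $H$. All three $\varepsilon$-CGS bounds hold with $\varepsilon=\delta^2$, $D^2=0$, and $[G_H]_{ts}-1=pq\,\varepsilon$ exactly.
\begin{itemize}
\item The $t=s=0$ block equals $(H-1)^2\varepsilon$. For $H\ge4$ and small $\varepsilon$ this exceeds your $(2H-1)\varepsilon+O_H(\varepsilon^{3/2})$.
\item $\|G_H/H-P_H\|_{\op}=\tfrac{(H-1)(2H-1)}{6}\varepsilon$. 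Once $H\ge6$ this exceeds the $\tfrac{3H-1}{2}\varepsilon$ your Schur-test accounting would give.
\end{itemize}

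The paper writes $A^k=I_d+F_k$ with $F_k=kD+R_k$ and bounds $B^\top F_p^\top F_qB$ by $c(1+\varepsilon)s_H(\varepsilon)^2$. The leading part of that bound, $h^2\varepsilon$ with $h=H-1$, is exactly this cross term. The full coefficient $2H^2-3H+2$ comes from three contributions:
\begin{itemize}
\item $1+2h$ from the isotropy and linear drift--control terms;
\item $h^2$ from the cross term;
\item $h(h-1)$ from the crude bound $\|B\|_{\op}^2\|R_k\|_{\op}\le c(1+\varepsilon)r_H(\varepsilon)$ on the pure remainders.
\end{itemize}
Your peeling does legitimately sharpen the last piece, because $B^\top R_qB$ and $B^\top R_p^\top B$ each have a peelable end.

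\paragraph{Square case and the specific $\xi_H$.} In the square case of the statement, your per-block bound is actually true, but only through a fact you never use. Since $B$ is invertible with $\sigma_{\min}(B)^2\ge c(1-\varepsilon)$, you get $\|D\|_{\op}=\|(D^\top B)B^{-1}\|_{\op}\le\varepsilon/\sqrt{1-\varepsilon}$. Then every term with two or more factors of $D$ is $O_H(\varepsilon^2)$. You need to say this explicitly. Even then, it does not carry over to $d>d_a$, where the paper asserts the same Hessian bound (Remark~\ref{rem:cgs_low_dim_actions}).

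Separately, the theorem is about the specific $\xi_H$ of Eq.~\eqref{eq:cgs_horizon_distortion_app}. Declaring that you will use whatever bound reproduces it does not establish it. The paper's route is:
\begin{itemize}
\item Bound every block uniformly, using the worst case $p,q\le H-1$ with $\|A^k-I_d\|_{\op}\le s_H$ and $\|R_k\|_{\op}\le r_H$. This gives $\|[G_H]_{ts}-cI_{d_a}\|_{\op}\le c\,\xi_H(\varepsilon)$ for all $t,s$.
\item Set $E_H:=G_H-cHP_H$ and use $\|E_Hv\|_2^2\le\sum_t(c\xi_H\sum_s\|v_s\|_2)^2\le c^2H^2\xi_H^2\|v\|_2^2$.
\end{itemize}
Either follow that route, or make your remainders explicit and show they are dominated by that $\xi_H$. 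Your spectral part (rank $d$ from the invertible last block, Weyl against $P_H$, then the eigenvalue ratio) matches the paper and is fine.
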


For fixed \(H\geq2\) and the same control-isotropy tolerance, CGS directly
controls \(\|D^\top B\|_{\op}=O(\varepsilon)\) via its explicit mixed drift--control defect bound, while temporal straightening
gives the indirect \(O(\sqrt{\varepsilon})\) bound. This sharpens the Hessian
perturbation from \(O_H(\sqrt{\varepsilon})\) to \(O_H(\varepsilon)\); see
Appendix~\ref{app:cgs_horizon_proof}.

\begin{remark}[Low-dimensional actions]
\label{rem:cgs_low_dim_actions}
When \(d_a<d\), the bound on \(\|G_H/(cH)-P_H\|_{\op}\) in
Theorem~\ref{thm:cgs_horizon_curvature} remains valid on the full
action-sequence space \(\mathbb R^{n_H}\). For \(\xi_H<1\),
its condition-number bound holds on the subspace spanned by the leading
\(d_a\) eigenvectors of \(G_H\), and the subsequent planner analyses
are carried out on these directions. See
Appendix~\ref{app:cgs_low_dim_actions}.
\end{remark}

\begin{theorem}[MPPI population mean update]
\label{thm:cgs_mppi_population}
Assume \(\xi_H<1\). Let \(\tau>0\) be the MPPI temperature and
\(\sigma_{\mathrm p}>0\) the sampling standard deviation.
Let \(\mathbf a^\star\) minimize \(C\), condition on
the proposal \(\mathcal N(\mu,\sigma_{\mathrm p}^2I_{n_H})\), and set
\(\nu=\sigma_{\mathrm p}^{-1}(\mu-\mathbf a^\star)\),
\(M_H=G_H/(cH)\), and \(s=\sigma_{\mathrm p}^2cH/\tau\). Let
\(\mathcal S_H:=\operatorname{range}(G_H)\), which has dimension \(r=d\), and contraction
\(q_{\rm M}:=\|J_{\rm M}|_{\mathcal S_H}\|_{\op}\), defined as the spectral norm of the Jacobian restricted to the active subspace \(\mathcal S_H\). The population map,
Jacobian, and active contraction are
\begin{equation}
\begin{aligned}
    T_{\rm M}(\nu)&=J_{\rm M}\nu,\qquad
    J_{\rm M}=(I_{n_H}+sM_H)^{-1},\\[-2pt]
    q_{\rm M}&=\frac{1}{1+(\sigma_{\mathrm p}^2/\tau)\sigma_r(G_H)}
    \leq\frac{1}{1+s(1-\xi_H)}.
\end{aligned}
    \label{eq:cgs_main_mppi_update}
\end{equation}
Moreover,
\(\kappa(J_{\rm M}|_{\mathcal S_H})\leq[1+s(1+\xi_H)]/[1+s(1-\xi_H)]\); at
\(M_H=P_H\), all active directions contract uniformly by \((1+s)^{-1}\). Proof is in Appendix~\ref{app:cgs_mppi_population_proof}.
\end{theorem}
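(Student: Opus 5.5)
The plan is to compute the population MPPI update in closed form, using the fact that the cost is an exact quadratic in the action sequence. Write \(C(\mathbf a)=C(\mathbf a^\star)+\tfrac12(\mathbf a-\mathbf a^\star)^\top G_H(\mathbf a-\mathbf a^\star)\). This identity holds because \(z_H\) is affine in \(\mathbf a\) with Jacobian \(\Gamma_H\). Since \(B\) is invertible, \(\Gamma_H\) has full row rank, so the minimizer attains zero residual and the expansion has no linear term. The population MPPI mean is
\[
\mu^+=\frac{\E[\mathbf a\,e^{-C(\mathbf a)/\tau}]}{\E[e^{-C(\mathbf a)/\tau}]},\qquad \mathbf a\sim\mathcal N(\mu,\sigma_{\mathrm p}^2I_{n_H}).
\]
Multiplying the Gaussian density by \(\exp(-\tfrac1{2\tau}(\mathbf a-\mathbf a^\star)^\top G_H(\mathbf a-\mathbf a^\star))\) gives another Gaussian. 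Completing the square, its precision is \(\sigma_{\mathrm p}^{-2}I+\tau^{-1}G_H\) and its mean is
\[
\mu^+=(\sigma_{\mathrm p}^{-2}I+\tau^{-1}G_H)^{-1}(\sigma_{\mathrm p}^{-2}\mu+\tau^{-1}G_H\mathbf a^\star).
\]
Subtracting \(\mathbf a^\star\) gives \(\mu^+-\mathbf a^\star=(I+\sigma_{\mathrm p}^2\tau^{-1}G_H)^{-1}(\mu-\mathbf a^\star)\). Dividing by \(\sigma_{\mathrm p}\) and using \(\sigma_{\mathrm p}^2G_H/\tau=sM_H\) yields \(T_{\rm M}(\nu)=J_{\rm M}\nu\) with \(J_{\rm M}=(I+sM_H)^{-1}\). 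If \(\mathbf a^\star\) is not unique, the same computation applies with any minimizer, since \(G_H(\mathbf a-\mathbf a^\star)\) does not depend on which minimizer is chosen.

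Next I use the spectral structure. The matrix \(G_H\) is symmetric PSD, so \(J_{\rm M}\) shares its eigenvectors. On \(\mathcal S_H=\operatorname{range}(G_H)\), with eigenvalues \(\sigma_1\ge\dots\ge\sigma_r>0\), \(J_{\rm M}\) acts with eigenvalues \(1/(1+(\sigma_{\mathrm p}^2/\tau)\sigma_i)\); on the null space it is the identity. Hence \(\mathcal S_H\) is invariant, and the restricted spectral norm is attained at the smallest active eigenvalue. This gives \(q_{\rm M}=1/(1+(\sigma_{\mathrm p}^2/\tau)\sigma_r(G_H))\). Theorem~\ref{thm:cgs_horizon_curvature} gives \(\sigma_r(G_H)\ge cH(1-\xi_H)\), and substituting yields the bound \(1/(1+s(1-\xi_H))\).

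The restricted condition number is \((1+(\sigma_{\mathrm p}^2/\tau)\sigma_1)/(1+(\sigma_{\mathrm p}^2/\tau)\sigma_r)\). This ratio increases in \(\sigma_1\) and decreases in \(\sigma_r\). Inserting the bracket \([cH(1-\xi_H),cH(1+\xi_H)]\) from Theorem~\ref{thm:cgs_horizon_curvature} gives the stated bound. At \(M_H=P_H\), the active eigenvalues of \(M_H\) all equal \(1\), because \(P_H\) is an orthogonal projector of rank \(d_a=d\) and its range is \(\mathcal S_H\). Therefore \(J_{\rm M}|_{\mathcal S_H}=(1+s)^{-1}I\).

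The only delicate step is the Gaussian completing-the-square computation, including the requirement that the normalizer be finite. Finiteness holds because the weight is bounded by one. The rest is bookkeeping: identifying \(\mathcal S_H\) as invariant, and noting that the restricted norm equals the largest restricted eigenvalue, since \(J_{\rm M}\) is symmetric.
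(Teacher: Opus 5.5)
Your proposal is correct and follows essentially the same route as the paper. Both complete the square to identify the exponentially tilted proposal as a Gaussian with precision $I_{n_H}+sM_H$; you do this before whitening, the paper after. Both then push the eigenvalue bracket $[cH(1-\xi_H),cH(1+\xi_H)]$ from Theorem~\ref{thm:cgs_horizon_curvature} through $u\mapsto(1+su)^{-1}$ on the invariant subspace $\mathcal S_H$. Your use of the full row rank of $\Gamma_H$ to obtain a zero-residual minimizer is valid in this square case, whereas the paper's normal-equation argument gives the same quadratic expansion without needing the goal to be reachable.
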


\paragraph{Finite-budget planning cost.}
Let $\mu_i$ be the mean after $i$ MPPI
updates and $\Delta_i:=C(\mu_i)-C(\mathbf a^\star)$ its planning cost
gap.
Fix $\sigma_{\mathrm p}$ and $\tau$, use $K$ independent samples
per update, and impose the minimum-$K$ condition in
Eq.~\eqref{eq:app_mppi_minimum_K}.
With probability at least $1-\zeta$, the cost gap after
$I$ updates satisfies
$\sqrt{\Delta_I}\leq q_{\rm M}^{I}\sqrt{\Delta_0}
+(1-q_{\rm M}^{I})b_K/(1-q_{\rm M})$
whenever $\|\mu_i-\mathbf a^\star\|_2\leq\sigma_{\mathrm p}R_\nu$
for all $0\leq i<I$, where $R_\nu$ is defined in Eq.~\eqref{eq:app_planner_first_exit}.
Here $b_K:=\sigma_{\mathrm p}\mathcal A_{\rm M}(R_\nu)
\sqrt{cH(1+\xi_H)r\log(4rI/\zeta)/(2K)}$
bounds the per-update sampling contribution to the square-root cost gap,
and $\mathcal A_{\rm M}(R_\nu)$ is a dimension-dependent importance-weight factor.
With $q_{\rm M}<1$, more updates suppress the initial error, while more samples reduce the sampling term.
Appendix~\ref{app:cgs_mppi_cost_bounds} gives the factor's definition,
the proof, and sufficient budgets for a prescribed target error.

\begin{wrapfigure}{R}{0.50\textwidth}
  \vspace{-20pt}
  \centering
  \includegraphics[width=\linewidth]{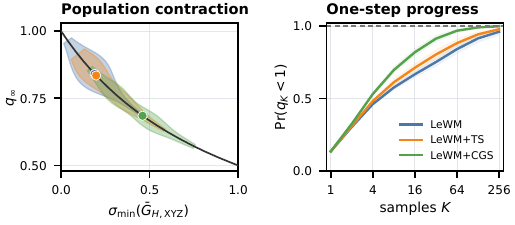}
  \caption{\textbf{MPPI contraction and finite-sample improvement.}
(a) CGS lowers population contraction $q_\infty$;
$\sigma_{\min}$ measures the weakest curvature.
(b) CGS increases $\Pr(q_K<1)$, where $q_K$ is the
$K$-sample contraction factor.
Definitions and protocol: Appendix~\ref{app:cgs_figure3_audit}.}
  \label{fig:cgs_mppi_mechanism}
\end{wrapfigure}

Theorem~\ref{thm:cgs_mppi_population} shows how CGS balances population MPPI updates
by promoting comparable curvature across active action directions.
With the cost scale calibrated, this spectral balance can support faster population contraction and a higher probability of progress under a limited sampling budget.
In trace-normalized quadratic probes of Cube's XYZ translation subspace,
CGS yields a lower population contraction factor $q_\infty$ and a higher
one-step progress probability $\Pr(q_K<1)$ at the same sample budget per update across models
(Figure~\ref{fig:cgs_mppi_mechanism}).

\begin{remark}[CEM and gradient-based planner]
\label{rem:cgs_other_planners}
Appendix~\ref{app:cgs_cem_population} shows that the same projector geometry of the planning Hessian
makes the local CEM mean update approximately scalar on its active subspace,
reducing directional imbalance in elite selection. For gradient descent planner, Appendix~\ref{app:cgs_ts_style} also gives
planning-cost convergence bounds on the leading controllable
directions across iterative updates, linking CGS Hessian geometry to first-order
action optimization.
\end{remark}
% ===== END theory_new.tex =====
% ===== BEGIN main_results_new.tex =====
\section{Main Results: Planning Performance and Sample Efficiency}
\label{sec: res}
\label{sec:main-results}
% Main table: ../main_summary.md, K=128; rounded to one decimal for display.
% Full sweep: ../full_sweep.md; figure from ../analysis_oi/figures/full_sweep/.

\subsection{Experimental Setup}
\label{sec:main-experimental-design}

\paragraph{Environments and models.}
We evaluate PushT~\citep{chi2023diffusion}, Cube~\citep{park2025ogbench},
TwoRooms~\citep{sobal2025pldm}, and Reacher~\citep{tunyasuvunakool2020dmcontrol}
using offline pixel--action data.
All LeWM-based models follow the training setup of
\citet{maes2026lewm}. We keep the base LeWM hyperparameters fixed across
methods, including a shared SIGReg weight of $0.09$, and vary only the
coefficient introduced by each additional regularizer.
For LeWM+TS~\citep{wang2026ts}, we sweep
$\lambda_{\mathrm{TS}}\in\{0.01,0.05,0.1,0.5\}$, yielding selected
weights of $0.01$ on PushT, $0.1$ on Cube and Reacher, and $0.5$ on
TwoRooms.
CGS uses $\lambda_{\mathrm{CGS}}=0.5$ by default and $0.01$ on PushT.
Each checkpoint remains fixed across planners and planning budgets.
See Appendix~\ref{app:lewm-training-details} for details.

\paragraph{Evaluation and planners.}
We report goal-reaching success rate (SR, \%) as mean $\pm$ sample
standard deviation (SD). MPPI and CEM use $I=30$ optimization steps (iterations),
with $K=1$--$256$ candidates and $K=128$ in
Table~\ref{tab:lewm-main-k128}. Gradient descent (GD) uses 100 Adam
optimization steps, independent of $K$. Prior-guided (PG) variants,
PG-MPPI and PG-CEM, initialize the proposal mean with a representation-matched
behavior-cloning prior. Evaluation details appear in
Appendix~\ref{app:experimental-details}.

\begin{table*}[!t]
  \centering
  \caption{\textbf{Goal-reaching success rate (SR) at $K=128$.}
    Values are reported as means $\pm$ sample standard deviation (SD), in \%.
    Bold marks the best per planner and environment, including ties.
    Parentheses report gains over LeWM in percentage points; the top two per
    environment are underlined. Prior-guided (PG) variants
    ($\dagger$) use action-prior initialization. Sampling planners
    use $I=30$ optimization steps; gradient descent (GD) uses 100 Adam optimization steps,
    independent of $K$.}
  \label{tab:lewm-main-k128}
  \vspace{-5pt}
  \begingroup
    \fontsize{8}{9}\selectfont
    \setlength{\tabcolsep}{3pt}
    \renewcommand{\arraystretch}{0.90}
    \setlength{\aboverulesep}{0.8pt}
    \setlength{\belowrulesep}{0.8pt}
    \setlength{\heavyrulewidth}{0.6pt}
    \setlength{\lightrulewidth}{0.35pt}
    \providecommand{\mfsr}[2]{\ensuremath{#1{\scriptscriptstyle\,\pm #2}}}
    \providecommand{\mfsrdelta}[3]{\ensuremath{#1{\scriptscriptstyle\,\pm #2}\,{\scriptscriptstyle(#3)}}}
    \begin{tabularx}{\textwidth}{@{}ll*{4}{>{\centering\arraybackslash}X}@{}}
    \toprule
    World model & Planner & PushT & Cube & TwoRooms & Reacher \\
    \midrule
    LeWM & MPPI & \mfsr{64.0}{5.3} & \mfsr{59.3}{4.6} & \mfsr{92.0}{3.5} & \mfsr{65.3}{4.2} \\
     & PG-MPPI$^{\dagger}$ & \mfsr{72.0}{6.0} & \mfsr{89.3}{4.2} & \mfsr{98.0}{2.0} & \mfsr{62.7}{11.0} \\
     & CEM & \mfsr{89.3}{6.1} & \mfsr{63.3}{2.3} & \mfsr{76.7}{10.1} & \mfsr{\mathbf{83.3}}{8.3} \\
     & PG-CEM$^{\dagger}$ & \mfsr{92.0}{5.3} & \mfsr{90.0}{2.0} & \mfsr{93.3}{3.1} & \mfsr{\mathbf{86.7}}{6.1} \\
     & GD & \mfsr{85.3}{5.0} & \mfsr{59.3}{8.1} & \mfsr{41.3}{7.0} & \mfsr{76.7}{4.2} \\
    \midrule
    \rowcolor{black!5} LeWM + TS & MPPI & \mfsrdelta{64.7}{8.3}{+0.7} & \mfsrdelta{75.3}{5.0}{\underline{+16.0}} & \mfsrdelta{\mathbf{97.3}}{3.1}{+5.3} & \mfsrdelta{64.0}{2.0}{-1.3} \\
    \rowcolor{black!5}  & PG-MPPI$^{\dagger}$ & \mfsrdelta{69.3}{9.0}{-2.7} & \mfsrdelta{92.7}{3.1}{+3.3} & \mfsrdelta{98.0}{2.0}{+0.0} & \mfsrdelta{66.7}{2.3}{+4.0} \\
    \rowcolor{black!5}  & CEM & \mfsrdelta{86.7}{5.0}{-2.7} & \mfsrdelta{73.3}{10.3}{+10.0} & \mfsrdelta{\mathbf{97.3}}{1.2}{+20.7} & \mfsrdelta{81.3}{8.1}{-2.0} \\
    \rowcolor{black!5}  & PG-CEM$^{\dagger}$ & \mfsrdelta{90.0}{5.3}{-2.0} & \mfsrdelta{90.0}{2.0}{+0.0} & \mfsrdelta{\mathbf{100.0}}{0.0}{+6.7} & \mfsrdelta{78.0}{2.0}{-8.7} \\
    \rowcolor{black!5}  & GD & \mfsrdelta{82.7}{5.8}{-2.7} & \mfsrdelta{65.3}{4.6}{+6.0} & \mfsrdelta{65.3}{9.9}{\underline{+24.0}} & \mfsrdelta{81.3}{5.0}{+4.7} \\
    \midrule
    \rowcolor{black!5} LeWM + CGS & MPPI & \mfsrdelta{\mathbf{77.3}}{6.4}{\underline{+13.3}} & \mfsrdelta{\mathbf{79.3}}{4.6}{\underline{+20.0}} & \mfsrdelta{\mathbf{97.3}}{1.2}{+5.3} & \mfsrdelta{\mathbf{66.7}}{2.3}{+1.3} \\
    \rowcolor{black!5}  & PG-MPPI$^{\dagger}$ & \mfsrdelta{\mathbf{80.0}}{2.0}{\underline{+8.0}} & \mfsrdelta{\mathbf{96.7}}{3.1}{+7.3} & \mfsrdelta{\mathbf{100.0}}{0.0}{+2.0} & \mfsrdelta{\mathbf{69.3}}{3.1}{\underline{+6.7}} \\
    \rowcolor{black!5}  & CEM & \mfsrdelta{\mathbf{92.0}}{3.5}{+2.7} & \mfsrdelta{\mathbf{76.7}}{9.5}{+13.3} & \mfsrdelta{95.3}{1.2}{+18.7} & \mfsrdelta{80.7}{3.1}{-2.7} \\
    \rowcolor{black!5}  & PG-CEM$^{\dagger}$ & \mfsrdelta{\mathbf{96.0}}{0.0}{+4.0} & \mfsrdelta{\mathbf{96.0}}{4.0}{+6.0} & \mfsrdelta{\mathbf{100.0}}{0.0}{+6.7} & \mfsrdelta{81.3}{3.1}{-5.3} \\
    \rowcolor{black!5}  & GD & \mfsrdelta{\mathbf{89.3}}{6.4}{+4.0} & \mfsrdelta{\mathbf{71.3}}{4.6}{+12.0} & \mfsrdelta{\mathbf{70.7}}{3.1}{\underline{+29.3}} & \mfsrdelta{\mathbf{82.0}}{5.3}{\underline{+5.3}} \\
    \bottomrule
    \end{tabularx}
  \endgroup
  \par\vspace{2pt}
  \begin{minipage}{\textwidth}
    \centering
    \includegraphics[width=\linewidth]{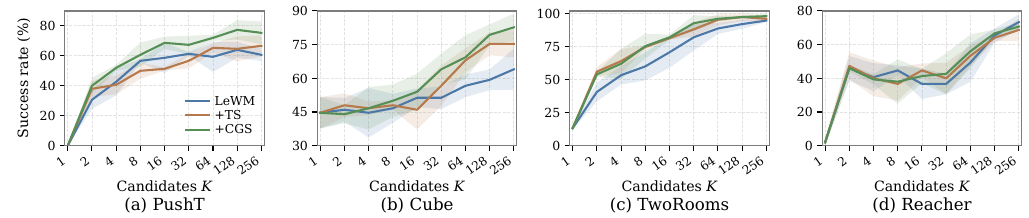}
    \makeatletter
    \def\@captype{figure}
    \makeatother
    \caption{\textbf{MPPI success rate versus sampling budget.}
      Means and $\pm$ one sample SD, with $I=30$ optimization steps and no
      action prior. CGS leads on PushT and at larger budgets on Cube;
      TwoRooms approaches saturation. Reacher shows smaller, budget-dependent differences. Environment-specific temperatures are fixed across budgets;
      See Appendix~\ref{app:lewm-full-sweep} for results of other planners.}
    \label{fig:lewm-full-sweep-mppi}
    \vspace*{-15pt}
  \end{minipage}
\end{table*}

\subsection{Toward Efficient Planning with Action-Structured Latent Dynamics}
\label{sec:main-results-analysis}

Under a shared evaluation protocol, CGS consistently improves over LeWM for all
five planners on PushT, Cube, and TwoRooms at $K=128$
(Table~\ref{tab:lewm-main-k128}), across distinct planning mechanisms. On
PushT and Cube, its MPPI gains are $13.3$ and $20.0$ percentage points over
LeWM, respectively. Combining CGS with a learned action prior further improves
planning, achieving near-perfect success in several settings. TS also provides
a strong baseline, with substantial gains on Cube and TwoRooms and improvements
in several other settings. Figure~\ref{fig:lewm-ts-cosine-effect} shows that
explicit TS further straightens latent trajectories across all four environments,
beyond the implicit straightening reported by
\citet[Appendix H]{maes2026lewm}. On PushT, where this implicit effect is
already strong, TS yields only a subtle additional straightening and
correspondingly modest planning changes.

The budget sweep in Figure~\ref{fig:lewm-full-sweep-mppi} demonstrates
\emph{sampling-efficient planning} under a limited planning budget.
On PushT, CGS improves MPPI even with few samples and retains its advantage
over both baselines as $K$ increases. On Cube, the gap widens with more samples,
reaching around $80\%$ success with CGS versus $60\%$ with LeWM at
$K=128$--$256$, suggesting better action search at a fixed candidate budget.
TwoRooms is easy for MPPI, plausibly because its limited data diversity
and low intrinsic dimensionality simplify action search, yet CGS and TS
outperform LeWM at limited budgets, with CGS outperforming both at $K=32$.
Across these environments, CGS improves both attainable success and the rate at
which planning budgets translate into successful trajectories.

Reacher exhibits a planner-dependent pattern.
At $K=128$, CGS improves GD from $76.7\%$ to $82.0\%$ and MPPI
from $65.3\%$ to $66.7\%$, while CEM remains close to LeWM but slightly
lower ($80.7\%$ versus $83.3\%$).
TS shows a similar pattern, with a clear GD gain and smaller changes
under the sampling-based planners.
Across the sample-budget sweep, CGS gives a modest MPPI
improvement.
Section~\ref{sec:probing} shows that Reacher fingertip motion depends strongly
on the current state beyond the executed action. Appendix~\ref{app:reacher-ref-history}
compares action-only CGS with state-informed transition references that
implicitly capture state and action information. Appendix~\ref{app:cgs_prediction_constraint}
then analyzes how irreducible state-dependent drift can limit straightening and
reshape the control map favored by CGS.
\vspace{-3pt}

\section{Characterizing CGS: Latent Representations and Planning}
\label{sec:analysis}
\vspace{-2pt}

\paragraph{Probing the physical effects of control.}
\label{sec:probing}

With frozen encoders, we probe how latent differences $\Delta z_t$ represent
physical motion on PushT and Reacher. We first ask whether the evaluated
displacement is determined by action alone or requires the current state,
and then measure how explicitly that motion is represented in $\Delta z_t$.
Let $a_t$ denote the action, $s_t$ the physical state, and $\Delta s_t$
the evaluated displacement. Table~\ref{tab:probing-main} reports six
held-out scores, with disjoint training, validation, and test episodes.

\begin{table}[!t]
  \centering
  \begingroup
  \caption{\textbf{Physical-motion probing on PushT and Reacher.}
  $P_A$ and $P_{AS}$ measure how well physical motion is identified from
  action alone or from action and state; these environment-level scores are
  shared across representations. $D_{\rm lin}$ and $D_{\rm MLP}$ decode true
  motion from $\Delta z_t$, while $C_A$ and $B_{AS}$ measure agreement with
  action-only and state-conditioned control effects. Scores are held-out
  $R^2$, except the normalized agreement score $B_{AS}$
  (Appendix~\ref{app:probing}). Higher is better; bold marks the best
  representation per metric using unrounded scores.}
  \label{tab:probing-main}
  \vspace{-5pt}
  % Fixed type size avoids enlarging the six-row table to fill the page.
  \fontsize{8}{9}\selectfont
  \setlength{\tabcolsep}{3pt}
  \renewcommand{\arraystretch}{0.90}
  \setlength{\aboverulesep}{0.8pt}
  \setlength{\belowrulesep}{0.8pt}
  \setlength{\heavyrulewidth}{0.6pt}
  \setlength{\lightrulewidth}{0.35pt}
  \begin{tabular}{@{}llccrrrr@{}}
    \toprule
    \multicolumn{2}{c}{}
      & \multicolumn{2}{c}{Physical identification}
      & \multicolumn{2}{c}{Motion decoding}
      & \multicolumn{2}{c}{Control effects} \\
    \cmidrule(lr){3-4}\cmidrule(lr){5-6}\cmidrule(l){7-8}
    Environment / effect & Representation
      & $P_A$ & $P_{AS}$
      & $D_{\rm lin}$ & $D_{\rm MLP}$ & $C_A$ & $B_{AS}$ \\
    \midrule
    \multirow{3}{*}{PushT / agent}
      & LeWM     & \multirow{3}{*}{0.999} & \multirow{3}{*}{0.999}
      & 0.809 & 0.888 & 0.807 & 0.808 \\
      & LeWM+TS  & & & 0.831 & 0.891 & 0.829 & 0.830 \\
      & LeWM+CGS & & & \textbf{0.903} & \textbf{0.959}
      & \textbf{0.901} & \textbf{0.902} \\
    \midrule
    \multirow{3}{*}{Reacher / fingertip}
      & LeWM     & \multirow{3}{*}{-0.006} & \multirow{3}{*}{0.983}
      & 0.951 & \textbf{0.994} & 0.155 & 0.932 \\
      & LeWM+TS  & & & \textbf{0.974} & 0.994
      & \textbf{0.157} & \textbf{0.959} \\
      & LeWM+CGS & & & 0.899 & 0.993 & 0.041 & 0.884 \\
    \bottomrule
  \end{tabular}
  \vspace{-0.4\baselineskip}
  \endgroup
  % Keep the probe table and cross-backbone figure together, in that order.
  \par\vspace{5pt}
  \begin{minipage}{\textwidth}
\centering
  \includegraphics[width=\textwidth,trim=0 16bp 0 0,clip]{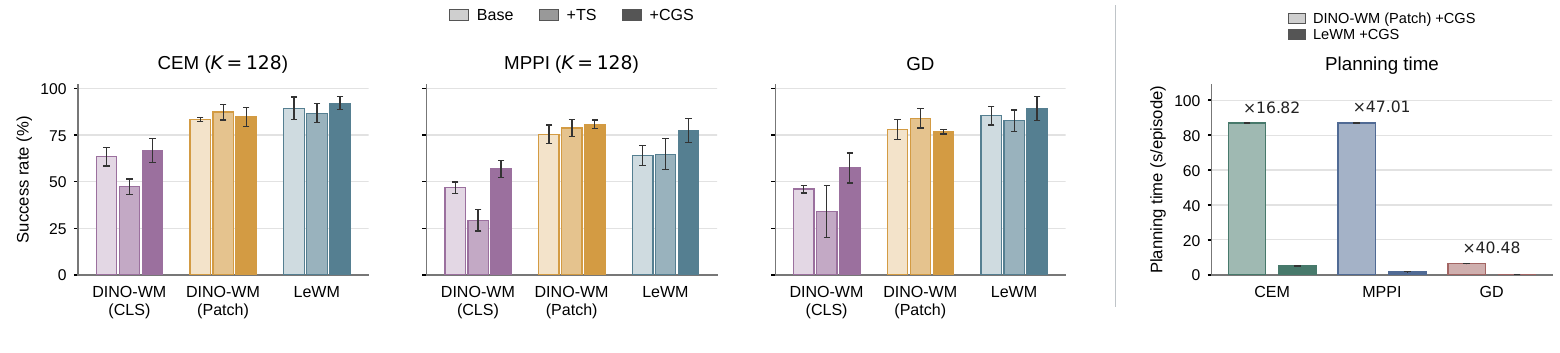}
  \makeatletter
  \def\@captype{figure}
  \makeatother
  % Keep this dense composite figure close to its caption.
  \captionsetup{skip=2.5pt}
  \caption{\textbf{Planning across representation architectures on PushT.}
Left: planning success with DINO-WM (with proprioceptive information) global CLS features, spatial Patch
features, and the end-to-end LeWM latent under CEM, MPPI, and GD,
with Base (w/o TS or CGS), TS, and CGS representation shaping. Right: planner-solve time
for DINO-WM (Patch)+CGS versus LeWM+CGS; annotated ratios are
  DINO-WM/LeWM. Full experimental details are provided in Appendix~C.5.}
  \label{fig:pusht-dino-lewm}
  \end{minipage}
\end{table}

\emph{Physical identification ($P_A,P_{AS}$).}
We predict $\Delta s_t$ from $a_t$ and $(a_t,s_t)$ using MLPs.
On PushT, action alone nearly determines agent displacement
($P_A=P_{AS}=0.999$). On Reacher, adding state raises $R^2$
from $-0.006$ to $0.983$, showing that fingertip motion depends strongly
on the current configuration and is only weakly specified by action alone
for the observed one-step motion.

\emph{Motion decoding ($D_{\rm lin},D_{\rm MLP}$).}
We decode the true displacement $\Delta s_t$ from $\Delta z_t$ with
a ridge regressor or MLP. On PushT, CGS raises linear $R^2$ from
$0.809$ to $0.903$ and nonlinear $R^2$ from $0.888$ to $0.959$.
On Reacher, all three representations retain nearly all motion information
nonlinearly, while linear decoding varies more across the three representations:
TS reaches $0.974$ and CGS $0.899$.

\emph{Control effects ($C_A$, $B_{AS}$).}
We ask whether $\Delta z_t$ reflects what the action does physically.
$C_A$ tests whether a linear readout from $\Delta z_t$ recovers
the displacement predicted from action alone; $B_{AS}$ compares its decoded
displacement with that predicted from action and current state. CGS is
strongest on PushT ($C_A=0.901$, $B_{AS}=0.902$). TS leads both on Reacher,
where the same action can produce different fingertip motion across
configurations. See Appendix~\ref{app:probing} for details.

\paragraph{CGS across representation architectures.}
\label{sec:dino-analysis}

Figure~\ref{fig:pusht-dino-lewm} follows the global-versus-spatial
representation settings studied by DINO-WM and TS~\citep{zhou2025dinowm,wang2026ts}
to examine how CGS interacts with different latent architectures. Models with DINO-WM architecture are trained following \cite{wang2026ts}, with proprioceptive information. CGS improves
planning with both DINO CLS and spatial Patch features, with notably larger
gains for the compact CLS representation on PushT. This comparison also
motivates our use of LeWM in the main experiments: frozen DINO CLS features
are compact but weaker for dynamics-based planning, whereas spatial Patch
features improve planning at substantially higher rollout cost. LeWM instead
learns a compact global latent jointly with predictive dynamics, retaining
strong planning performance while making repeated planner rollouts much
cheaper. Importantly, applying TS to the same LeWM backbone remains stronger
than the global DINO-WM+TS configuration and competitive with Patch+TS under
CEM and GD, establishing LeWM+TS as a strong baseline. CGS then benefits from this efficient learned latent while enjoying significantly faster planning.

\paragraph{Refinement efficiency and BC priors.}
\label{sec:iteration-analysis}
\label{sec:prior-analysis}

\begin{figure*}[t]
  \centering
  \includegraphics[width=\textwidth]{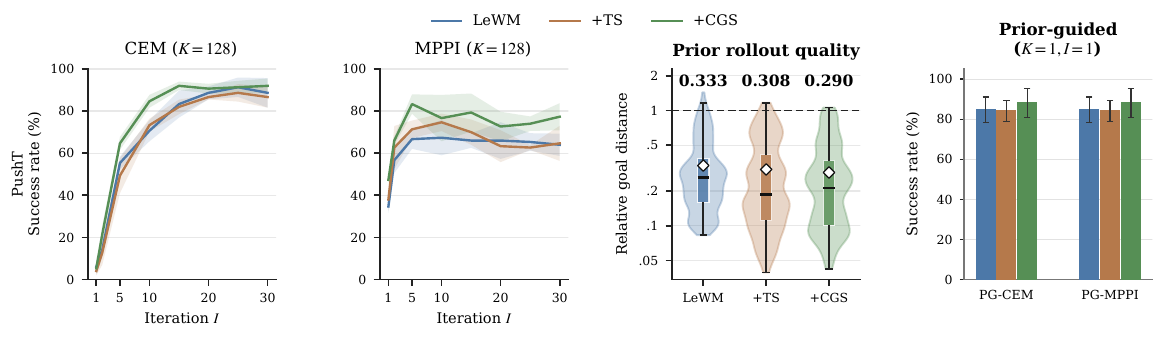}
  \captionsetup{skip=5.5pt}
\caption{\textbf{CGS improves early refinement and BC-prior quality on PushT.}
Left: CEM/MPPI success over optimization steps $I$ (iterations) at $K=128$
without an action prior. Middle: BC-prior/no-op latent goal-distance ratios for the
first plan (lower is better). Right: prior-only success using the prior mean
  as the sole candidate without additional candidate sampling ($K=1$, $I=1$).}
  \label{fig:pusht-iteration-prior}
\end{figure*}

Each CEM/MPPI optimization step refines the sampling proposal to favor lower-cost action sequences. At a fixed
candidate budget, CGS reaches high PushT success in fewer optimization
steps (Figure~\ref{fig:pusht-iteration-prior}), showing gains early in this
refinement process.
CGS also improves the matched BC prior: its rollouts achieve lower latent
goal-distance ratios and higher prior-only success before search begins.
Thus, CGS benefits both iterative refinement and prior-guided initialization.
Results for other environments appear in
Appendix~\ref{app:other-iteration-prior}.
% ===== END analysis_new.tex =====

\section{Conclusion}
\label{sec:conclusion}

We introduced CGS, a simple auxiliary loss that learns planner-friendly
representations by straightening control geometry.
Our linear-dynamics analysis connects this geometry to more balanced
terminal-cost curvature along controllable directions and more efficient
finite-budget optimization.
Experiments across multiple tasks and planners demonstrate gains in
planning success and refinement efficiency using limited planning budget.
Together, these results show that local geometric regularization can shape
the full-horizon optimization landscape without modifying the planner at test time.
More broadly, our findings highlight the importance of how predictive
representations organize action effects for planning.
Extending CGS to state-dependent control effects and richer dynamical
context using only pixel--action data is a natural next step.
Its focus on the geometry used by planners also motivates studying
how to maintain planner-friendly representations under distribution
shift, potentially in combination with test-time
adaptation~\citep{wang2026adajepa}.

% Flush main context floats before the statements and bibliography.
\clearpage

\bibliography{references}
\bibliographystyle{references}

\clearpage
\appendix
% Include appendix sections and subsections only.
\addtocontents{toc}{\protect\setcounter{tocdepth}{2}}
\begingroup
\renewcommand{\contentsname}{Appendix Contents}
\setcounter{tocdepth}{2}
\tableofcontents
\endgroup
\clearpage
% ===== BEGIN appendix_new.tex =====
\section{Extended Related Work}
\label{app:related-work}

\subsection{Latent world models and JEPA}

Latent world models compress visual observations into representations that
support prediction, planning, and imagination. Reconstruction-based models
combine learned dynamics with pixel decoders, while task-oriented models
also learn rewards, values, or policies
\citep{ha2018worldmodels,hafner2019planet,hafner2020dreamer,hansen2022tdmpc,
hansen2024tdmpc2,hafner2025dreamerv3}. JEPAs learn through prediction in
embedding space, removing the need to reconstruct every visual detail
\citep{lecun2022path,assran2023ijepa,bardes2024vjepa}.
DINO-WM uses frozen pretrained image features, and V-JEPA~2-AC adapts
video representations for action-conditioned prediction
\citep{oquabdinov22024,zhou2025dinowm,assran2025vjepa2}.
PLDM and LeWM instead learn visual representations and dynamics jointly
from offline control trajectories~\citep{sobal2025pldm,maes2026lewm}.

\noindent\textbf{Stable prediction and representation collapse.}
A constant encoder can trivially satisfy an embedding prediction loss,
making collapse prevention a central design concern. Target encoders and
stop-gradient operations stabilize predictive targets; variance--covariance
regularization maintains feature diversity; and Gaussian distribution
matching provides an explicit distributional constraint
\citep{assran2023ijepa,bardes2024vjepa,bardes2022vicreg,
balestriero2025lejepa,maes2026lewm}.
Further architectures investigate object-level latent masking and sparse
codes, while AdaJEPA adapts a trained world model at test time
\citep{nam2026causaljepa,kuang2026lpwm,wang2026adajepa}.
These choices address representation content, training stability, and
adaptation. The geometry exposed to downstream action optimization remains
an additional concern: accurate transition prediction and diverse features
alone do not ensure that the planning objective is easy to optimize.
Empirical studies of JEPA planning and reachability-aware training examine
this gap directly~\citep{terver2026drivers,li2026predictive}.
CGS addresses this concern through an auxiliary geometry objective
applicable to end-to-end learned and pretrained representations.

\subsection{Planner-friendly representations}

\noindent\textbf{Latent dynamics and optimization.}
Embed-to-Control, Robust Controllable Embedding, SOLAR, and predictive-coding
approaches learn representations suited to locally linear control
\citep{watter2015embed,banijamali2018robust,zhang2019solar,
levine2020pcc,shu2020predictive}.
Here, \emph{locally linear latent dynamics} means that transitions in a
neighborhood or around a nominal trajectory admit a linear model whose
coefficients may vary across states or time. This structure supports local
trajectory optimization, including iLQR.
Universal Planning Networks backpropagate through an unrolled planner;
CARL couples representations to actor--critic updates and value-weighted
model fitting~\citep{srinivas2018upn,cui2020control}.
DeepMDP and bisimulation-based representations preserve reward and
transition structure, while softly state-invariant models simplify action
effects across states
\citep{gelada2019deepmdp,zhang2021invariant,saanum2024simplifying}.
Other approaches learn feasible latent paths or distances reflecting
directed reachability
\citep{kurutach2018learning,yang2020plan2vec,eysenbach2024inference,
wang2023quasimetric}.

\noindent\textbf{Goal-derived and inverse-control supervision.}
Value-Guided JEPA learns goal-conditioned values; RC-aux adds multi-horizon
prediction and budget-conditioned reachability labels; and
Temporal-Distance-JEPA encodes directed temporal progress
\citep{destrade2026value,li2026predictive,bai2026temporal}.
INTACT uses physical transition intent and future-goal intent to learn an
action distribution that can be deployed directly~\citep{sun2026intact}.
Future observations and temporal offsets in offline trajectories supply
pseudo-goals or reachability targets for these objectives; these are
constructed training signals rather than externally annotated goals.
SCALE uses task-relevant state differences to calibrate latent distance
\citep{hu2026scale}. Sensorimotor World Models and Delta-JEPA use inverse
dynamics to retain action information in endpoint embeddings or latent
displacements~\citep{ivashkov2026sensorimotor,zhang2026delta}.
Related work studies control-irrelevant visual invariance and the action
excitation needed to identify controlled dynamics
\citep{toso2026invariant,zhang2026identifiability}.
The CGS world-model objective uses local transitions from pixel--action
pairs to match action and latent-difference cosines, without future-goal
targets or an inverse action decoder.

\noindent\textbf{Temporal Straightening.}
Temporal Straightening aligns consecutive latent displacements by
penalizing their angular deviation, reducing local trajectory curvature
\citep{wang2026ts}. Straighter latent paths can improve planning-objective
conditioning while preserving a nonlinear predictor.
The regularizer operates directly on consecutive observations, without
pseudo-goal selection, value targets, reachability labels, or contrastive
negative mining.
TS constrains successive displacements along observed paths. CGS uses
the corresponding actions as references for pairwise angular relations
among latent differences. When consecutive actions align, the adjacent
CGS term reduces to the squared angular TS penalty; changing action
directions provides references for latent turns.
Under the assumptions of our linear-dynamics analysis, CGS promotes
temporal straightening, balanced action effects, and reduced drift--control
coupling. These properties connect local transition geometry to more
balanced terminal-cost curvature across the full planning horizon.
Appendix~\ref{app:planning} defines the TS objective, and
Appendix~\ref{app:dino-analysis} describes its encoder instantiations and
our DINO-WM comparisons.

\subsection{Sampling-based MPC}

Sampling-based MPC evaluates candidate action sequences through model rollouts
and repeatedly updates a proposal distribution. CEM fits a distribution to
low-cost elites, while MPPI uses cost-dependent weights over sampled sequences
\citep{rubinstein1999crossentropy,deboer2005cem,kroese2006continuous,
kappen2005path,theodorou2010path,williams2017mppi}.
These planners have been combined with probabilistic ensembles, stochastic
latent dynamics, and predictive feature-space models
\citep{chua2018pets,hafner2019planet,zhou2025dinowm,sobal2025pldm,
maes2026lewm}.

\noindent\textbf{Proposals and sample reuse.}
iCEM combines temporally correlated sampling with elite memory and sample
reuse; hybrid methods interleave CEM and gradient updates
\citep{pinneri2021icem,bharadhwaj2020mpc}.
Biased-MPPI incorporates ancillary controllers, CoVO-MPC adapts covariance
to local cost curvature, and PRISM learns a state--goal-conditioned proposal
prior~\citep{trevisan2024biased,yi2024covo,wang2026prism}.
These approaches use temporal structure, local cost information, or learned
behavior to concentrate evaluations on promising action sequences.
CGS complements proposal design by shaping the control geometry of the
world model during training.

\noindent\textbf{Finite-budget optimization.}
Classical CEM analyses study distribution updates and convergence
\citep{margolin2005cem,costa2007cem}; recent MPPI analyses connect updates
to preconditioned gradient descent and examine closed-loop stability
\citep{fazlyab2026mppi,yoon2026finite}.
Importance-sampling results clarify how mismatch between proposals and
high-weight regions affects the sample budget needed for reliable estimates
\citep{agapiou2017is,chatterjee2018importance}.
In MPC, the horizon, effective action dimension, cost landscape, and
proposal design jointly determine how informative a finite candidate set
will be~\citep{yoon2022sampling}.
Our linear-dynamics analysis connects local representation learning to
finite-horizon terminal-cost curvature, yielding finite-budget guarantees
for MPPI, local contraction results for CEM, and convergence bounds for
gradient descent.

\subsection{Relational and Similarity-Preserving Knowledge Distillation}

Relational Knowledge Distillation transfers inter-example distances and
angles, while similarity-preserving knowledge distillation preserves
pairwise similarities across teacher and student representations
\citep{park2019relational,tung2019similarity}.
They are relevant to CGS through the general idea of matching structure
across different representation spaces.
CGS uses a related pairwise matching form, but observed actions provide
the reference geometry rather than a teacher representation. Specifically,
it matches pairwise action cosines to those among corresponding latent
differences. CGS therefore uses structural matching to
organize control geometry for multi-step planning; our analysis connects
that geometry to more balanced terminal-cost curvature and
sampling-efficient optimization.
\clearpage

\section{Referenced Methods and Planning Algorithms}
\label{app:planning}
\suppressfloats[t]

This section defines the planning algorithms and the TS objective using the
notation of Section~\ref{sec:cgs}. Experimental settings and the DINO-WM
representation comparison are collected in Appendix~\ref{app:additional-planning-results}.

\subsection{Planning algorithms}
\label{app:planning-optimizers}

We write the planner updates using $C(\ba)$ from Eq.~\eqref{eq:cgs-planning-cost}.
In implementation, the cost and MPPI temperature use the matched
normalizations specified in Appendix~\ref{app:planning-datasets};
DINO-WM-specific reductions and proprioceptive terms are specified in
Appendix~\ref{app:dino-training-details}.
Below, $K$ is the number of evaluated candidates per optimization step, $I$ the
number of optimization steps (iterations), and $n_H=H \times d_a$ the flattened action-sequence dimension.
The Gaussian proposal has mean $\mu_i$ and coordinatewise standard deviation
$\sigma_{\mathrm p,i}$. Sequences are flattened for Gaussian updates and
reshaped for model rollout. Following the released LeWM planners, the candidate
set contains the current mean and $K-1$ Gaussian draws~\citep{maes2026lewm}.

\paragraph{Cross-Entropy Method (CEM).}
CEM refits a diagonal Gaussian to the lowest-cost elite
set $\cE_i$~\citep{rubinstein1999crossentropy,deboer2005cem}.
For $M=|\cE_i|\geq2$, the implementation uses the elite mean and sample variance,
\begin{equation}
  \mu_{i+1}=\frac1M\sum_{k\in\cE_i}\ba^{(k)},
  \qquad
  \sigma_{\mathrm p,i+1}^2=\frac1{M-1}
    \sum_{k\in\cE_i}(\ba^{(k)}-\mu_{i+1})^2.
  \label{eq:appendix-cem}
\end{equation}
The square is coordinatewise. CEM starts from zero mean and unit standard
deviation and returns the final mean. For $M=1$, CEM updates the
mean and retains the previous standard deviation,
$\sigma_{\mathrm p,i+1}=\sigma_{\mathrm p,i}$; this keeps the proposal
finite and allows continued sampling. Algorithm~\ref{alg:cem} includes this case.

\begin{algorithm}[!htbp]
  \caption{CEM and prior-guided CEM}
  \label{alg:cem}
  \begin{algorithmic}[1]
    \Require Goal cost $C$, candidates $K$, optimization steps $I$,
      elites $1\leq M\leq K$, optional prior mean $\mu_p$
    \State $\mu\gets\mu_p$ for PG-CEM or $0$ for CEM;
      $\sigma_{\mathrm p}\gets\mathbf1$
    \For{$i=1,\ldots,I$}
      \State $\ba^{(1)}\gets\mu$
      \For{$k=2,\ldots,K$}
        \State $\epsilon^{(k)}\sim\cN(0,\mathrm I_{n_H})$;
          $\ba^{(k)}\gets\mu+\sigma_{\mathrm p}\odot\epsilon^{(k)}$
      \EndFor
      \State $c_k\gets C(\ba^{(k)})$ for $k=1,\ldots,K$
      \State $\cE\gets$ indices of the $M$ smallest costs
      \State $\mu\gets M^{-1}\sum_{k\in\cE}\ba^{(k)}$
      \If{$M>1$}
        \State $\sigma_{\mathrm p}^2\gets(M-1)^{-1}\sum_{k\in\cE}(\ba^{(k)}-\mu)^2$
      \EndIf
    \EndFor
    \State \Return $\mu$
  \end{algorithmic}
\end{algorithm}

\paragraph{Model Predictive Path Integral (MPPI) control.}
We use the MPPI cost-weighted update~\citep{kappen2005path,theodorou2010path,williams2017mppi},
\begin{equation}
  w_k=\frac{\exp[-C(\ba^{(k)})/\tau]}
    {\sum_{j=1}^K\exp[-C(\ba^{(j)})/\tau]},
  \qquad
  \mu_{i+1}=\sum_{k=1}^K w_k\ba^{(k)}.
  \label{eq:appendix-mppi}
\end{equation}
All candidates contribute to the mean update, and the sampling standard
deviation stays fixed at one. Smaller $\tau$ concentrates weight on
lower-cost candidates. We share $\tau$ across training objectives and
budgets within each task and backbone family; Appendix~\ref{app:pusht-temperature-tuning}
reports the temperature analysis.

\paragraph{Prior-guided CEM and MPPI (PG-CEM and PG-MPPI).}
A goal-conditioned behavior-cloning head $\pi_\psi$, trained for each frozen
representation, predicts an action-sequence mean,
\begin{equation}
  \mu_p=\pi_\psi(z_t,z_g)\in\R^{H\times d_a}.
  \label{eq:appendix-pg-mean}
\end{equation}
The PG variants initialize $\mu_0=\mu_p$ and retain the corresponding
planner's unit initial standard deviation and refinement rule. This tests
how a representation supports both a learned proposal mean and subsequent
model-based optimization. An accurate prior mean alone does not ensure a
well-suited action-sequence distribution; CEM and MPPI continue to refine
this distribution through model-based sampling, with CEM updating its
mean and variance and MPPI updating its mean at fixed variance.
Prior training is specified in Appendix~\ref{app:pg-training-details}.

\begin{algorithm}[!htbp]
  \caption{MPPI and prior-guided MPPI}
  \label{alg:mppi-family}
  \begin{algorithmic}[1]
    \Require $C,K,I,\tau$; optional prior mean $\mu_p$
    \State $\mu\gets\mu_p$ for PG-MPPI or $0$ for MPPI; $\sigma_{\mathrm p}\gets\mathbf1$
    \For{$i=1,\ldots,I$}
      \State $\ba^{(1)}\gets\mu$
      \For{$k=2,\ldots,K$}
        \State $\epsilon^{(k)}\sim\cN(0,\mathrm I_{n_H})$;
          $\ba^{(k)}\gets\mu+\sigma_{\mathrm p}\odot\epsilon^{(k)}$
      \EndFor
      \State $c_k\gets C(\ba^{(k)})$ for $k=1,\ldots,K$
      \State $c_{\min}\gets\min_k c_k$;
        $\widetilde w_k\gets\exp[-(c_k-c_{\min})/\tau]$
      \State $w_k\gets\widetilde w_k/\sum_j\widetilde w_j$
      \State $\mu\gets\sum_{k=1}^K w_k\ba^{(k)}$
    \EndFor
    \State \Return $\mu$
  \end{algorithmic}
\end{algorithm}

\paragraph{Gradient-based planner (GD).}
Following the TS planning implementation~\citep{wang2026ts}, GD directly
optimizes the action sequence through the frozen differentiable world model.
Starting from an initial action sequence $\ba_0$, the rollout cost is minimized
by iterative gradient-based updates,
\begin{equation}
  \ba_{i+1}
  =
  \ba_i-\eta_i\nabla_{\ba} C(\ba_i),
  \label{eq:appendix-gd-mpc}
\end{equation}
where $\eta_i$ denotes the optimization step size. In practice, we use Adam
with cosine learning-rate decay, initialize from the standardized zero-control
sequence, and inject no action noise. GD optimizes a single action trajectory and is independent with sampling budget $K$. Experimental settings and the per-solve cost construction
are given in Appendix~C.1.1.

\subsection{Temporal Straightening}
\label{app:temporal-straightening}

Temporal Straightening (TS) regularizes the curvature of latent trajectories
by maximizing the cosine similarity between consecutive latent
displacements~\citep{wang2026ts}. Using the main context notation for global visual
latents, define
\begin{equation}
  \Delta z_t=z_{t+1}-z_t,\qquad
  v_t=\frac{\Delta z_t}{\|\Delta z_t\|_2}.
  \label{eq:appendix-ts-velocity}
\end{equation}
In implementation, we exclude adjacent pairs for which either latent
displacement has norm below $10^{-6}$. Let $\mathcal T$ denote the
remaining valid pairs. TS uses the average cosine-based curvature penalty,
\begin{equation}
  \cL_{\rm TS}=\frac1{|\mathcal T|}\sum_{t\in\mathcal T}
    (1-v_t^\top v_{t+1}),\qquad
  \cL_{\rm LeWM+TS}=\cL_{\rm LeWM}+\lambda_{\rm TS}\cL_{\rm TS},
  \label{eq:appendix-ts-objective}
\end{equation}
with $\lambda_{\rm TS} \geq 0$. Our LeWM adaptation retains the prediction and SIGReg terms in
Eq.~\eqref{eq:cgs-lewm-objective}, with gradients through both branches of
the prediction loss~\citep{maes2026lewm}. The original TS implementation
uses stop-gradient prediction targets~\citep{wang2026ts}; its DINO-based
instantiations and our CLS/Patch implementations are described in
Appendix~\ref{app:dino-training-details}.
Although LeWM exhibits implicit temporal straightening on
PushT~\citep[Appendix H]{maes2026lewm}, explicit TS can further reduce
latent trajectory curvature and improve planning, with clearer planning
gains on Cube and TwoRooms; see Appendix~\ref{app:lewm-ts-cosine-effect}.

\clearpage
\section{Experiment Details and Additional Experimental Results}
\label{app:additional-planning-results}
\label{app:analysis-details}

\begingroup
\raggedbottom
\renewcommand{\arraystretch}{0.96}
\setlength{\aboverulesep}{1pt}
\setlength{\belowrulesep}{1pt}

\subsection{Main-experiment implementation and evaluation}
\label{app:main-experiment-details}

\subsubsection{Experiment details}
\label{app:experimental-details}

\paragraph{Environments and training.}
\label{app:lewm-training-details}
We evaluate LeWM on PushT, Cube, TwoRooms, and Reacher using offline
pixel--action trajectories, $224\times224$ images, with $5$ frame skips, and the original
benchmark success predicates.
The LeWM models and the TS and CGS variants follow the same LeWM \citep{maes2026lewm} training setup and retain its backbones and architecture of ViT-Tiny/14 encoder, 192-dimensional projected CLS latent, and
action-conditioned transformer predictor.
Training uses observation encodings as predictor inputs and computes all
$N$ next-step predictions in one temporally masked forward pass using
Eq.~\eqref{eq:causal-masking}.
One model per environment and method is fixed across all planners
and planning budgets.
For CGS, action-reference or latent-difference vectors with $\ell_2$
norm below $10^{-6}$ are excluded from cosine normalization, and
off-diagonal pairs involving such vectors are omitted from the loss average.
On Cube, the CGS term uses the XYZ
displacement components of the action as its geometry reference when the grasp
command is active, while the full action is retained in the LeWM prediction
objective; see Appendix~\ref{app:cube-ref-ablation} for more discussions.

\paragraph{Regularization coefficients.}
\label{app:regularization-coefficients}
We keep the base LeWM hyperparameters fixed across all LeWM-based
methods, including $\lambda_{\rm SIGReg}=0.09$, and vary only the
coefficient introduced by each additional regularizer.
For LeWM+TS, we sweep
$\lambda_{\mathrm{TS}}\in\{0.01,0.05,0.1,0.5\}$ and use $0.01$ on
PushT, $0.1$ on Cube and Reacher, and $0.5$ on TwoRooms.
CGS uses $\lambda_{\mathrm{CGS}}=0.5$ by default and $0.01$ on PushT.
All selected coefficients are fixed across planners and planning budgets.
Ablations of the CGS ramp, history length, $\lambda_{\rm SIGReg}$ and $\lambda_{\rm CGS}$ are reported in
Appendix~\ref{app:training-ablations}.

\paragraph{Planning and evaluation.}
\label{app:planning-datasets}
During planning, the world model is frozen and predicted latents are fed
back into the history for autoregressive rollout
\citep{maes2026lewm}.
Actions are optimized in the standardized coordinates used during
training and converted to environment coordinates for execution.
Each model step spans five environment steps. Each solve uses $H=5$ model
steps, corresponding to 25 environment steps, executes the resulting
25-step action sequence, and then replans from the new observation.
Episodes contain at most 50 executed environment actions, giving two
closed-loop solves.
Initial observations are sampled from trajectories in the offline dataset, and goals are the observations 25 environment steps later in the same trajectory, \(o_g=o_{t+25}\).
This horizon, goal construction, and execution cadence are shared by all
planners.

MPPI and CEM use $I=30$ optimization steps and
$K\in\{1,2,4,8,16,32,64,128,256\}$ candidates.
MPPI weights all candidates, while CEM retains
$\max(1,\lfloor K/4\rfloor)$ elites.
Each candidate set includes the current proposal mean; thus $K=1$
evaluates only the zero or prior initialization.
The single-elite CEM case follows Algorithm~1, updating the mean while
retaining the previous sampling standard deviation.
LeWM sampling costs use the squared terminal latent error.
The MPPI temperatures are $4$, $64$, $128$, and $64$ on PushT, Cube,
TwoRooms, and Reacher, respectively, and are fixed across training
objectives and planning budgets. Appendix~\ref{app:pusht-temperature-tuning}
reports these sweeps; for each environment,
we select a shared value that balances performance across the three methods.
GD follows the gradient-based planning setup of \citet{wang2026ts}, using
100 Adam steps, an initial learning rate of $0.1$, cosine decay,
zero-control initialization, and no injected action noise, to minimize terminal latent MSE.

\paragraph{Matched behavior-cloning priors.}
\label{app:pg-training-details}
We train a separate lightweight three-layer MLP \citep{wang2026prism} for
each frozen representation to map $(z_t,z_g)$ to the mean of a
25-step action sequence.
Trajectory future observations provide the goals, which are the same as planning goals in evaluation, and checkpoints are
selected by validation loss.
The prior is trained only after freezing the representation and does not
affect world-model training.
PG-MPPI and PG-CEM initialize their proposal mean with this prediction
and otherwise retain the corresponding planner's unit initial variance,
cost, and refinement rule.
Thus each representation is evaluated with its own matched prior.

\subsubsection{Additional results}
\label{app:lewm-full-sweep}

\paragraph{Statistics and sample-budget sweep results.}
All results report mean success rate and sample standard deviation over
seeds $\{0,1,42\}$, with 50 episodes per seed and shared start indices
across methods.
Planner-only timing includes both solves and excludes model loading,
environment stepping, and rendering; the timing protocol is given in
Appendix~C.5.
Tables~3--6 report the candidate sample-budget $K$ results underlying
Table~1 and Figure~4.

\begin{table}[H]
  \centering
  \vspace*{\baselineskip}
  \caption{\textbf{PushT sample-budget sweep.} Success rate (\%, mean $\pm$ sample SD over three evaluation seeds, 50 episodes each). Bold marks the best representation for each planner and budget, including ties. GD is independent of $K$.}
  \label{tab:lewm-full-sweep-pusht}
  \setlength{\tabcolsep}{2.6pt}
  \renewcommand{\arraystretch}{0.96}
  \resizebox{\textwidth}{!}{%
  \begin{tabular}{@{}ll*{9}{c}@{}}
    \toprule
    World model & Planner & $K=1$ & $K=2$ & $K=4$ & $K=8$ & $K=16$ & $K=32$ & $K=64$ & $K=128$ & $K=256$ \\
    \midrule
    \multirow{5}{*}{LeWM} & MPPI & \lembsr{0.0}{0.0} & \lemsr{30.7}{6.1} & \lemsr{42.7}{9.0} & \lemsr{56.7}{1.1} & \lemsr{58.7}{10.1} & \lemsr{61.3}{1.1} & \lemsr{59.3}{10.1} & \lemsr{64.0}{5.3} & \lemsr{60.7}{4.2} \\
     & PG-MPPI & \lemsr{84.7}{6.4} & \lemsr{82.0}{4.0} & \lemsr{82.7}{6.1} & \lemsr{77.3}{5.0} & \lemsr{76.0}{6.0} & \lemsr{78.7}{6.4} & \lemsr{72.7}{8.3} & \lemsr{72.0}{6.0} & \lemsr{62.0}{3.5} \\
     & CEM & \lembsr{0.0}{0.0} & \lembsr{35.3}{3.1} & \lemsr{37.3}{6.4} & \lemsr{56.0}{5.3} & \lemsr{74.0}{7.2} & \lemsr{83.3}{4.2} & \lemsr{83.3}{5.0} & \lemsr{89.3}{6.1} & \lemsr{87.3}{4.2} \\
     & PG-CEM & \lemsr{85.3}{6.1} & \lemsr{84.0}{3.5} & \lemsr{80.7}{5.8} & \lemsr{91.3}{1.1} & \lemsr{89.3}{1.1} & \lemsr{93.3}{1.1} & \lemsr{94.0}{4.0} & \lemsr{92.0}{5.3} & \lemsr{92.0}{2.0} \\
     & GD & \multicolumn{9}{c}{\lemsr{85.3}{5.0}} \\
    \addlinespace[1pt]
    \multirow{5}{*}{LeWM + TS} & MPPI & \lembsr{0.0}{0.0} & \lemsr{38.0}{5.3} & \lemsr{40.7}{6.4} & \lemsr{50.0}{5.3} & \lemsr{51.3}{1.1} & \lemsr{56.7}{5.0} & \lemsr{65.3}{3.1} & \lemsr{64.7}{8.3} & \lemsr{66.7}{7.0} \\
     & PG-MPPI & \lemsr{84.0}{5.3} & \lemsr{84.0}{6.0} & \lemsr{82.7}{5.8} & \lemsr{78.0}{2.0} & \lemsr{76.7}{3.1} & \lemsr{78.7}{3.1} & \lemsr{72.7}{7.6} & \lemsr{69.3}{9.0} & \lemsr{70.7}{3.1} \\
     & CEM & \lembsr{0.0}{0.0} & \lemsr{28.0}{2.0} & \lemsr{34.7}{10.1} & \lemsr{55.3}{6.4} & \lemsr{73.3}{6.1} & \lemsr{85.3}{3.1} & \lembsr{89.3}{5.0} & \lemsr{86.7}{5.0} & \lemsr{89.3}{3.1} \\
     & PG-CEM & \lemsr{84.0}{5.3} & \lemsr{89.3}{1.1} & \lemsr{86.0}{3.5} & \lemsr{89.3}{7.6} & \lembsr{94.0}{4.0} & \lembsr{94.7}{3.1} & \lemsr{92.0}{2.0} & \lemsr{90.0}{5.3} & \lemsr{91.3}{4.2} \\
     & GD & \multicolumn{9}{c}{\lemsr{82.7}{5.8}} \\
    \addlinespace[1pt]
    \multirow{5}{*}{LeWM + CGS} & MPPI & \lembsr{0.0}{0.0} & \lembsr{40.0}{5.3} & \lembsr{52.0}{2.0} & \lembsr{60.7}{8.1} & \lembsr{68.7}{4.2} & \lembsr{67.3}{6.1} & \lembsr{72.0}{2.0} & \lembsr{77.3}{6.4} & \lembsr{75.3}{7.6} \\
     & PG-MPPI & \lembsr{88.0}{7.2} & \lembsr{89.3}{4.6} & \lembsr{88.0}{5.3} & \lembsr{87.3}{6.1} & \lembsr{86.7}{1.1} & \lembsr{84.7}{5.8} & \lembsr{85.3}{1.1} & \lembsr{80.0}{2.0} & \lembsr{84.0}{9.2} \\
     & CEM & \lembsr{0.0}{0.0} & \lembsr{35.3}{1.1} & \lembsr{42.7}{1.1} & \lembsr{68.7}{5.0} & \lembsr{83.3}{4.2} & \lembsr{86.0}{4.0} & \lemsr{88.7}{4.6} & \lembsr{92.0}{3.5} & \lembsr{90.7}{5.8} \\
     & PG-CEM & \lembsr{88.0}{7.2} & \lembsr{90.7}{6.1} & \lembsr{86.7}{5.0} & \lembsr{96.0}{4.0} & \lembsr{94.0}{2.0} & \lemsr{94.0}{5.3} & \lembsr{96.0}{2.0} & \lembsr{96.0}{0.0} & \lembsr{96.0}{4.0} \\
     & GD & \multicolumn{9}{c}{\lembsr{89.3}{6.4}} \\
    \bottomrule
  \end{tabular}}
\end{table}

\begin{table}[H]
  \centering
  \caption{\textbf{Cube sample-budget sweep.} Success rate (\%, mean $\pm$ sample SD over three evaluation seeds, 50 episodes each). Bold marks the best representation for each planner and budget, including ties. GD is independent of $K$.}
  \label{tab:lewm-full-sweep-cube}
  \setlength{\tabcolsep}{2.6pt}
  \renewcommand{\arraystretch}{0.96}
  \resizebox{\textwidth}{!}{%
  \begin{tabular}{@{}ll*{9}{c}@{}}
    \toprule
    World model & Planner & $K=1$ & $K=2$ & $K=4$ & $K=8$ & $K=16$ & $K=32$ & $K=64$ & $K=128$ & $K=256$ \\
    \midrule
    \multirow{5}{*}{LeWM} & MPPI & \lembsr{44.7}{7.0} & \lemsr{46.0}{5.3} & \lemsr{44.7}{11.0} & \lemsr{46.7}{6.4} & \lemsr{51.3}{4.2} & \lemsr{51.3}{5.0} & \lemsr{56.7}{5.0} & \lemsr{59.3}{4.6} & \lemsr{64.0}{9.2} \\
     & PG-MPPI & \lembsr{100.0}{0.0} & \lembsr{75.3}{8.3} & \lembsr{70.0}{5.3} & \lemsr{60.7}{4.2} & \lemsr{61.3}{6.1} & \lemsr{76.7}{2.3} & \lemsr{82.7}{6.4} & \lemsr{89.3}{4.2} & \lemsr{90.0}{2.0} \\
     & CEM & \lembsr{44.7}{7.0} & \lembsr{50.0}{3.5} & \lemsr{45.3}{10.1} & \lembsr{55.3}{9.0} & \lemsr{57.3}{7.0} & \lemsr{62.7}{10.1} & \lemsr{64.0}{6.0} & \lemsr{63.3}{2.3} & \lemsr{64.0}{11.1} \\
     & PG-CEM & \lembsr{100.0}{0.0} & \lemsr{94.0}{0.0} & \lemsr{88.0}{5.3} & \lemsr{91.3}{3.1} & \lembsr{94.7}{1.1} & \lemsr{89.3}{4.2} & \lemsr{88.0}{4.0} & \lemsr{90.0}{2.0} & \lemsr{89.3}{1.1} \\
     & GD & \multicolumn{9}{c}{\lemsr{59.3}{8.1}} \\
    \addlinespace[1pt]
    \multirow{5}{*}{LeWM + TS} & MPPI & \lembsr{44.7}{7.0} & \lembsr{48.0}{5.3} & \lembsr{46.7}{3.1} & \lemsr{48.0}{2.0} & \lemsr{46.0}{8.7} & \lemsr{56.7}{9.0} & \lemsr{68.0}{5.3} & \lemsr{75.3}{5.0} & \lemsr{75.3}{6.1} \\
     & PG-MPPI & \lemsr{98.0}{2.0} & \lemsr{65.3}{5.0} & \lemsr{65.3}{10.3} & \lemsr{62.7}{14.5} & \lembsr{74.7}{5.0} & \lemsr{84.7}{7.6} & \lemsr{87.3}{2.3} & \lemsr{92.7}{3.1} & \lemsr{93.3}{1.1} \\
     & CEM & \lembsr{44.7}{7.0} & \lemsr{48.7}{4.2} & \lemsr{47.3}{1.1} & \lemsr{54.0}{5.3} & \lemsr{61.3}{4.6} & \lemsr{64.7}{7.6} & \lemsr{71.3}{6.4} & \lemsr{73.3}{10.3} & \lemsr{73.3}{9.9} \\
     & PG-CEM & \lemsr{98.0}{2.0} & \lemsr{92.7}{4.6} & \lemsr{86.7}{5.0} & \lemsr{88.7}{4.2} & \lemsr{89.3}{3.1} & \lemsr{88.7}{3.1} & \lemsr{88.7}{1.1} & \lemsr{90.0}{2.0} & \lemsr{90.7}{3.1} \\
     & GD & \multicolumn{9}{c}{\lemsr{65.3}{4.6}} \\
    \addlinespace[1pt]
    \multirow{5}{*}{LeWM + CGS} & MPPI & \lembsr{44.7}{7.0} & \lemsr{44.0}{4.0} & \lembsr{46.7}{9.4} & \lembsr{50.0}{7.2} & \lembsr{54.0}{8.0} & \lembsr{64.0}{6.9} & \lembsr{69.3}{9.9} & \lembsr{79.3}{4.6} & \lembsr{82.7}{6.1} \\
     & PG-MPPI & \lembsr{100.0}{0.0} & \lemsr{60.0}{4.0} & \lemsr{62.7}{7.6} & \lembsr{64.7}{9.9} & \lemsr{72.7}{3.1} & \lembsr{86.7}{3.1} & \lembsr{92.7}{2.3} & \lembsr{96.7}{3.1} & \lembsr{98.7}{1.1} \\
     & CEM & \lembsr{44.7}{7.0} & \lemsr{48.7}{4.2} & \lembsr{48.7}{2.3} & \lembsr{55.3}{4.2} & \lembsr{64.0}{10.4} & \lembsr{68.7}{5.8} & \lembsr{75.3}{11.6} & \lembsr{76.7}{9.5} & \lembsr{82.0}{6.0} \\
     & PG-CEM & \lembsr{100.0}{0.0} & \lembsr{96.0}{2.0} & \lembsr{95.3}{3.1} & \lembsr{92.0}{4.0} & \lemsr{90.7}{5.0} & \lembsr{94.7}{2.3} & \lembsr{93.3}{1.1} & \lembsr{96.0}{4.0} & \lembsr{95.3}{4.2} \\
     & GD & \multicolumn{9}{c}{\lembsr{71.3}{4.6}} \\
    \bottomrule
  \end{tabular}}
\end{table}

\begin{table}[H]
  \centering
  \caption{\textbf{TwoRooms sample-budget sweep.} Success rate (\%, mean $\pm$ sample SD over three evaluation seeds, 50 episodes each). Bold marks the best representation for each planner and budget, including ties. GD is independent of $K$.}
  \label{tab:lewm-full-sweep-tworoom}
  \setlength{\tabcolsep}{2.6pt}
  \renewcommand{\arraystretch}{0.96}
  \resizebox{\textwidth}{!}{%
  \begin{tabular}{@{}ll*{9}{c}@{}}
    \toprule
    World model & Planner & $K=1$ & $K=2$ & $K=4$ & $K=8$ & $K=16$ & $K=32$ & $K=64$ & $K=128$ & $K=256$ \\
    \midrule
    \multirow{5}{*}{LeWM} & MPPI & \lembsr{13.3}{1.1} & \lemsr{40.7}{7.0} & \lemsr{53.3}{7.0} & \lemsr{60.0}{10.6} & \lemsr{70.7}{11.7} & \lemsr{82.0}{10.0} & \lemsr{88.7}{4.2} & \lemsr{92.0}{3.5} & \lemsr{94.7}{1.1} \\
     & PG-MPPI & \lembsr{100.0}{0.0} & \lemsr{77.3}{9.9} & \lemsr{68.7}{4.2} & \lemsr{79.3}{1.1} & \lemsr{87.3}{4.6} & \lemsr{91.3}{6.4} & \lembsr{98.7}{1.1} & \lemsr{98.0}{2.0} & \lembsr{100.0}{0.0} \\
     & CEM & \lembsr{13.3}{1.1} & \lemsr{55.3}{9.0} & \lemsr{57.3}{6.4} & \lemsr{56.7}{3.1} & \lemsr{68.0}{5.3} & \lemsr{79.3}{4.2} & \lemsr{79.3}{9.0} & \lemsr{76.7}{10.1} & \lemsr{78.0}{5.3} \\
     & PG-CEM & \lembsr{100.0}{0.0} & \lemsr{98.0}{3.5} & \lemsr{97.3}{2.3} & \lemsr{99.3}{1.1} & \lemsr{96.0}{4.0} & \lemsr{96.7}{2.3} & \lemsr{96.0}{3.5} & \lemsr{93.3}{3.1} & \lemsr{94.7}{2.3} \\
     & GD & \multicolumn{9}{c}{\lemsr{41.3}{7.0}} \\
    \addlinespace[1pt]
    \multirow{5}{*}{LeWM + TS} & MPPI & \lembsr{13.3}{1.1} & \lembsr{56.0}{3.5} & \lembsr{64.0}{9.2} & \lemsr{74.7}{1.1} & \lemsr{81.3}{1.1} & \lemsr{88.0}{5.3} & \lemsr{95.3}{4.2} & \lembsr{97.3}{3.1} & \lemsr{96.0}{2.0} \\
     & PG-MPPI & \lembsr{100.0}{0.0} & \lemsr{78.7}{4.2} & \lemsr{69.3}{6.4} & \lemsr{78.0}{4.0} & \lemsr{84.7}{7.0} & \lemsr{92.0}{2.0} & \lemsr{95.3}{5.0} & \lemsr{98.0}{2.0} & \lemsr{98.0}{2.0} \\
     & CEM & \lembsr{13.3}{1.1} & \lembsr{68.7}{8.1} & \lemsr{68.7}{7.6} & \lembsr{78.0}{2.0} & \lembsr{88.7}{5.0} & \lembsr{95.3}{3.1} & \lembsr{96.0}{2.0} & \lembsr{97.3}{1.2} & \lembsr{96.7}{2.3} \\
     & PG-CEM & \lembsr{100.0}{0.0} & \lembsr{98.7}{1.1} & \lembsr{100.0}{0.0} & \lembsr{100.0}{0.0} & \lembsr{100.0}{0.0} & \lembsr{100.0}{0.0} & \lembsr{100.0}{0.0} & \lembsr{100.0}{0.0} & \lembsr{100.0}{0.0} \\
     & GD & \multicolumn{9}{c}{\lemsr{65.3}{9.9}} \\
    \addlinespace[1pt]
    \multirow{5}{*}{LeWM + CGS} & MPPI & \lembsr{13.3}{1.1} & \lemsr{54.0}{2.0} & \lemsr{62.0}{11.1} & \lembsr{75.3}{9.4} & \lembsr{82.0}{2.0} & \lembsr{92.7}{6.1} & \lembsr{96.0}{2.0} & \lembsr{97.3}{1.1} & \lembsr{98.0}{2.0} \\
     & PG-MPPI & \lembsr{100.0}{0.0} & \lembsr{86.7}{6.1} & \lembsr{78.7}{7.6} & \lembsr{80.0}{4.0} & \lembsr{90.7}{2.3} & \lembsr{94.0}{3.5} & \lemsr{98.0}{3.5} & \lembsr{100.0}{0.0} & \lembsr{100.0}{0.0} \\
     & CEM & \lembsr{13.3}{1.1} & \lemsr{64.0}{6.0} & \lembsr{71.3}{8.1} & \lemsr{76.0}{5.3} & \lemsr{88.0}{2.0} & \lemsr{93.3}{2.3} & \lemsr{94.0}{2.0} & \lemsr{95.3}{1.1} & \lemsr{96.0}{2.0} \\
     & PG-CEM & \lembsr{100.0}{0.0} & \lemsr{98.0}{2.0} & \lemsr{98.7}{1.1} & \lembsr{100.0}{0.0} & \lembsr{100.0}{0.0} & \lembsr{100.0}{0.0} & \lembsr{100.0}{0.0} & \lembsr{100.0}{0.0} & \lembsr{100.0}{0.0} \\
     & GD & \multicolumn{9}{c}{\lembsr{70.7}{3.1}} \\
    \bottomrule
  \end{tabular}}
\end{table}

\begin{table}[H]
  \centering
  \caption{\textbf{Reacher sample-budget sweep.} Success rate (\%, mean $\pm$ sample SD over three evaluation seeds, 50 episodes each). Bold marks the best representation for each planner and budget, including ties. GD is independent of $K$.}
  \label{tab:lewm-full-sweep-reacher}
  \setlength{\tabcolsep}{2.6pt}
  \renewcommand{\arraystretch}{0.96}
  \resizebox{\textwidth}{!}{%
  \begin{tabular}{@{}ll*{9}{c}@{}}
    \toprule
    World model & Planner & $K=1$ & $K=2$ & $K=4$ & $K=8$ & $K=16$ & $K=32$ & $K=64$ & $K=128$ & $K=256$ \\
    \midrule
    \multirow{5}{*}{LeWM} & MPPI & \lembsr{2.0}{2.0} & \lemsr{46.0}{7.2} & \lembsr{40.7}{8.3} & \lembsr{44.7}{3.1} & \lemsr{36.7}{9.0} & \lemsr{36.7}{6.1} & \lemsr{49.3}{11.4} & \lemsr{65.3}{4.2} & \lembsr{73.3}{4.6} \\
     & PG-MPPI & \lembsr{78.7}{2.3} & \lembsr{67.3}{5.0} & \lembsr{49.3}{1.1} & \lemsr{46.0}{7.2} & \lemsr{42.7}{5.0} & \lemsr{46.0}{5.3} & \lemsr{46.7}{10.1} & \lemsr{62.7}{11.0} & \lembsr{72.0}{7.2} \\
     & CEM & \lembsr{2.0}{2.0} & \lembsr{44.0}{10.0} & \lembsr{41.3}{4.2} & \lemsr{65.3}{5.8} & \lembsr{72.0}{5.3} & \lemsr{74.7}{9.2} & \lembsr{83.3}{7.0} & \lembsr{83.3}{8.3} & \lemsr{85.3}{3.1} \\
     & PG-CEM & \lembsr{78.7}{2.3} & \lembsr{82.0}{2.0} & \lembsr{73.3}{6.1} & \lembsr{79.3}{4.2} & \lembsr{85.3}{3.1} & \lemsr{80.0}{3.5} & \lemsr{81.3}{9.4} & \lembsr{86.7}{6.1} & \lemsr{86.7}{10.1} \\
     & GD & \multicolumn{9}{c}{\lemsr{76.7}{4.2}} \\
    \addlinespace[1pt]
    \multirow{5}{*}{LeWM + TS} & MPPI & \lembsr{2.0}{2.0} & \lembsr{47.3}{8.1} & \lemsr{40.0}{10.6} & \lemsr{36.7}{9.9} & \lembsr{44.7}{4.2} & \lemsr{40.0}{9.2} & \lemsr{53.3}{6.1} & \lemsr{64.0}{2.0} & \lemsr{68.7}{6.4} \\
     & PG-MPPI & \lemsr{66.7}{11.0} & \lemsr{58.0}{6.0} & \lemsr{46.7}{9.9} & \lemsr{41.3}{7.6} & \lemsr{38.7}{9.0} & \lemsr{44.7}{3.1} & \lemsr{52.7}{5.0} & \lemsr{66.7}{2.3} & \lemsr{67.3}{7.6} \\
     & CEM & \lembsr{2.0}{2.0} & \lemsr{42.7}{5.0} & \lemsr{37.3}{1.1} & \lemsr{62.7}{3.1} & \lemsr{70.7}{3.1} & \lemsr{80.0}{8.7} & \lemsr{77.3}{5.0} & \lemsr{81.3}{8.1} & \lemsr{81.3}{7.0} \\
     & PG-CEM & \lemsr{66.7}{11.0} & \lemsr{74.7}{9.9} & \lemsr{66.7}{5.0} & \lemsr{78.7}{13.0} & \lemsr{80.0}{7.2} & \lemsr{79.3}{3.1} & \lembsr{82.0}{8.7} & \lemsr{78.0}{2.0} & \lemsr{84.7}{5.0} \\
     & GD & \multicolumn{9}{c}{\lemsr{81.3}{5.0}} \\
    \addlinespace[1pt]
    \multirow{5}{*}{LeWM + CGS} & MPPI & \lembsr{2.0}{2.0} & \lemsr{46.0}{2.0} & \lemsr{39.3}{3.1} & \lemsr{38.0}{13.1} & \lemsr{41.3}{4.2} & \lembsr{42.7}{12.2} & \lembsr{56.0}{10.0} & \lembsr{66.7}{2.3} & \lemsr{70.7}{2.3} \\
     & PG-MPPI & \lemsr{58.7}{6.1} & \lemsr{64.7}{1.1} & \lemsr{47.3}{7.0} & \lembsr{46.7}{7.0} & \lembsr{46.0}{2.0} & \lembsr{46.7}{4.2} & \lembsr{53.3}{12.9} & \lembsr{69.3}{3.1} & \lemsr{67.3}{7.0} \\
     & CEM & \lembsr{2.0}{2.0} & \lemsr{39.3}{1.1} & \lemsr{38.7}{9.4} & \lembsr{66.7}{3.1} & \lemsr{71.3}{15.0} & \lembsr{81.3}{4.6} & \lemsr{82.0}{2.0} & \lemsr{80.7}{3.1} & \lembsr{86.7}{4.2} \\
     & PG-CEM & \lemsr{58.7}{6.1} & \lemsr{66.7}{7.0} & \lemsr{60.0}{5.3} & \lemsr{74.0}{6.0} & \lemsr{80.7}{6.4} & \lembsr{80.7}{2.3} & \lemsr{79.3}{10.1} & \lemsr{81.3}{3.1} & \lembsr{88.0}{6.9} \\
     & GD & \multicolumn{9}{c}{\lembsr{82.0}{5.3}} \\
    \bottomrule
  \end{tabular}}
\end{table}

\paragraph{Aggregate performance across sample budgets.}
In order to comprehensively compare planning performance across sample-budget sweep result, we use an integration metric to summarize success rate all across sample-budget $K$'s. Normalized AUC (nAUC)
uses trapezoidal integration divided by the horizontal span:
\begin{equation*}
\mathrm{nAUC}_x=\frac{\sum_{j=1}^{m-1}(x_{j+1}-x_j)(s_j+s_{j+1})/2}{x_m-x_1}.
\end{equation*}
Here $s_j$ is SR in percent. We use $x=\log_2 K$ for sample budgets,
nAUC ranges from 0 to 100. In Appendix~\ref{app:planner-iteration-curves} and \ref{app:pusht-temperature-tuning}, we also show nAUC results for refinement SR curve and temperature selection results, with
$x=I$ being planning optimization steps, and $x=\log_2\tau$ being temperature. Integration is linear, so integrating the
mean curve equals averaging per-seed integrals. Updated Reacher entries
use the reported rounded mean curves.

\begin{table}[!htbp]
\centering
\vspace*{\baselineskip}
\caption{\textbf{Complete sample-budget sweep nAUC.}
$K\in\{1,2,4,8,16,32,64,128,256\}$; sampling planners use $I=30$.
nAUC uses $\log_2 K$. Parentheses give percentage-point differences
from LeWM for the same environment and planner. Bold marks the best
representation for each planner and environment. Results average three
evaluation seeds (50 episodes each); GD repeats its $K$-independent SR.}
\label{tab:app-full-sweep-curve-summary}
\begingroup
\fontsize{8}{9.5}\selectfont
\setlength{\tabcolsep}{3pt}
\renewcommand{\arraystretch}{1.06}
\begin{tabular*}{\linewidth}{@{\extracolsep{\fill}}llrrrr@{}}
\toprule
Planner & Model & PushT & Cube & TwoRooms & Reacher \\
\midrule
CEM & LeWM & $62.79$ & $56.54$ & $64.79$ & $\mathbf{63.46}$ \\
 & LeWM + TS & $62.17\,{\scriptstyle(-0.62)}$ & $59.96\,{\scriptstyle(+3.42)}$ & $\mathbf{80.96}\,{\scriptstyle(+16.17)}$ & $61.71\,{\scriptstyle(-1.75)}$ \\
 & LeWM + CGS & $\mathbf{67.75}\,{\scriptstyle(+4.96)}$ & $\mathbf{62.58}\,{\scriptstyle(+6.04)}$ & $79.58\,{\scriptstyle(+14.79)}$ & $63.04\,{\scriptstyle(-0.42)}$ \\
\midrule
PG-CEM & LeWM & $89.17$ & $91.25$ & $96.75$ & $\mathbf{81.33}$ \\
 & LeWM + TS & $90.38\,{\scriptstyle(+1.21)}$ & $89.88\,{\scriptstyle(-1.38)}$ & $\mathbf{99.83}\,{\scriptstyle(+3.08)}$ & $76.88\,{\scriptstyle(-4.46)}$ \\
 & LeWM + CGS & $\mathbf{93.17}\,{\scriptstyle(+4.00)}$ & $\mathbf{94.46}\,{\scriptstyle(+3.21)}$ & $99.58\,{\scriptstyle(+2.83)}$ & $74.50\,{\scriptstyle(-6.83)}$ \\
\midrule
MPPI & LeWM & $50.46$ & $51.29$ & $67.67$ & $44.63$ \\
 & LeWM + TS & $50.00\,{\scriptstyle(-0.46)}$ & $56.08\,{\scriptstyle(+4.79)}$ & $76.42\,{\scriptstyle(+8.75)}$ & $45.17\,{\scriptstyle(+0.54)}$ \\
 & LeWM + CGS & $\mathbf{59.46}\,{\scriptstyle(+9.00)}$ & $\mathbf{58.88}\,{\scriptstyle(+7.58)}$ & $\mathbf{76.88}\,{\scriptstyle(+9.21)}$ & $\mathbf{45.79}\,{\scriptstyle(+1.17)}$ \\
\midrule
PG-MPPI & LeWM & $76.83$ & $76.38$ & $87.58$ & $54.50$ \\
 & LeWM + TS & $77.42\,{\scriptstyle(+0.58)}$ & $78.54\,{\scriptstyle(+2.17)}$ & $86.88\,{\scriptstyle(-0.71)}$ & $51.96\,{\scriptstyle(-2.54)}$ \\
 & LeWM + CGS & $\mathbf{85.92}\,{\scriptstyle(+9.08)}$ & $\mathbf{79.42}\,{\scriptstyle(+3.04)}$ & $\mathbf{91.00}\,{\scriptstyle(+3.42)}$ & $\mathbf{54.62}\,{\scriptstyle(+0.12)}$ \\
\midrule
GD & LeWM & $85.33$ & $59.33$ & $41.33$ & $76.67$ \\
 & LeWM + TS & $82.67\,{\scriptstyle(-2.67)}$ & $65.33\,{\scriptstyle(+6.00)}$ & $65.33\,{\scriptstyle(+24.00)}$ & $81.33\,{\scriptstyle(+4.67)}$ \\
 & LeWM + CGS & $\mathbf{89.33}\,{\scriptstyle(+4.00)}$ & $\mathbf{71.33}\,{\scriptstyle(+12.00)}$ & $\mathbf{70.67}\,{\scriptstyle(+29.33)}$ & $\mathbf{82.00}\,{\scriptstyle(+5.33)}$ \\
\bottomrule
\end{tabular*}
\endgroup
\end{table}

\subsubsection{Cube ablation of reference geometry in CGS.}
\label{app:cube-ref-ablation}
Each Cube environment action is five-dimensional,
consisting of XYZ end-effector displacement, a grasp command, and an arm-angle
command. Because one model step concatenates five environment actions, the
predictor receives a $25$D action block, while the corresponding XYZ
translation channels form a $15$D reference. We compare three CGS reference
geometries: \emph{grasp-active XYZ} (15D), which uses only transitions in which the grasp command is active; \emph{all-transition XYZ} (15D), which uses the same
translation channels on all transitions; and \emph{all-transition full-action}
(25D), which uses all five action coordinates. In the main context
(Section~\ref{sec:main-experimental-design}), we use grasp-active XYZ because
XYZ commands most directly specify cube motion while the cube is grasped,
whereas ungrasped transitions and the grasp/arm-angle channels need not induce
comparable object displacement. The full $25$D action block is nevertheless
retained as predictor input in all variants. The sample-budget sweep uses $I=30$;
the refinement sweep uses $K=128$. MPPI uses $\tau=64$ throughout.

\begin{figure}[H]
\centering
% One float: no caption or independent table float between the image and table.
\includegraphics[width=\linewidth,trim=0 7 0 0,clip]{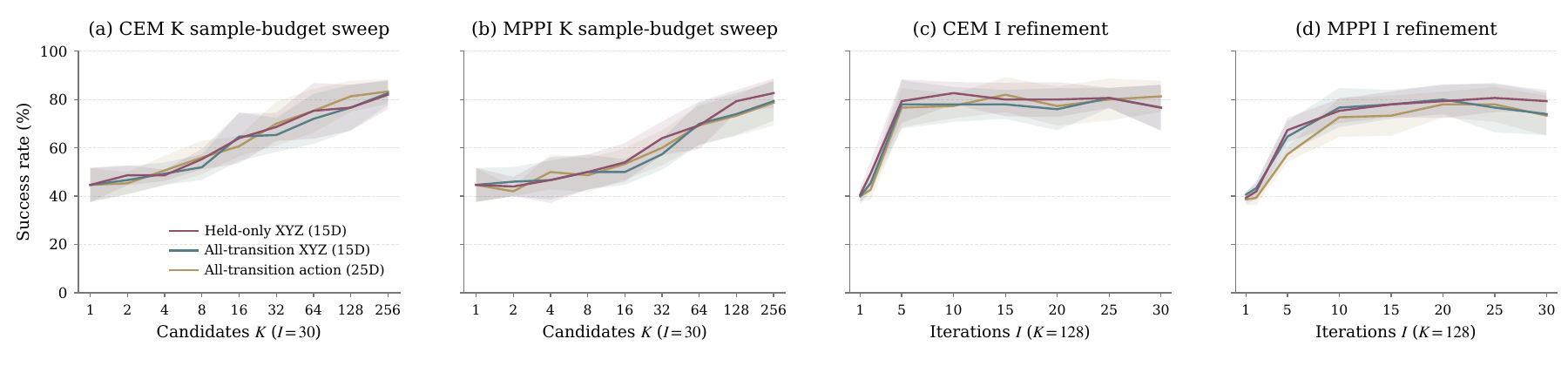}\par
\vspace{-2pt}
\begingroup
\fontsize{8}{9.5}\selectfont
\setlength{\tabcolsep}{3pt}
\renewcommand{\arraystretch}{1.06}
\begin{tabular*}{\linewidth}{@{\extracolsep{\fill}}lrrrr@{}}
\toprule
Reference & CEM ($K$) & MPPI ($K$) & CEM ($I$) & MPPI ($I$) \\
\midrule
grasp-active XYZ15D (main) & $62.58$ & $\mathbf{58.88}$ & $\mathbf{77.40}$ & $\mathbf{73.72}$ \\
All-transition XYZ15D & $61.29\,{\scriptstyle(-1.29)}$ & $57.00\,{\scriptstyle(-1.88)}$ & $75.09\,{\scriptstyle(-2.31)}$ & $72.67\,{\scriptstyle(-1.06)}$ \\
All-transition full25D & $\mathbf{62.92}\,{\scriptstyle(+0.33)}$ & $57.29\,{\scriptstyle(-1.58)}$ & $75.82\,{\scriptstyle(-1.59)}$ & $69.68\,{\scriptstyle(-4.05)}$ \\
\bottomrule
\end{tabular*}
\endgroup
\caption{\textbf{Cube reference geometry: sample-budget and refinement curves with nAUC summaries.}
Lines show mean success and bands show sample SD over three evaluation
seeds (50 episodes each). The table reports mean normalized AUC (nAUC)
over the complete sample-budget and refinement sweeps: $K=1$--$256$ on a $\log_2 K$ axis
and $I=1$--$30$ on a linear $I$ axis. Higher is better; bold marks
the best reference for each planner and sweep, including ties.
Parentheses give percentage-point differences from grasp-active XYZ15D
for the same sweep and planner; negative values indicate decreases.
Both planners use no action prior.}
\label{fig:app-cube-reference-curve-summary}
\end{figure}

grasp-active XYZ has the highest nAUC for MPPI in
sweeps and for CEM in the refinement sweep. The all-transition full-action reference has the highest
nAUC for CEM in the sample-budget sweep.
On Cube, the grasp-active mask and XYZ reference are derived entirely from the observed action vector and require no additional state or task labels. Our full-action ablation shows that CGS remains effective without this reference selection.

\subsubsection{State-informed reference geometry and observation history on Reacher}
\label{app:reacher-ref-history}

We separate two factors that can limit an action-only reference on Reacher:
raw torque geometry does not fully describe the state-dependent physical effect
of an action, and a single image does not fully specify the dynamical state.
We compare LeWM, the main CGS model using the action reference (action-CGS),
and a transition-reference CGS variant (transition-CGS). Transition-CGS uses
a state-informed reference constructed from observed joint-configuration
displacements, capturing the realized transition under the executed action and
current state. Each checkpoint is frozen and evaluated under the same three
observation-history protocols.

\paragraph{Transition-reference training.}
Action-CGS uses the executed action block $a_t$ to define reference geometry.
Transition-CGS instead uses the observed joint-configuration displacement,
which reflects the transition produced by that action in its starting state:
\begin{equation}
 \psi(q_t)=[\sin q_{t,1},\sin q_{t,2},\cos q_{t,1},\cos q_{t,2}]^\top,
 \qquad r_t=W[\psi(q_{t+1})-\psi(q_t)],
 \label{eq:reacher-transition-reference}
\end{equation}
where $q_t$ denotes the two joint angles, i.e., the configuration component
of the Reacher probe state $s_t$. The index $t$ follows model steps, so
consecutive model-step observations are separated by five environment steps.
$W$ is a fixed diagonal normalization computed from training displacements;
the sine and cosine coordinates of each joint share an inverse RMS scale.
The reference $r_t$ therefore reflects the realized state-dependent transition
rather than the executed torque vector alone, with no learned aggregation head
or encoder teacher. Only the reference construction changes: the architecture,
prediction and SIGReg losses, the default $N=3$ history setting, the cosine-Gram
CGS objective, and all other training settings are unchanged. Joint-angle labels
are used only to construct the training reference; planning uses images and
actions without access to these labels.

\paragraph{Observation-history protocols.}
\emph{Single-frame context (native)} supplies only the current image at every
replanning call, with no previous observations or executed actions. Predicted
latents and candidate actions still accumulate causally within each imagined
rollout; we use this historical strategy in most of our experiments unless explained.

\emph{Dataset-initialized history (privileged)} supplies
$[o_{t-2},o_{t-1},o_t]$ and the two connecting executed-action blocks,
$[a_{t-2},a_{t-1}]$. At the first solve, the preceding observations and actions
come from the same dataset episode used to initialize the environment. Missing
episode-start observations are padded with the first frame and missing actions
with raw zero actions before normalization; history never crosses episode
boundaries. All later history comes from the actual online trajectory. These
initial observations and actions are unavailable to a planner starting from an
arbitrary reset with only one image, so we treat dataset-initialized history as
a privileged-context diagnostic rather than a deployable cold-start protocol.

\emph{Online history} reads no dataset prehistory and starts through the native
single-frame path. It records observations and executed actions, enabling the
same three-frame context after at least two model steps have been collected. In
our evaluation, each solve executes five action blocks, so the second solve uses
$[o_3,o_4,o_5]$ and $[a_3,a_4]$, with time measured from the reset. Both the
native and online-history protocols are feasible under the single-image
cold-start setting. In either history-based protocol, past actions are aligned
with the observed transitions and candidate actions begin at the current time;
the predictor retains its causal mask and at most three context tokens during
rollout. No model parameters are updated online.

\paragraph{Evaluation and results.}
We follow the shared sampling-based and GD evaluation protocols in
Appendices~\ref{app:planning-datasets}, using matched
evaluation starts for seeds $\{0,1,42\}$ and 50 episodes per seed.
Sampling-based evaluations use no action prior. Across this ablation, only the
CGS reference and observation-history protocol vary.

\begin{figure}[H]
\centering
\includegraphics[width=\linewidth]{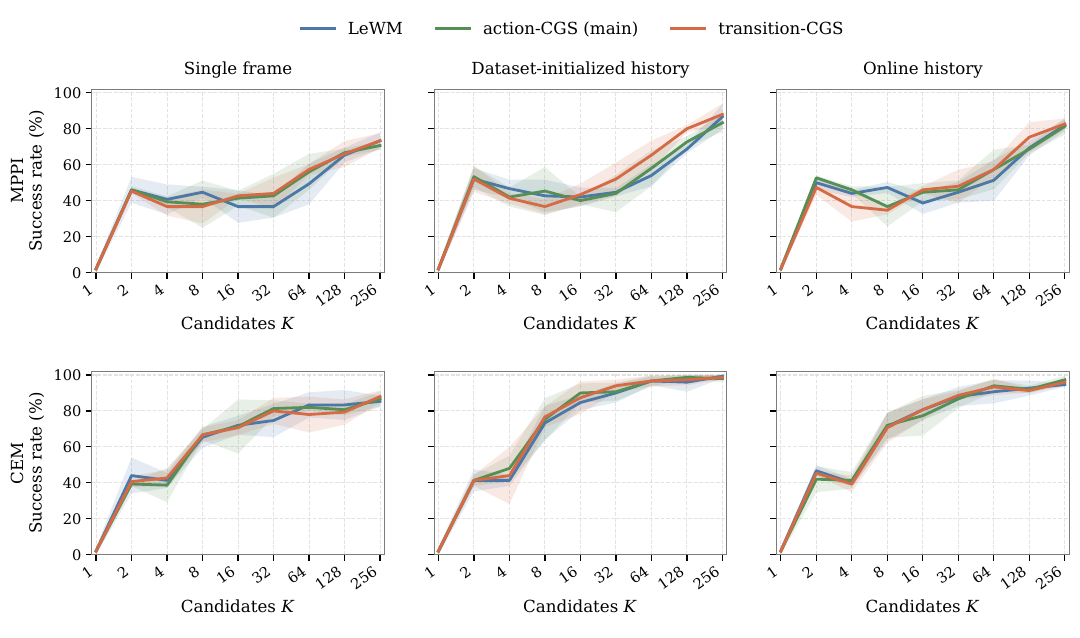}
\caption{\textbf{Reacher sample-budget sweeps under three observation-history protocols.}
Top: MPPI; bottom: CEM. Columns show single-frame input, dataset-initialized
history, and online history, respectively.
All curves use $I=30$ without an action prior; bands show $\pm$ one sample SD
across three evaluation seeds, 50 episodes each. MPPI uses $\tau=64$.
State labels are used only to train
transition-CGS.}
\label{fig:reacher-ref-full-sweep}
\end{figure}

Figure~\ref{fig:reacher-ref-full-sweep} and
Table~\ref{tab:reacher-ref-history} separate the effects of reference
geometry from those of observation history. For MPPI, the three models are
similar under the native single-frame protocol: at $K=128$, LeWM,
action-CGS, and transition-CGS reach $65.3\%$, $66.7\%$, and $66.0\%$,
respectively. The state-informed reference becomes more useful when recent
observation--action history provides dynamical context. With
dataset-initialized history, transition-CGS reaches $80.0\%$ at $K=128$,
compared with $68.7\%$ for LeWM and $72.7\%$ for action-CGS, and also gives
the highest sample-budget nAUC ($51.96$ versus $49.38$ and $49.75$).
With online history, its advantage becomes clearer at larger candidate
budgets, reaching $75.3\%$ at $K=128$, compared with $69.3\%$ for LeWM
and $68.7\%$ for action-CGS. Thus, for MPPI, the benefit of a
state-dependent transition reference becomes clearer when short transition
history makes the current dynamics more observable, particularly at larger
sampling budgets.

CEM shows a different pattern. Adding history substantially improves the
LeWM baseline itself: its $K=128$ success rises from $83.3\%$ with a
single frame to $96.0\%$ with dataset-initialized history and $92.7\%$
with online history. With history available, CEM approaches saturation at
larger candidate budgets (Figure~\ref{fig:reacher-ref-full-sweep}), leaving
limited headroom for either reference variant to provide further gains.
Accordingly, the additional differences among LeWM, action-CGS, and
transition-CGS are small and mixed in both $K=128$ success and
sample-budget nAUC.
GD exhibits a more consistent reference effect. Action-CGS improves over
LeWM in all three history settings, and transition-CGS further increases
mean success to $85.3\%$, $94.0\%$, and $92.7\%$ under single-frame,
dataset-initialized, and online history, respectively, compared with
$76.7\%$, $84.0\%$, and $81.3\%$ for LeWM.

Together, these results suggest that observation history and reference
geometry address distinct limitations on Reacher. History provides
information about recent dynamics that is absent from a single image,
whereas transition-CGS supervises latent geometry using the realized,
state-dependent effect of an action. Their combination is especially useful
for MPPI and GD, while CEM already benefits strongly from history alone.
Transition-CGS, however, requires joint-state information to construct its
training reference. Such labels are privileged and may be unavailable in
the pixel--action datasets targeted by our main setting. We therefore retain
action-CGS as the main method and use transition-CGS as a diagnostic of the
potential benefit of richer state-dependent reference geometries.

% Native action-CGS CEM nAUC delta uses the original unrounded sample-budget aggregate (-0.42).
\begin{table}[H]
\centering
\fontsize{8}{9.5}\selectfont
\vspace*{\baselineskip}
\caption{\textbf{Reacher reference and history ablation: success rate and sample-budget nAUC.} SR is mean $\pm$ sample SD (\%) over three evaluation seeds, 50 episodes each. MPPI/CEM SR uses $K=128$, $I=30$; GD uses 100 Adam steps and is independent of $K$. nAUC summarizes the complete sample-budget sweep. Parentheses give percentage-point differences from LeWM for both CGS models, within the same history, planner, and metric. Bold marks the highest value within each history/metric group. MPPI uses $\tau=64$.}
\label{tab:reacher-ref-history}
\setlength{\tabcolsep}{1.5pt}
\begin{tabular*}{\linewidth}{@{\extracolsep{\fill}}>{\raggedright\arraybackslash}p{0.14\linewidth}lccccc@{}}
\toprule
History & Model & \multicolumn{2}{c}{MPPI} & \multicolumn{2}{c}{CEM} & GD \\
\cmidrule(lr){3-4}\cmidrule(lr){5-6}\cmidrule(l){7-7}
 & & SR & nAUC & SR & nAUC & SR \\
\midrule
\multirow{3}{=}{Single frame} & LeWM & $65.3\,{\scriptstyle\pm 4.2}$ & $44.63$ & $\mathbf{83.3}\,{\scriptstyle\pm 8.3}$ & $\mathbf{63.46}$ & $76.7\,{\scriptstyle\pm 4.2}$ \\
 & action-CGS & $\mathbf{66.7}\,{\scriptstyle\pm 2.3}\,{\scriptstyle(+1.3)}$ & $\mathbf{45.79}\,{\scriptstyle(+1.17)}$ & $80.7\,{\scriptstyle\pm 3.1}\,{\scriptstyle(-2.7)}$ & $63.04\,{\scriptstyle(-0.42)}$ & $82.0\,{\scriptstyle\pm 5.3}\,{\scriptstyle(+5.3)}$ \\
 & transition-CGS & $66.0\,{\scriptstyle\pm 7.2}\,{\scriptstyle(+0.7)}$ & $\mathbf{45.79}\,{\scriptstyle(+1.17)}$ & $79.3\,{\scriptstyle\pm 7.0}\,{\scriptstyle(-4.0)}$ & $62.88\,{\scriptstyle(-0.58)}$ & $\mathbf{85.3}\,{\scriptstyle\pm 7.0}\,{\scriptstyle(+8.7)}$ \\
\midrule
\multirow{3}{=}{\raggedright\scriptsize\mbox{Dataset-initialized}\newline history} & LeWM & $68.7\,{\scriptstyle\pm 2.3}$ & $49.38$ & $96.0\,{\scriptstyle\pm 5.3}$ & $71.75$ & $84.0\,{\scriptstyle\pm 3.5}$ \\
 & action-CGS & $72.7\,{\scriptstyle\pm 3.1}\,{\scriptstyle(+4.0)}$ & $49.75\,{\scriptstyle(+0.37)}$ & $\mathbf{98.7}\,{\scriptstyle\pm 1.1}\,{\scriptstyle(+2.7)}$ & $\mathbf{73.83}\,{\scriptstyle(+2.08)}$ & $90.7\,{\scriptstyle\pm 4.2}\,{\scriptstyle(+6.7)}$ \\
 & transition-CGS & $\mathbf{80.0}\,{\scriptstyle\pm 2.0}\,{\scriptstyle(+11.3)}$ & $\mathbf{51.96}\,{\scriptstyle(+2.58)}$ & $97.3\,{\scriptstyle\pm 3.1}\,{\scriptstyle(+1.3)}$ & $73.46\,{\scriptstyle(+1.71)}$ & $\mathbf{94.0}\,{\scriptstyle\pm 3.5}\,{\scriptstyle(+10.0)}$ \\
\midrule
\multirow{3}{=}{Online history} & LeWM & $69.3\,{\scriptstyle\pm 4.2}$ & $48.42$ & $\mathbf{92.7}\,{\scriptstyle\pm 4.2}$ & $\mathbf{69.79}$ & $81.3\,{\scriptstyle\pm 2.3}$ \\
 & action-CGS & $68.7\,{\scriptstyle\pm 2.3}\,{\scriptstyle(-0.7)}$ & $\mathbf{49.21}\,{\scriptstyle(+0.79)}$ & $92.0\,{\scriptstyle\pm 2.0}\,{\scriptstyle(-0.7)}$ & $69.37\,{\scriptstyle(-0.42)}$ & $89.3\,{\scriptstyle\pm 4.2}\,{\scriptstyle(+8.0)}$ \\
 & transition-CGS & $\mathbf{75.3}\,{\scriptstyle\pm 8.3}\,{\scriptstyle(+6.0)}$ & $48.46\,{\scriptstyle(+0.04)}$ & $91.3\,{\scriptstyle\pm 1.1}\,{\scriptstyle(-1.3)}$ & $\mathbf{69.79}\,{\scriptstyle(+0.00)}$ & $\mathbf{92.7}\,{\scriptstyle\pm 5.0}\,{\scriptstyle(+11.3)}$ \\
\bottomrule
\end{tabular*}
\end{table}

\subsection{Visualizing Action--Effect Geometry and Planning}
\label{app:action-effect-geometry}
\label{app:trajectory-heatmaps}

Figure~\ref{fig:appendix-action-effect-geometry} shows two selected examples
per environment, complementing the planning results with two views of
control geometry: encoded transition trajectories and predicted endpoint
responses to action perturbations.

\subsubsection{Action and latent transition trajectories}
\label{app:geometry-trajectories}

\paragraph{Construction.}
For PushT, Cube, and Reacher, we sample candidate starts from recorded
trajectories and select two illustrative 30-observation windows per benchmark
at a five-step stride, retaining all 29 transitions. Because recorded TwoRooms
episodes are shorter than this 145-step span, we use seeded native rollouts.
Its two examples share a zigzag control family; a local encoder-geometry probe
selects initial regions, and seeds determine control orientation and size.
All are selected qualitative examples, not unfiltered random samples.

For each window, we form standardized five-step action blocks $a_t$ and
latent differences $\Delta z_t=f_\theta(o_{t+1})-f_\theta(o_t)$, then
normalize each vector in its original space as in
Eq.~\eqref{eq:cgs-relational-grams}, and cumulatively sum the unit directions
into action and latent paths. We align latent to action directions (using a
rigid cumulative-path fit for Cube) and project both paths onto action-path
principal components: two for PushT, TwoRooms, and Reacher and three for Cube.
Cube uses the XYZ action channels. The basis is shared across models;
alignment has no scale, and projected steps are not renormalized.

\paragraph{Interpretation and results.}
The purple dashed curve is the action reference; colored curves are encoded
transition paths. The key criterion is preservation of turns and relative
directions: under CGS, $K_Z\approx K_A$ means that similarly directed actions
induce similarly directed latent changes. Because the objective matches
pairwise angles rather than coordinates or step lengths, exact projected
overlap is a stronger criterion than the training loss imposes.

On \textbf{PushT and Cube}, CGS follows the action-reference shape more
faithfully than LeWM or TS, particularly around changes of direction.
TS is partially improved or comparable to LeWM but does not consistently
recover these turns. On simpler \textbf{TwoRooms}, TS and CGS are similar and
both align much better than LeWM, consistent with their near-ceiling MPPI SR
($96.0\%$ and $98.0\%$ at $K=256$; Table~\ref{tab:lewm-full-sweep-tworoom}).
On Reacher, CGS preserves several turns more clearly than LeWM or TS, though
less consistently than on PushT or Cube, suggesting partial improvement in
the action-related organization of this more state-dependent setting.

\subsubsection{Action-perturbation heatmaps and planning isotropy}
\label{app:geometry-heatmaps}

\paragraph{Construction and connection to $G_H$.}
At a shared observation and base action sequence $\mathbf a$, let
$\widehat F_H(\mathbf a)$ be the model's predicted terminal latent for $H=5$
action blocks. Let $Q\in\mathbb R^{(H d_a)\times2}$ have orthonormal columns
spanning the fixed horizon-constant plane of the first two primitive
control coordinates. We sample 32 equally spaced unit-circle directions
$q_j=(\cos\theta_j,\sin\theta_j)^\top$, with $\theta_j=2\pi j/32$, and
apply equal-norm perturbations $\epsilon Qq_j$ in standardized action
coordinates, using $\epsilon=0.05$. The heatmap records pairwise cosines
of the full-dimensional terminal responses
\begin{equation}
 r_j=\widehat F_H(\mathbf a+\epsilon Qq_j)-\widehat F_H(\mathbf a),
 \qquad K^{\rm resp}_{jk}=\frac{r_j^\top r_k}{\|r_j\|_2\,\|r_k\|_2}.
 \label{eq:appendix-response-gram}
\end{equation}
The first PushT example uses frame 13 as its response anchor; every other
example uses frame 1. Each comparison uses the same anchor, base actions,
and perturbations for all three models.

Writing $J_H=\partial\widehat F_H/\partial\mathbf a$, local linearization gives
$r_j\approx\epsilon J_HQq_j$. Consequently,
\begin{equation}
 K^{\rm resp}_{jk}\approx
 \frac{q_j^\top M_Qq_k}
 {\sqrt{q_j^\top M_Qq_j}\sqrt{q_k^\top M_Qq_k}},
 \qquad M_Q=Q^\top J_H^\top J_HQ.
 \label{eq:appendix-plane-geometry}
\end{equation}
For the linear dynamics in Section~\ref{sec:cgs_main_planner},
$J_H=\Gamma_H$ and hence $M_Q=Q^\top G_HQ$; for a nonlinear predictor,
$J_H^\top J_H$ is the Gauss--Newton component of the planning Hessian
(Eq.~\eqref{eq:app_nonlinear_cost_hessian}). Under ideal CGS geometry,
$G_H=cHP_H$ and $P_HQ=Q$, giving $M_Q=cH I_2$. Equal curvature in the
tested plane then preserves the circle's angular relations:
$K^{\rm resp}_{jk}\approx q_j^\top q_k=\cos(\theta_j-\theta_k)$.
Thus similarity to the ideal heatmap is a local diagnostic of planning
isotropy in this plane, rather than a measurement of full-space isotropy
or of the nonlinear residual Hessian.

\paragraph{Interpretation and results.}
The \emph{Ideal} tile is the cosine Gram matrix of a standard unit circle:
nearby directions have cosine close to $1$, orthogonal directions have
cosine $0$, and opposite directions have cosine $-1$. Its regular diagonal
bands provide the reference for every model, with a common color range
$[-1,1]$. On \textbf{PushT and Cube}, CGS retains these bands more faithfully
than LeWM and TS, whose distorted bands reveal uneven angular sensitivity.
On \textbf{TwoRooms}, TS and CGS are close to one another and closer to the
circle reference than LeWM, consistent with their similar trajectory
geometry and near-ceiling SR. On Reacher, the response patterns remain less regular than on PushT or
Cube, but the displayed CGS heatmaps recover the ideal band structure
more closely than LeWM or TS. This indicates partial improvement in local
planning sensitivity despite the stronger state dependence of the task,
consistent with the smaller and more planner-dependent gains observed in
the planning results.

\clearpage
\begin{figure}[H]
  \centering
  \includegraphics[width=\linewidth,height=0.86\textheight,keepaspectratio]{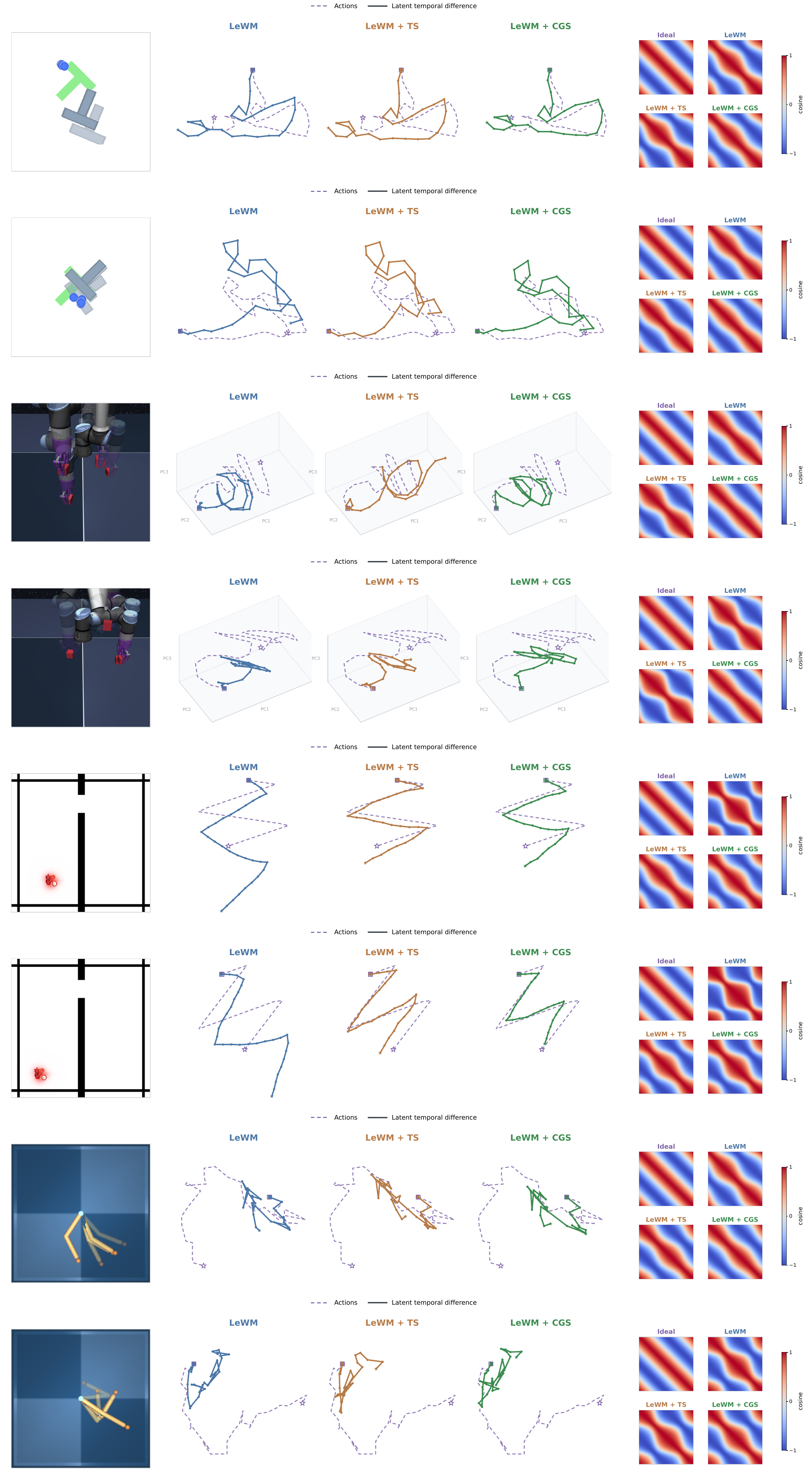}
  \caption{\textbf{Action--effect geometry across four environments.}
  From top to bottom: two examples each of PushT, Cube, TwoRooms, and
  Reacher. \textbf{Left:} environment illustrations.
  \textbf{Middle:} complete 30-frame action references (purple dashed) and
  encoded transition paths for LeWM (blue), TS (brown), and CGS (green),
  with shared action PCA in 2D or 3D; squares and stars mark starts and ends.
  Shape agreement indicates preservation of action-related directions.
  \textbf{Right:} cosine Grams of predicted endpoint responses to circular
  action perturbations; \emph{Ideal} is the unit-circle reference.
  Protocol and interpretation are given in
  Appendix~\ref{app:action-effect-geometry}.}
  \label{fig:appendix-action-effect-geometry}
  \label{fig:appendix-geometry}
\end{figure}
\clearpage

\subsubsection{Qualitative PushT Planning Rollout}
\label{app:qualitative-pusht-rollout}

Figure~\ref{fig:pusht-qualitative-rollout} shows a PushT planning
example under LeWM and LeWM+CGS. With LeWM, the planned trajectory fails to
reach the target configuration, whereas CGS produces a successful rollout. 
The corresponding decoded trajectories (decoder trained a
posteriori) remain qualitatively consistent with
the simulator evolution, indicating that the CGS representation retains
sufficient task-relevant information to support successful planning in this
example. 

\begin{figure*}[t]
  \centering
  \includegraphics[width=\textwidth]
  {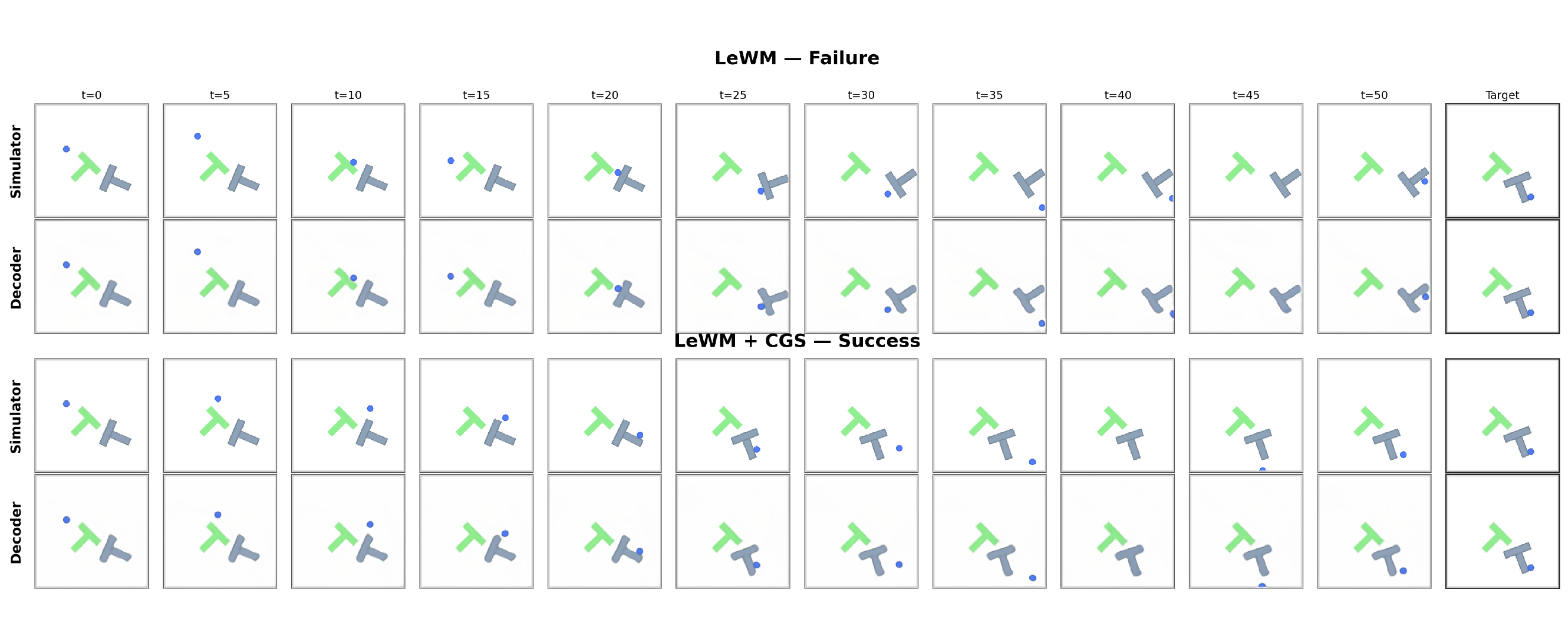}
  \caption{\textbf{Selected qualitative PushT planning rollout.}
  Simulator and decoded trajectories for a LeWM failure (top) and a
  LeWM+CGS success (bottom). Columns show the rollout over environment
  steps with the target configuration. The CGS example reaches
  the target while its decoded trajectory remains qualitatively aligned
  with the simulator rollout, illustrating that the learned representation
  retains sufficient task-relevant information for successful planning in
  this example.}
  \label{fig:pusht-qualitative-rollout}
\end{figure*}

\subsection{Full Action--Representation Probing}
\label{app:probing}

\paragraph{Protocol and probe definitions.}
We use 200 episodes per environment, split by episode into 140 fit,
20 validation, and 40 test episodes with split seed 20260814.
The probe time index follows the frame-skipped sequences used by the world
model, with frame skip 5. Thus, consecutive indexed observations
$o_t$ and $o_{t+1}$ are separated by five raw data frames, and
$a_t\in\mathbb{R}^{d_a}$ denotes the concatenated action vector associated
with that interval; $d_a$ is the resulting action dimension after
frame-skip concatenation. We define
\begin{equation}
    \Delta z_t = z_{t+1}-z_t,
    \qquad
    \Delta s_t = s_{t+1}-s_t,
    \label{eq:probe-deltas}
\end{equation}
with wrapped differences for angular targets.

Here $s_t$ denotes the physical-state features supplied to the probe,
rather than necessarily the complete simulator state. State-conditioned
probes use agent and object pose on PushT, joint positions and velocities
on Cube and Reacher, and agent position on TwoRooms. Encoders are frozen,
and $z_t$ is the global encoder token after the learned projector,
evaluated on real dataset images. Linear probes are ridge regressions with
coefficient $10^{-4}$. Nonlinear probes are two-layer width-128 GELU MLPs
selected by validation loss; reported MLP values use the ensemble prediction
from seeds 7, 17, and 27. Input and target normalizers are fit on the fit
episodes only. Unless otherwise stated, scores are pooled held-out
\mbox{$R^2=1-\mathrm{SSE}/\mathrm{SST}$}, where
$\mathrm{SSE}=\sum_i \|y_i-\hat y_i\|_2^2$ is the sum of squared prediction
errors and
$\mathrm{SST}=\sum_i \|y_i-\bar y\|_2^2$ is the total sum of squares around
the held-out target mean $\bar y$.

\paragraph{Physical identification.}
We first ask how predictable the physical displacement is from action
alone, state alone, or their combination. Let
\begin{equation}
\begin{aligned}
    \widehat{\Delta s}^{\,g}_{A,t}
        &= p^{g}_{A}(a_t), \\
    \widehat{\Delta s}^{\,g}_{S,t}
        &= p^{g}_{S}(s_t), \\
    \widehat{\Delta s}^{\,g}_{AS,t}
        &= p^{g}_{AS}(a_t,s_t),
\end{aligned}  
\end{equation}
where $g\in\{\mathrm{lin},\mathrm{MLP}\}$ denotes the probe class.
We denote their held-out $R^2$ scores by
$P_A^{g}$, $P_S^{g}$, and $P_{AS}^{g}$, respectively.
These are environment-level diagnostics and are therefore shared across
representations.

The partial action score
\begin{equation}
    R^2_{A\mid S}
    =
    1-
    \frac{
        \mathrm{SSE}\!\left(P_{AS}^{\mathrm{MLP}}\right)
    }{
        \mathrm{SSE}\!\left(P_{S}^{\mathrm{MLP}}\right)
    }
    \label{eq:partial-action-score}
\end{equation}
measures the additional predictable effect supplied by action after
conditioning on the state features. These quantities are empirical
held-out predictability diagnostics under the specified probe classes,
rather than Bayes-optimal physical ceilings.

\paragraph{Motion decoding.}
For each frozen representation, we next fit decoders
\begin{equation}
    d_{\mathrm{lin}}(\Delta z_t),
    \qquad
    d_{\mathrm{MLP}}(\Delta z_t)
\end{equation}
to predict the true physical displacement $\Delta s_t$.
Their held-out $R^2$ scores are denoted
$D_{\mathrm{lin}}$ and $D_{\mathrm{MLP}}$.
Thus $D_{\mathrm{MLP}}$ tests whether the evaluated motion information
is accessible to the nonlinear probe, whereas $D_{\mathrm{lin}}$
measures its linear accessibility from the latent difference.

\paragraph{Control-effect accessibility and agreement.}
We finally test whether $\Delta z_t$ makes physically predicted control
effects linearly accessible. Two ridge readouts are fit from $\Delta z_t$:
\begin{equation}
\begin{aligned}
    c_A(\Delta z_t)
        &\approx
        \widehat{\Delta s}^{\,\mathrm{lin}}_{A,t}, \\
    c_{AS}(\Delta z_t)
        &\approx
        \widehat{\Delta s}^{\,\mathrm{MLP}}_{AS,t}.
\end{aligned}
\end{equation}
Their held-out $R^2$ scores are denoted $C_A$ and $C_{AS}$,
respectively. Hence $C_A$ measures accessibility of the linear
action-only physical effect, whereas $C_{AS}$ uses the nonlinear
state-conditioned physical estimate as its target.

The bridge scores measure agreement in physical displacement coordinates
without fitting an additional bridge probe. We reuse the ordinary linear
motion decoder $d_{\mathrm{lin}}$ and define
\begin{equation}
    B_P
    =
    1-
    \frac{
        \sum_t
        \left\|
            d_{\mathrm{lin}}(\Delta z_t)
            -
            \widehat{\Delta s}_{P,t}
        \right\|_2^2
    }{
        \sum_t
        \left\|
            \Delta s_t-\overline{\Delta s}
        \right\|_2^2
    },
    \qquad
    P\in\{A,AS\},
    \label{eq:bridge-score}
\end{equation}
where
\begin{equation}
    \widehat{\Delta s}_{A,t}
        :=
        \widehat{\Delta s}^{\,\mathrm{lin}}_{A,t},
    \qquad
    \widehat{\Delta s}_{AS,t}
        :=
        \widehat{\Delta s}^{\,\mathrm{MLP}}_{AS,t}.
\end{equation}
We denote the resulting scores by $B_A$ and $B_{AS}$.
Unlike $C_A$ and $C_{AS}$, these are normalized agreement scores rather
than $R^2$ values of newly fitted probes.

To match the notation from the main context, the six quantities reported
there correspond to
\begin{equation}
    P_A = P_A^{\mathrm{MLP}},
    \qquad
    P_{AS} = P_{AS}^{\mathrm{MLP}},
    \qquad
    D_{\mathrm{lin}},
    \qquad
    D_{\mathrm{MLP}},
    \qquad
    C_A,
    \qquad
    B_{AS}.
    \label{eq:main-probe-mapping}
\end{equation}
The additional quantities reported here provide complementary diagnostics.
In particular, a high $D_{\mathrm{MLP}}$ together with a substantially
lower $D_{\mathrm{lin}}$ indicates that motion information is retained
but is less linearly accessible under these probe classes; this pattern is
consistent with a more nonlinear organization of the latent representation.

% ------------------------------------------------------------------
% Keep Table 7 here, with your manual table edits.
% ------------------------------------------------------------------
\begin{table*}[!htbp]
  \centering
  \vspace*{\baselineskip}
  \caption{Physical-motion identification. $A$, $S$, and $AS$ use the
  action, current state, and both. $R^2_{A\mid S}$ uses the MLP probes and the
  state-only residual as its denominator. Negative values denote performance
  below the held-out mean predictor.}
  \label{tab:probing-physical-ceilings}
  \scriptsize
  \setlength{\tabcolsep}{3.2pt}
  \resizebox{\textwidth}{!}{%
  \begin{tabular}{@{}llrrrrrrr@{}}
    \toprule
    Environment & Physical change
      & $P_A^{\rm lin}$ & $P_A^{\rm MLP}$
      & $P_S^{\rm lin}$ & $P_S^{\rm MLP}$
      & $P_{AS}^{\rm lin}$ & $P_{AS}^{\rm MLP}$
      & $R^2_{A\mid S}$ \\
    \midrule
    \multirow{3}{*}{PushT}
      & Agent position & 0.999 & 0.999 & 0.122 & 0.266 & 0.999 & 0.999 & 0.999 \\
      & Block position & 0.121 & 0.317 & 0.092 & 0.372 & 0.175 & 0.613 & 0.384 \\
      & Block angle    & -0.012 & 0.074 & 0.045 & 0.271 & 0.044 & 0.543 & 0.372 \\
    \midrule
    \multirow{6}{*}{Cube}
      & Joint position        & 0.946 & 0.966 & 0.718 & 0.834 & 0.960 & 0.997 & 0.981 \\
      & End-effector position & 0.992 & 0.994 & 0.753 & 0.870 & 0.993 & 0.999 & 0.990 \\
      & End-effector yaw      & 0.991 & 0.992 & 0.528 & 0.628 & 0.992 & 0.998 & 0.995 \\
      & Gripper opening       & 0.250 & 0.961 & 0.541 & 0.886 & 0.853 & 0.992 & 0.934 \\
      & Block position        & 0.492 & 0.991 & 0.797 & 0.863 & 0.876 & 0.997 & 0.976 \\
      & Block yaw             & 0.294 & 0.916 & 0.523 & 0.595 & 0.569 & 0.913 & 0.785 \\
    \midrule
    \multirow{2}{*}{Reacher}
      & Joint position     & 0.847 & 0.847 & 0.093 & 0.110 & 0.948 & 0.993 & 0.992 \\
      & Fingertip position & -0.006 & -0.006 & -0.001 & 0.251 & -0.005 & 0.983 & 0.977 \\
    \midrule
    TwoRoom
      & Agent position & 0.952 & 0.953 & 0.047 & 0.057 & 0.952 & 0.965 & 0.963 \\
    \bottomrule
  \end{tabular}%
  }
\end{table*}

\paragraph{Physical action-identifiability diagnostics.}
The physical-prediction results distinguish action dependence from state
dependence under the specified probe classes. PushT agent displacement
and TwoRooms agent displacement are almost completely predictable from
action alone. Reacher fingertip displacement is qualitatively different:
both the linear and nonlinear action-only probes obtain
$R^2=-0.006$, whereas the state-conditioned MLP reaches
$P_{AS}^{\mathrm{MLP}}=0.983$. Cube joint, end-effector, and yaw changes
are also highly predictable from action, while gripper and block effects
show larger linear-to-nonlinear gaps. These physical diagnostics provide
the reference needed to interpret the representation probes below.

% ------------------------------------------------------------------
% Keep Tables 8--10 here, after deleting A -> Delta z manually.
% ------------------------------------------------------------------
\begin{table*}[!htbp]
  \centering
  \captionsetup{skip=7pt}
  \caption{Complete PushT representation probes. $D$ denotes dynamic
  decoding, $C$ control-effect accessibility, and $B$ bridge agreement.
  Subscripts $A$ and $AS$ denote action-only and state-conditioned physical
  targets.}
  \label{tab:probing-pusht-complete}
  \fontsize{8}{8.4}\selectfont
  \renewcommand{\arraystretch}{0.90}
  \begin{tabular*}{\textwidth}{@{\extracolsep{\fill}}llrrrrrr@{}}
    \toprule
    Model & Target & $D_{\rm lin}$ & $D_{\rm MLP}$
      & $C_A$ & $C_{AS}$ & $B_A$ & $B_{AS}$ \\
    \midrule
    LeWM & Agent position & 0.809 & 0.888 & 0.807 & 0.808 & 0.807 & 0.808 \\
    TS   & Agent position & 0.831 & 0.891 & 0.829 & 0.830 & 0.829 & 0.830 \\
    CGS  & Agent position & 0.903 & 0.959 & 0.901 & 0.902 & 0.901 & 0.902 \\
    \addlinespace[1pt]
    LeWM & Block position & 0.813 & 0.864 & 0.791 & 0.602 & 0.256 & 0.603 \\
    TS   & Block position & 0.816 & 0.830 & 0.817 & 0.670 & 0.314 & 0.678 \\
    CGS  & Block position & 0.847 & 0.873 & 0.871 & 0.655 & 0.276 & 0.664 \\
    \addlinespace[1pt]
    LeWM & Block angle & 0.151 & 0.604 & 0.773 & 0.158 & 0.423 & 0.308 \\
    TS   & Block angle & 0.357 & 0.574 & 0.808 & 0.287 & 0.386 & 0.404 \\
    CGS  & Block angle & 0.227 & 0.601 & 0.843 & 0.143 & 0.450 & 0.296 \\
  \bottomrule
  \end{tabular*}
  \vspace{5pt}
\end{table*}

\begin{table*}[!htbp]
  \centering
  \captionsetup{skip=7pt}
  \caption{Complete Cube representation probes. CGS uses grasp-active transitions
  and XYZ controls for its geometry objective. All three models share the
  dataset, episode split, horizon, latent dimension, and probe protocol.}
  \label{tab:probing-cube-complete}
  \fontsize{8}{8.4}\selectfont
  \renewcommand{\arraystretch}{0.88}
  \begin{tabular*}{\textwidth}{@{\extracolsep{\fill}}llrrrrrr@{}}
    \toprule
    Model & Target & $D_{\rm lin}$ & $D_{\rm MLP}$
      & $C_A$ & $C_{AS}$ & $B_A$ & $B_{AS}$ \\
    \midrule
    LeWM             & Joint position & 0.742 & 0.747 & 0.741 & 0.743 & 0.723 & 0.743 \\
    TS               & Joint position & 0.622 & 0.643 & 0.629 & 0.622 & 0.631 & 0.621 \\
    CGS  & Joint position & 0.743 & 0.895 & 0.748 & 0.741 & 0.729 & 0.741 \\
    \addlinespace[1pt]
    LeWM             & End-effector position & 0.895 & 0.969 & 0.893 & 0.893 & 0.891 & 0.893 \\
    TS               & End-effector position & 0.710 & 0.776 & 0.706 & 0.708 & 0.700 & 0.709 \\
    CGS  & End-effector position & 0.931 & 0.982 & 0.930 & 0.930 & 0.927 & 0.930 \\
    \addlinespace[1pt]
    LeWM             & End-effector yaw & -0.039 & -0.023 & -0.036 & -0.040 & -0.026 & -0.042 \\
    TS               & End-effector yaw & -0.025 & -0.020 & -0.024 & -0.026 & -0.010 & -0.026  \\
    CGS  & End-effector yaw & -0.078 & 0.830 & -0.080 & -0.078 & -0.063 & -0.078 \\
    \addlinespace[1pt]
    LeWM             & Gripper opening & 0.490 & 0.905 & 0.205 & 0.494 & 0.502 & 0.498 \\
    TS               & Gripper opening & 0.528 & 0.905 & 0.296 & 0.533 & 0.543 & 0.537 \\
    CGS  & Gripper opening & 0.898 & 0.945 & 0.381 & 0.900 & 0.307 & 0.901 \\
    \addlinespace[1pt]
    LeWM             & Block position & 0.941 & 0.970 & 0.926 & 0.939 & 0.517 & 0.939  \\
    TS               & Block position & 0.946 & 0.976 & 0.813 & 0.943 & 0.513 & 0.944 \\
    CGS  & Block position & 0.918 & 0.985 & 0.948 & 0.916 & 0.526 & 0.916 \\
    \addlinespace[1pt]
    LeWM             & Block yaw & -0.013 & -0.009 & -0.054 & -0.041 & 0.757 & 0.101  \\
    TS               & Block yaw & -0.018 & -0.020 & -0.029 & -0.050 & 0.750 & 0.081 \\
    CGS  & Block yaw & -0.009 & 0.426 & -0.102 & -0.032 & 0.751 & 0.096 \\
    \bottomrule
  \end{tabular*}
\end{table*}

\begin{table*}[!htbp]
  \centering
  \captionsetup{skip=7pt}
  \caption{Complete Reacher and TwoRoom representation probes, using the same
  notation and protocol as Tables~\ref{tab:probing-pusht-complete}
  and~\ref{tab:probing-cube-complete}.}
  \label{tab:probing-reacher-tworoom-complete}
  \fontsize{8}{8.4}\selectfont
  \renewcommand{\arraystretch}{0.90}
  \begin{tabular*}{\textwidth}{@{\extracolsep{\fill}}lllrrrrrr@{}}
    \toprule
    Environment & Model & Target & $D_{\rm lin}$ & $D_{\rm MLP}$
      & $C_A$ & $C_{AS}$ & $B_A$ & $B_{AS}$ \\
    \midrule
    \multirow{6}{*}{Reacher}
      & LeWM & Joint position & 0.355 & 0.988 & 0.368 & 0.357 & 0.447 & 0.364  \\
      & TS   & Joint position & 0.347 & 0.988 & 0.371 & 0.348 & 0.447 & 0.353 \\
      & CGS  & Joint position & 0.123 & 0.988 & 0.135 & 0.127 & 0.248 & 0.134 \\
      \addlinespace[1pt]
      & LeWM & Fingertip position & 0.951 & 0.994 & 0.155 & 0.932 & 0.028 & 0.932 \\
      & TS   & Fingertip position & 0.974 & 0.994 & 0.157 & 0.960 & 0.011 & 0.959  \\
      & CGS  & Fingertip position & 0.899 & 0.993 & 0.041 & 0.883 & 0.074 & 0.884  \\
    \midrule
    \multirow{3}{*}{TwoRoom}
      & LeWM & Agent position & 0.862 & 0.956 & 0.815 & 0.829 & 0.819 & 0.827 \\
      & TS   & Agent position & 0.970 & 0.984 & 0.933 & 0.946 & 0.928 & 0.939  \\
      & CGS  & Agent position & 0.903 & 0.976 & 0.866 & 0.878 & 0.867 & 0.873 \\
    \bottomrule
  \end{tabular*}
\end{table*}

\subsubsection{Representation probes among environments.}

\textbf{PushT: a targeted positive result.}
For agent displacement, action-only and state-conditioned physical
prediction both reach $R^2=0.999$. Relative to LeWM, CGS raises
$D_{\mathrm{lin}}$ from $0.809$ to $0.903$,
$D_{\mathrm{MLP}}$ from $0.888$ to $0.959$, and
$C_A$ from $0.807$ to $0.901$, while also giving the strongest
$B_{AS}=0.902$; TS is intermediate on these agent-motion metrics.
CGS is therefore strongest for the directly action-predictable agent
effect and also improves linear motion decoding and action-effect
accessibility for block translation. The state-conditioned object results
are less uniform: TS gives the strongest block-position bridge score
($0.678$ versus $0.664$ for CGS) and block-angle bridge score
($0.404$ versus $0.296$). These results should be read together with the
lower action-only physical-prediction scores for the object targets:
accessibility of a predictable component does not imply that action alone
determines the full object displacement.

\textbf{Cube.}
All scores below are evaluated on the full held-out test split, although the
CGS geometry objective was applied only to grasp-active transitions during
representation training. CGS gives the strongest nonlinear motion decoding
on all six evaluated targets. For end-effector position,
$D_{\mathrm{lin}}$ rises from $0.895$ with LeWM to $0.931$ with CGS;
for gripper opening, it rises from $0.490$ to $0.898$, while
$B_{AS}$ rises from $0.498$ to $0.901$.
TS retains the strongest block-position agreement
($B_{AS}=0.944$). Both yaw targets remain difficult to decode linearly,
despite substantially stronger nonlinear readout for CGS.

\textbf{Reacher: motion is retained but strongly state dependent.}
For fingertip displacement, both action-only physical probes remain at
$P_A=-0.006$, whereas conditioning on joint positions and velocities
raises the nonlinear score to $P_{AS}=0.983$. Thus the evaluated
fingertip displacement is strongly configuration dependent under these
probe classes. Increasing from the linear action-only predictor to our
nonlinear action-only MLP does not repair this mismatch, while the
state-conditioned predictor does.

At the representation level, direct action-effect accessibility remains
low for fingertip motion
($C_A=0.041$--$0.157$), and CGS obtains
$B_{AS}=0.884$, below LeWM ($0.932$) and TS ($0.959$).
At the same time, all three representations retain nearly all evaluated
motion information under nonlinear decoding:
$D_{\mathrm{MLP}}\approx0.99$ for both joint and fingertip targets.
The main difference is therefore how linearly accessible that information
is from $\Delta z_t$, rather than whether it is present at all.
TS gives the strongest state-conditioned accessibility and agreement on
the fingertip target, while CGS retains the motion information with
weaker linear accessibility.

\textbf{TwoRooms: TS is strongest despite highly identifiable action effects.}
TwoRooms has high action-only and state-conditioned physical-prediction
scores, yet TS is strongest across the reported representation probes,
including
$D_{\mathrm{lin}}=0.970$,
$C_{AS}=0.946$, and
$B_{AS}=0.939$.
CGS improves over LeWM but remains below TS on these diagnostics.
High action identifiability alone is therefore not sufficient for CGS to
outperform temporal straightening.

\subsection{Mechanism-Figure Protocol and Additional Results}
\label{app:cgs_figure3_audit}

The illustration in Figure~\ref{fig:cgs_mppi_mechanism} links measured
Cube planning geometry to one-step MPPI progress. We use 32 shared
grasp-active anchors from distinct episodes and perturb the XYZ commands
uniformly across the 25-action horizon. The orthonormal basis
$U=\mathbf 1_{25}\otimes[I_3,0_{3\times2}]^\top/\sqrt{25}$ maps
three-dimensional perturbations into the full action sequence. At each
recorded action sequence, we differentiate the terminal predictor and form
\begin{equation}
\begin{aligned}
  G_{H,\mathrm{XYZ}}&:=(J_{\widehat F_H}U)^\top(J_{\widehat F_H}U),\\
  \bar G_{H,\mathrm{XYZ}}&:=\frac{G_{H,\mathrm{XYZ}}}{\operatorname{tr}(G_{H,\mathrm{XYZ}})/3}.
\end{aligned}
\label{eq:figure3_xyz_gram}
\end{equation}
The response Gram matrix $G_{H,\mathrm{XYZ}}$ measures local sensitivity to
XYZ actions. Trace normalization fixes the average eigenvalue at one,
allowing spectral-shape comparisons at a common scale. Below, write
$\bar G:=\bar G_{H,\mathrm{XYZ}}$. Since $\bar G$ is positive semidefinite,
$\sigma_{\min}(\bar G)=\lambda_{\min}(\bar G)$ measures its weakest
normalized XYZ curvature.

Let $V$ contain the eigenvectors of $\bar G$ in descending eigenvalue
order. We place the target along its weakest direction,
$u^\star=(0,0,1)^\top$, and evaluate the quadratic cost
\begin{equation}
  \widehat C(u)=(u-u^\star)^\top V^\top\bar G V(u-u^\star),\qquad
  q_K=\frac{\|\widehat\mu_K-u^\star\|_2}{\|u^\star\|_2}.
\end{equation}
MPPI starts from $\mathcal N(0,I_3)$ with temperature $\tau=2$;
$\widehat\mu_K$ is its weighted sample mean. We call $q_K$ the
$K$-sample contraction factor: the ratio of post-update to initial
mean-error norms, with initial mean zero. It is a random error ratio,
not an empirical Jacobian norm, and can exceed one.
Each anchor uses 512
Gaussian batches, with shared sample coordinates across models and
prefix budgets $K\in\{1,2,4,8,16,32,64,128,256\}$. Thus $\Pr(q_K<1)$ is the probability
that one update reduces target error in this quadratic problem. Curves
average over anchors, with 95\% bootstrap intervals.

The quadratic Hessian is $2V^\top\bar G V$, so unit proposal covariance
and $\tau=2$ give the population error map $(I_3+V^\top\bar G V)^{-1}$.
Because the initial error lies along the weakest-curvature direction,
the population counterpart of $q_K$ is
\begin{equation}
  q_\infty=\frac{1}{1+\sigma_{\min}(\bar G_{H,\mathrm{XYZ}})}.
  \label{eq:figure3_population_contraction}
\end{equation}
For full-rank $\bar G$, this equals the contraction factor $q_{\rm M}$
in Theorem~\ref{thm:cgs_mppi_population} applied to this calibrated
three-dimensional quadratic probe, rather than the full nonlinear
planning problem. Thus, larger curvature along the weakest direction
produces greater population progress.
Median $q_\infty$ is $0.841$, $0.835$, and $0.685$ for LeWM, TS,
and CGS. At $K=32$, their one-update progress probabilities are
$75.1\%$, $80.3\%$, and $91.3\%$. This illustration shows how more
balanced action sensitivity can improve MPPI updates at a fixed sample budget.

\subsection{DINO-WM Implementation and Additional Results}
\label{app:dino-analysis}

This section provides the implementation details for the DINO-WM
comparisons in Figure~\ref{fig:pusht-dino-lewm} and reports additional
results on the DINO-WM benchmark settings used by Temporal
Straightening~\citep{wang2026ts}. Following that setup, we evaluate
Base (w/o TS or CGS), TS, and CGS on PushT, Wall, PointMaze--UMaze, and
PointMaze--Medium, using both global CLS and spatial Patch
representations. Within each representation type, all three objectives
share the same frozen DINOv2 backbone, data, predictor architecture,
and planning protocol.

\paragraph{Representations and geometry objectives.}
Temporal Straightening was evaluated with frozen DINOv2 features and
trainable projectors, as well as with ResNet encoders learned from
scratch~\citep{wang2026ts}. Our DINO-based experiments follow the
spatial DINO-WM setting of DINO-WM and TS~\citep{zhou2025dinowm,
wang2026ts}: a frozen DINOv2-S/14 backbone~\citep{oquabdinov22024}
produces a $14\times14$ grid of 384-dimensional patch tokens, which are
projected to eight dimensions per token. The predictor rolls out this
projected spatial grid. TS and CGS regularize a learned
128-dimensional aggregation of patch-token differences, while prediction
and planning retain the full spatial representation.

Our CLS variant instead uses the backbone's 384-dimensional CLS token,
followed by an identity-initialized trainable adapter, and applies the
geometry objective directly to the resulting global latent differences.
The global DINO variant in the original TS study obtains a single vector
by projecting patch features~\citep{wang2026ts}; our CLS variant
therefore differs in the backbone output to which the geometry objective
is applied. In comparison, LeWM learns its visual encoder end to end and
predicts a compact 192-dimensional projected CLS latent
~\citep{maes2026lewm}.

\paragraph{Training protocol.}
\label{app:dino-training-details}
Prediction targets use stop-gradient in the DINO-WM variants. The
projector or CLS adapter, predictor, action encoder, and geometry head
are trainable, while the DINOv2 backbone remains frozen. On PushT, we
use a 90/10 trajectory split, two training epochs, and unit gradient
clipping. Representation learning rates are $10^{-6}$ for Base and
$10^{-5}$ for TS and CGS; the predictor and action encoder use a learning
rate of $5\times10^{-4}$ with weight decay $10^{-2}$. Models condition
on three observations and predict five-step transitions. The PushT CGS
weights are $0.1$ for Patch and CLS.

\paragraph{Planning protocol.}
DINO-WM sampling costs average squared visual prediction errors over
tokens and channels. On PushT, we additionally predict proprioception
and add its MSE to the visual cost with unit weight; proprioception is
not included in either TS or CGS regularization. The GD comparison uses
the optimization schedule in Appendix~\ref{app:planning-datasets}, with
per-step error given by visual MSE plus proprioceptive MSE.

Sampling-based planners use $I=30$ optimization steps, and CEM retains
an elite fraction of $0.25$. GD uses 100 Adam steps. Within each
environment, a single MPPI temperature is shared across CLS/Patch and
Base/TS/CGS: $0.015$ on PushT, $5\times10^{-6}$ on Wall,
$10^{-3}$ on PointMaze--UMaze, and $3\times10^{-3}$ on
PointMaze--Medium. These temperatures are fixed across sampling budgets
and representation objectives. Evaluation uses three evaluation seeds and
50 episodes per seed.

\begin{table*}[!htbp]
  \centering
  \scriptsize
  \setlength{\tabcolsep}{3pt}
  \vspace*{\baselineskip}
  \caption{\textbf{Additional DINO-WM benchmark results.}
  Goal-reaching success rate (\%) for Base, TS, and CGS using frozen
  DINOv2 CLS or Patch representations on the DINO-WM benchmark settings
  used by Temporal Straightening. CEM and MPPI use $I=30$ optimization
  steps, and GD uses 100 Adam steps. A single environment-level MPPI
  temperature is shared across representation types and training
  objectives. Values are means $\pm$ sample SD over three evaluation seeds with
  50 evaluation episodes per seed; PushT additionally uses
  proprioception. Bold marks the best objective for a fixed
  representation type, planner, environment, and sampling budget.}
  \label{tab:dino-full}
  \resizebox{\textwidth}{!}{%
  \begin{tabular}{@{}ll*{12}{c}@{}}
    \toprule
    \multirow{3}{*}{World model / objective}
      & \multirow{3}{*}{Planning method}
      & \multicolumn{3}{c}{Wall}
      & \multicolumn{3}{c}{PointMaze--UMaze}
      & \multicolumn{3}{c}{PointMaze--Medium}
      & \multicolumn{3}{c}{PushT} \\
    \cmidrule(lr){3-5}\cmidrule(lr){6-8}
    \cmidrule(lr){9-11}\cmidrule(l){12-14}
      & & \multicolumn{3}{c}{Sampling budget}
      & \multicolumn{3}{c}{Sampling budget}
      & \multicolumn{3}{c}{Sampling budget}
      & \multicolumn{3}{c}{Sampling budget} \\
      & & $K{=}32$ & $K{=}64$ & $K{=}128$
      & $K{=}32$ & $K{=}64$ & $K{=}128$
      & $K{=}32$ & $K{=}64$ & $K{=}128$
      & $K{=}32$ & $K{=}64$ & $K{=}128$ \\
    \midrule

    \multirow{3}{*}{DINO-WM (CLS)}
      & CEM
      & $74.7{\pm}3.1$ & $76.0{\pm}5.3$ & $74.7{\pm}6.1$
      & $88.0{\pm}3.5$ & $92.7{\pm}3.1$ & $90.7{\pm}5.0$
      & $85.3{\pm}4.2$ & $86.0{\pm}2.0$ & $84.7{\pm}7.0$
      & $47.3{\pm}6.4$ & $56.0{\pm}8.7$ & $63.3{\pm}5.0$ \\
      & MPPI
      & $84.0{\pm}5.3$
      & $\mathbf{86.7{\pm}6.4}$
      & $\mathbf{89.3{\pm}5.0}$
      & $92.0{\pm}2.0$ & $93.3{\pm}2.3$ & $96.7{\pm}4.2$
      & $90.7{\pm}5.0$ & $90.7{\pm}2.3$ & $92.7{\pm}3.1$
      & $31.3{\pm}6.1$ & $35.3{\pm}4.6$ & $46.7{\pm}3.1$ \\
      & GD
      & \multicolumn{3}{c}{$55.3{\pm}2.3$}
      & \multicolumn{3}{c}{$62.0{\pm}5.3$}
      & \multicolumn{3}{c}{$56.0{\pm}4.0$}
      & \multicolumn{3}{c}{$46.0{\pm}2.0$} \\
    \addlinespace[1pt]

    \multirow{3}{*}{+TS (CLS)}
      & CEM
      & $79.3{\pm}6.1$ & $78.7{\pm}4.2$ & $77.3{\pm}8.3$
      & $86.0{\pm}2.0$ & $89.3{\pm}5.0$ & $90.0{\pm}2.0$
      & $87.3{\pm}1.2$ & $84.7{\pm}3.1$ & $89.3{\pm}4.6$
      & $41.3{\pm}5.0$ & $38.0{\pm}2.0$ & $47.3{\pm}4.2$ \\
      & MPPI
      & $\mathbf{88.7{\pm}5.8}$
      & $83.3{\pm}7.6$
      & $84.0{\pm}4.0$
      & $91.3{\pm}4.2$ & $94.7{\pm}1.2$ & $95.3{\pm}3.1$
      & $88.7{\pm}4.2$ & $88.7{\pm}6.1$ & $94.0{\pm}2.0$
      & $29.3{\pm}4.2$ & $29.3{\pm}10.3$ & $29.3{\pm}5.8$ \\
      & GD
      & \multicolumn{3}{c}{$68.0{\pm}4.0$}
      & \multicolumn{3}{c}{$66.7{\pm}3.1$}
      & \multicolumn{3}{c}{$65.3{\pm}8.1$}
      & \multicolumn{3}{c}{$34.0{\pm}14.0$} \\
    \addlinespace[1pt]

    \multirow{3}{*}{+CGS (CLS)}
      & CEM
      & $\mathbf{83.3{\pm}8.1}$
      & $\mathbf{81.3{\pm}7.0}$
      & $\mathbf{83.3{\pm}3.1}$
      & $\mathbf{93.3{\pm}3.1}$
      & $\mathbf{96.0{\pm}4.0}$
      & $\mathbf{96.0{\pm}3.5}$
      & $\mathbf{92.0{\pm}2.0}$
      & $\mathbf{92.7{\pm}3.1}$
      & $\mathbf{93.3{\pm}2.3}$
      & $\mathbf{55.3{\pm}4.6}$
      & $\mathbf{65.3{\pm}6.1}$
      & $\mathbf{66.7{\pm}6.4}$ \\
      & MPPI
      & $86.0{\pm}6.9$
      & $84.7{\pm}5.8$
      & $81.3{\pm}6.1$
      & $\mathbf{96.7{\pm}3.1}$
      & $\mathbf{96.0{\pm}2.0}$
      & $\mathbf{97.3{\pm}3.1}$
      & $\mathbf{94.7{\pm}1.2}$
      & $\mathbf{95.3{\pm}2.3}$
      & $\mathbf{95.3{\pm}3.1}$
      & $\mathbf{40.7{\pm}4.2}$
      & $\mathbf{46.7{\pm}2.3}$
      & $\mathbf{56.7{\pm}4.6}$ \\
      & GD
      & \multicolumn{3}{c}{$\mathbf{81.3{\pm}7.6}$}
      & \multicolumn{3}{c}{$\mathbf{81.3{\pm}4.2}$}
      & \multicolumn{3}{c}{$\mathbf{79.3{\pm}6.1}$}
      & \multicolumn{3}{c}{$\mathbf{57.3{\pm}8.1}$} \\
    \midrule

    \multirow{3}{*}{DINO-WM (Patch)}
      & CEM
      & $90.7{\pm}2.3$ & $81.3{\pm}8.1$ & $84.7{\pm}6.4$
      & $\mathbf{98.0{\pm}2.0}$
      & $\mathbf{98.7{\pm}1.2}$
      & $\mathbf{99.3{\pm}1.2}$
      & $\mathbf{94.7{\pm}1.2}$
      & $\mathbf{92.7{\pm}2.3}$
      & $\mathbf{98.7{\pm}1.2}$
      & $75.3{\pm}2.3$ & $81.3{\pm}4.2$ & $83.3{\pm}1.2$ \\
      & MPPI
      & $\mathbf{90.0{\pm}2.0}$
      & $89.3{\pm}2.3$
      & $86.0{\pm}5.3$
      & $\mathbf{99.3{\pm}1.2}$
      & $96.7{\pm}1.2$
      & $98.7{\pm}1.2$
      & $94.0{\pm}4.0$
      & $\mathbf{97.3{\pm}3.1}$
      & $\mathbf{97.3{\pm}2.3}$
      & $68.0{\pm}4.0$ & $62.0{\pm}4.0$
      & $75.3{\pm}5.0$ \\
      & GD
      & \multicolumn{3}{c}{$83.3{\pm}3.1$}
      & \multicolumn{3}{c}{$94.0{\pm}4.0$}
      & \multicolumn{3}{c}{$\mathbf{88.7{\pm}4.2}$}
      & \multicolumn{3}{c}{$78.0{\pm}5.3$} \\
    \addlinespace[1pt]

    \multirow{3}{*}{+TS (Patch)}
      & CEM
      & $\mathbf{91.3{\pm}1.2}$
      & $87.3{\pm}2.3$
      & $\mathbf{86.0{\pm}2.0}$
      & $94.7{\pm}3.1$ & $97.3{\pm}3.1$ & $95.3{\pm}1.2$
      & $88.7{\pm}3.1$ & $90.7{\pm}1.2$ & $92.0{\pm}3.5$
      & $76.7{\pm}5.0$ & $\mathbf{84.7{\pm}4.6}$
      & $\mathbf{87.3{\pm}4.2}$ \\
      & MPPI
      & $89.3{\pm}1.2$
      & $\mathbf{91.3{\pm}2.3}$
      & $90.7{\pm}3.1$
      & $97.3{\pm}3.1$ & $96.7{\pm}1.2$ & $98.0{\pm}2.0$
      & $94.0{\pm}2.0$ & $93.3{\pm}3.1$ & $95.3{\pm}2.3$
      & $\mathbf{75.3{\pm}4.2}$
      & $\mathbf{80.0{\pm}8.7}$ & $78.7{\pm}4.6$ \\
      & GD
      & \multicolumn{3}{c}{$86.7{\pm}2.3$}
      & \multicolumn{3}{c}{$\mathbf{96.0{\pm}2.0}$}
      & \multicolumn{3}{c}{$87.3{\pm}4.6$}
      & \multicolumn{3}{c}{$\mathbf{84.0{\pm}5.3}$} \\
    \addlinespace[1pt]

    \multirow{3}{*}{+CGS (Patch)}
      & CEM
      & $88.0{\pm}4.0$
      & $\mathbf{88.7{\pm}5.0}$
      & $85.3{\pm}6.4$
      & $96.0{\pm}2.0$ & $96.0{\pm}2.0$ & $98.0{\pm}0.0$
      & $94.0{\pm}2.0$ & $86.0{\pm}2.0$ & $92.0{\pm}3.5$
      & $\mathbf{80.0{\pm}3.5}$ & $81.3{\pm}2.3$
      & $84.7{\pm}5.0$ \\
      & MPPI
      & $88.0{\pm}7.2$
      & $88.7{\pm}1.2$
      & $\mathbf{92.7{\pm}3.1}$
      & $98.7{\pm}1.2$
      & $\mathbf{99.3{\pm}1.2}$
      & $\mathbf{100.0{\pm}0.0}$
      & $\mathbf{94.7{\pm}1.2}$
      & $95.3{\pm}1.2$
      & $96.0{\pm}2.0$
      & $64.7{\pm}4.2$ & $72.7{\pm}6.4$
      & $\mathbf{80.7{\pm}2.3}$ \\
      & GD
      & \multicolumn{3}{c}{$\mathbf{88.7{\pm}4.2}$}
      & \multicolumn{3}{c}{$\mathbf{96.0{\pm}3.5}$}
      & \multicolumn{3}{c}{$88.0{\pm}2.0$}
      & \multicolumn{3}{c}{$76.7{\pm}1.2$} \\
    \bottomrule
  \end{tabular}%
  }
\end{table*}

\paragraph{Additional DINO-WM benchmark results.}
Table~\ref{tab:dino-full} shows that the effect of geometry shaping
depends on both the underlying representation and the planner. CGS gives
particularly consistent improvements with the compact CLS representation.
At $K=128$, relative to Base CLS, CGS improves CEM on all four
environments, improves GD on all four, and improves MPPI on
PointMaze--UMaze, PointMaze--Medium, and PushT. Wall MPPI is the
exception, where the Base representation performs better.
The Patch representation begins from substantially stronger planning
performance and leaves less headroom for geometry regularization. Its
results are correspondingly more mixed: CGS improves several
planner--task combinations, including PushT CEM and MPPI, Wall MPPI,
and PointMaze--UMaze MPPI, while Base or TS remains stronger in other
settings. These additional experiments therefore show that CGS can be
applied to both compact global and spatial DINO-WM representations,
while its benefit is not uniform across every representation--planner
combination.

\paragraph{Cross-architecture planning time.}
We measure DINO-WM (Patch)+CGS and LeWM+CGS on the same GPU, using episode batch size 50 for GD. Times sum the two planner solves and are averaged per
episode; model loading, environment execution, and rendering are
excluded. Sampling uses $K=128$ and $I=30$, while GD uses 100
optimization steps. The DINO-WM PushT planner uses proprioception and
MPPI temperature $0.015$; LeWM uses image latents and temperature $4$.
Figure~\ref{fig:pusht-dino-lewm} combines these timing measurements
with the success rates from the main evaluations.

\begin{table}[!htbp]
  \centering
  \vspace*{\baselineskip}
  \caption{\textbf{PushT planner-solve time at $K=128$.}
  Seconds per episode measured in a timing
  experiment. Times include both planner solves and exclude loading,
  environment execution, and rendering. Ratios characterize the stated
  implementations and batching settings.}
  \label{tab:pusht-dino-lewm-current}
  \setlength{\tabcolsep}{10pt}
  \begin{tabular}{@{}lrrr@{}}
    \toprule
    Model & CEM & MPPI & GD \\
    \midrule
    DINO-WM (Patch)+CGS & 86.852 & 86.959 & 6.516 \\
    LeWM+CGS & 5.163 & 1.850 & 0.161 \\
    DINO/LeWM & $16.8\times$ & $47.0\times$ & $40.5\times$ \\
    \bottomrule
  \end{tabular}
\end{table}

\subsection{Training Ablations}
\label{app:training-ablations}

We vary the ramp fraction, history length, SIGReg weight and CGS weight
on PushT, evaluating each checkpoint with MPPI, CEM, and GD
(Figure~\ref{fig:training-ablation-small-multiples}).
Sampling uses $K=128$, $I=30$, MPPI temperature $4$, and 32 CEM elites;
GD uses 100 steps.
The reference uses $\lambda_{\mathrm{CGS}}=0.01$, a 5\% linear ramp,
$N=3$, and $\lambda_{\mathrm{SIGReg}}=0.09$.

\begin{figure*}[!htbp]
  \centering
  \includegraphics[width=\textwidth]{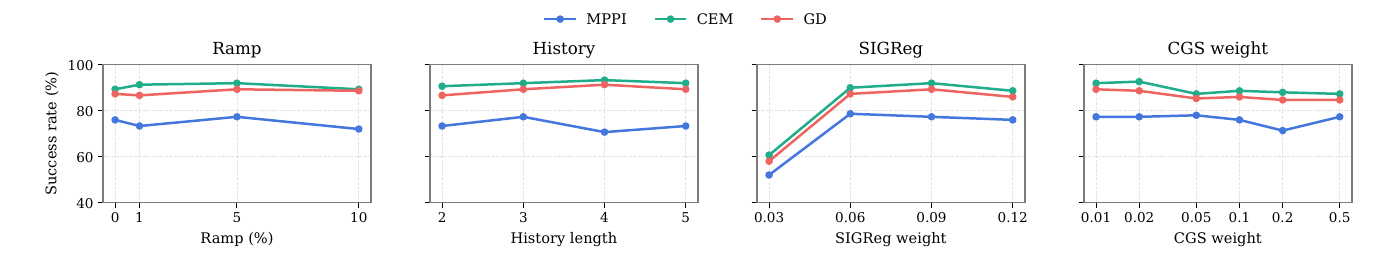}
  \caption{\textbf{PushT training ablations.}
Ramp fraction, history length, SIGReg weight, and CGS weight, evaluated
with MPPI, CEM, and GD.
Means use three evaluation seeds with 50 episodes each; GD is independent of $K$.
Reference points reuse the main PushT checkpoint with
$\lambda_{\mathrm{CGS}}=0.01$, a 5\% ramp, $N=3$, and
$\lambda_{\mathrm{SIGReg}}=0.09$.}
  \label{fig:training-ablation-small-multiples}
\end{figure*}

The 5\% ramp gives the strongest overall performance across the three
planners, with modest differences across ramp choices.
Following LeWM, $N=3$ provides an economical history setting: CEM and GD
gain slightly at $N=4$, while MPPI is strongest at $N=3$.
For SIGReg, weight $0.03$ substantially reduces success, indicating potential representation collapse with small SIGReg strength, while
$0.06$--$0.12$ forms a comparatively stable range around the default
$0.09$.
The CGS-weight sweep is broadly stable for all three planners over the tested
range, and goes down when CGS-weight becomes larger, suggesting conflicts with representation geometry induced by other loss terms, so we choose a conservative weight
$\lambda_{\mathrm{CGS}}=0.01$ on PushT.

\subsubsection{Can Explicit Temporal Straightening Further Benefit LeWM?}
\label{app:lewm-ts-cosine-effect}

\citet[Appendix H]{maes2026lewm} report that LeWM develops temporal
straightening on PushT without an explicit TS loss. Our diagnostic shows
that adding TS further raises the final cosine of consecutive latent
displacements from $0.548$ to $0.613$ on PushT slightly, with a small change in planning success rate in Table~\ref{tab:lewm-full-sweep-pusht}. Cube and TwoRooms
show clearer geometric changes ($0.262\to0.666$ and $-0.341\to-0.168$)
and planning gains (MPPI: $59.3\%\to75.3\%$ and $92.0\%\to97.3\%$).
Reacher remains near zero cosine ($-0.049\to-0.003$), indicating that
explicit TS produces only a small additional straightening effect in this
environment. These results support explicit TS as a useful addition to LeWM,
with the clearest gains on Cube and TwoRooms.
Figure~\ref{fig:lewm-ts-cosine-effect} tracks the cosine similarity of
consecutive latent displacements during training on fixed validation windows
for the LeWM and LeWM+TS variants in Section~\ref{sec:main-results}.

\begin{figure*}[!htbp]
  \centering
  \includegraphics[width=\textwidth]{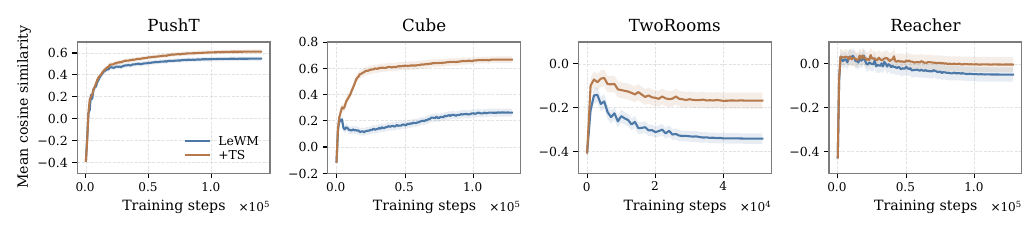}
  \caption{\textbf{Temporal straightening in LeWM.}
  Mean cosine similarity of consecutive latent displacements on fixed
  validation windows. Lines follow one training run per method and environment;
  bands show $\pm1.96$ standard errors across probe episodes.
  TS produces higher final mean cosine similarity on all four environments.}
  \label{fig:lewm-ts-cosine-effect}
\end{figure*}

\subsection{Shared MPPI Temperature Selection}
\label{app:pusht-temperature-tuning}

\begin{figure}[!b]
  \centering
  \begin{minipage}{0.92\textwidth}
    \centering
    \includegraphics[width=\linewidth]{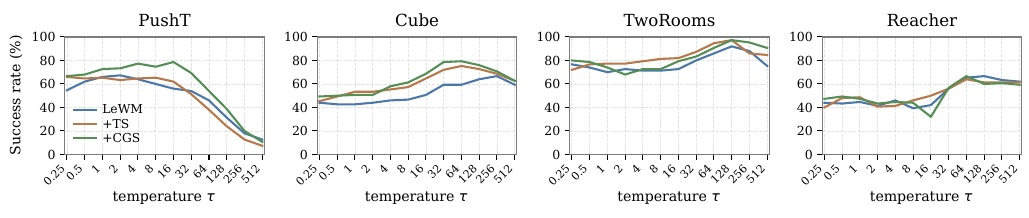}
  \end{minipage}

  \begin{minipage}{0.90\textwidth}
    \centering
    \begingroup
    \fontsize{8}{9.5}\selectfont
    \setlength{\tabcolsep}{3pt}
    \renewcommand{\arraystretch}{1.06}
    \begin{tabular*}{\linewidth}{@{\extracolsep{\fill}}lrrrr@{}}
    \toprule
    Model & PushT & Cube & TwoRooms & Reacher \\
    \midrule
    LeWM & $50.76$ & $52.15$ & $77.64$ & $51.00$ \\
    LeWM + TS & $49.82\,{\scriptstyle(-0.94)}$ & $61.45\,{\scriptstyle(+9.30)}$ & $\mathbf{83.42}\,{\scriptstyle(+5.79)}$ & $\mathbf{51.64}\,{\scriptstyle(+0.64)}$ \\
    LeWM + CGS & $\mathbf{60.48}\,{\scriptstyle(+9.73)}$ & $\mathbf{63.64}\,{\scriptstyle(+11.48)}$ & $81.58\,{\scriptstyle(+3.94)}$ & $50.73\,{\scriptstyle(-0.27)}$ \\
    \bottomrule
    \end{tabular*}
    \endgroup
  \end{minipage}
  \caption{\textbf{MPPI temperature sensitivity and temperature-selection nAUC at $K=128$.}
  \textbf{Top:} Mean success over three evaluation seeds (50 episodes each, $I=30$),
  using one shared temperature per environment and a 0--100\% axis.
  \textbf{Bottom:} nAUC over the displayed $\tau$ values on a $\log_2$ scale;
  parentheses give percentage-point differences from LeWM and bold marks the
  best representation.}
  \label{fig:temperature-k128}
\end{figure}

Figure~\ref{fig:temperature-k128} sweeps the MPPI temperature from
$0.25$ to $512$ at $K=128$ and $I=30$ on all four environments, and summarizes planning stability across wide temperature range in the bottom table.
Temperature controls the concentration of the MPPI importance weights.
At very small values, the weights collapse onto one or a few candidates,
producing an effectively single-mode update that is high-variance and brittle.
At very large values, the weights approach uniformity, so low-quality
candidates contribute substantially and dilute the improvement signal.
For fair selection, we examine the LeWM, LeWM+TS, and LeWM+CGS curves jointly
within each environment and choose one shared temperature from a broad stable
region where all three methods achieve competitive success: $4$ on PushT,
$64$ on Cube, $128$ on TwoRooms, and $64$ on Reacher. Each selected value is
fixed across representations and all sampling budgets in the main results and
Tables~\ref{tab:lewm-full-sweep-pusht}--\ref{tab:lewm-full-sweep-reacher}. nAUC results summarized in the bottom table also shows the stable improvement of CGS on pushT and Cube, comparable stable improvement of CGS with TS on TwoRooms, also similar planning performance across temperature of all three methods on Reacher.

\subsection{Refinement Efficiency and Matched BC Priors on Other Environments}
\label{app:other-iteration-prior}

\paragraph{Refinement efficiency.}
\label{app:planner-iteration-curves}

We evaluate refinement speed at fixed candidate budget $K=128$, using
32 CEM elites and the environment-specific MPPI temperatures from
Appendix~\ref{app:planning-datasets}.
We vary the number of optimization steps over
$I\in\{1,2,5,10,15,20,25,30\}$, with each replanning solve initialized independently.
Each run evaluates the final proposal, so success need not increase
monotonically with $I$. We also summarize the complete no-prior CEM and MPPI refinement curves using nAUC in Table~\ref{tab:app-iteration-curve-summary}.

On Cube, CGS improves both CEM and MPPI within the first few optimization
steps, consistent with its stronger finite-budget planning performance.
On TwoRooms, TS and CGS both reach high success rapidly, with little room
for further separation as the task approaches saturation.
On Reacher, refinement differences are smaller than on Cube and
TwoRooms, but CGS attains the highest aggregate refinement nAUC for
both CEM and MPPI. The gains are modest and not monotonic across
iterations, consistent with a more planner- and budget-dependent
effect.

\begin{figure*}[!htbp]
  \centering
  \includegraphics[width=\textwidth]{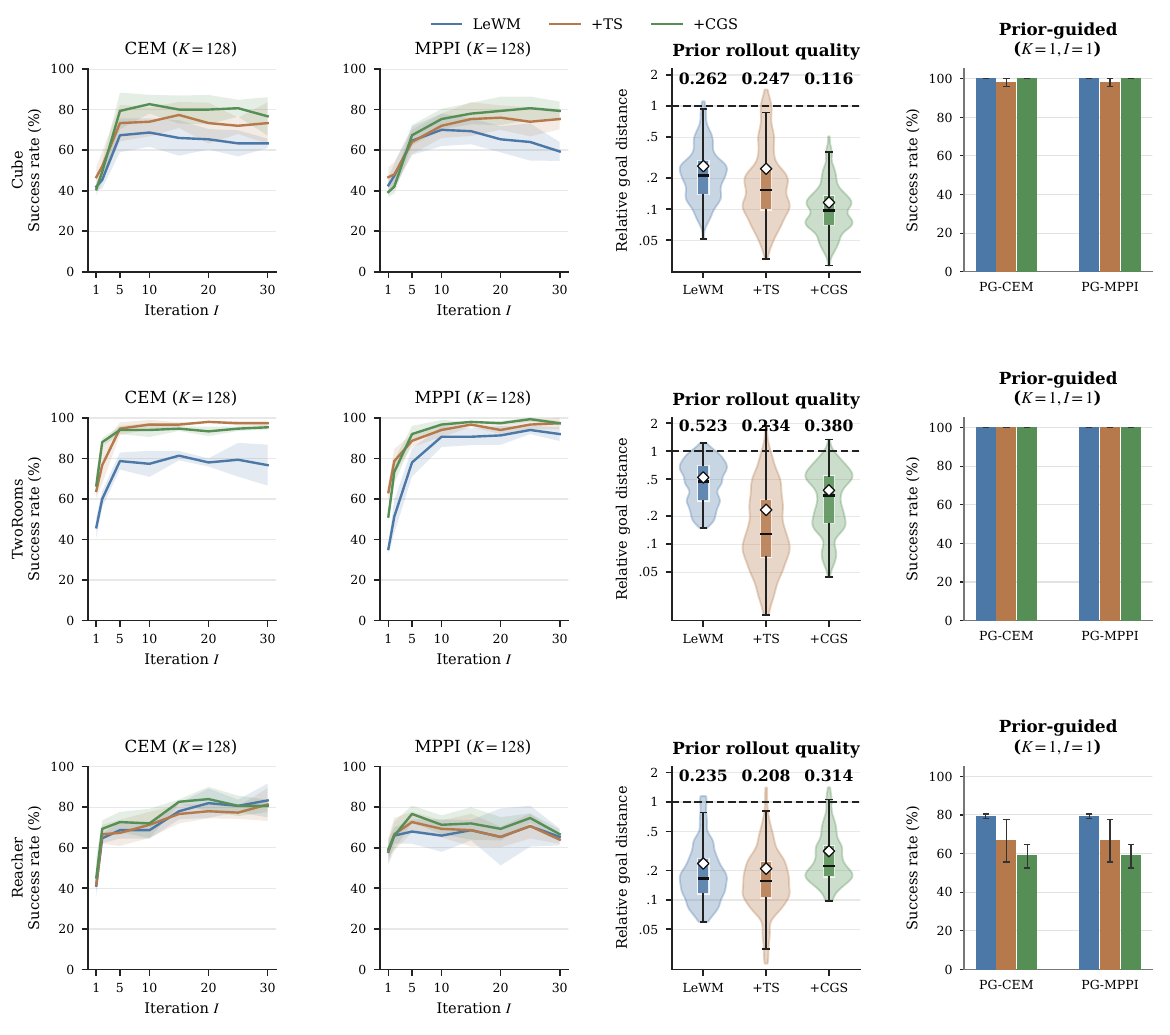}
  \caption{\textbf{Refinement efficiency and matched BC prior quality on Cube, TwoRooms,
  and Reacher.} Left: CEM and MPPI success over $I$
  (iterations) at $K=128$; bands show sample
  SD over three evaluation seeds. Middle: first-solve prior endpoint distance divided
  by zero-action endpoint distance, on the same 150 starts; diamonds and
  labels show means. Right: prior-only success at $K=1$, $I=1$, with the
  prior mean as the sole candidate; error bars show sample SD.}
  \label{fig:other-iteration-prior}
\end{figure*}

\begin{table}[!htbp]
\centering
\caption{\textbf{Refinement-iteration nAUC at $K=128$.}
$I\in\{1,2,5,10,15,20,25,30\}$. nAUC integrates over linear $I$.
Parentheses give percentage-point differences from the matching
LeWM baseline. Bold marks the best representation for each planner and environment.
All entries summarize three evaluation seeds. }
\label{tab:app-iteration-curve-summary}
\begingroup
\fontsize{8}{9.5}\selectfont
\setlength{\tabcolsep}{3pt}
\renewcommand{\arraystretch}{1.06}
\begin{tabular*}{\linewidth}{@{\extracolsep{\fill}}llrrrr@{}}
\toprule
Planner & Model & PushT & Cube & TwoRooms & Reacher \\
\midrule
CEM & LeWM & $73.93$ & $64.00$ & $76.87$ & $75.16$ \\
 & LeWM + TS & $72.23\,{\scriptstyle(-1.70)}$ & $71.98\,{\scriptstyle(+7.98)}$ & $\mathbf{94.85}\,{\scriptstyle(+17.98)}$ & $73.92\,{\scriptstyle(-1.24)}$ \\
 & LeWM + CGS & $\mathbf{80.25}\,{\scriptstyle(+6.32)}$ & $\mathbf{77.40}\,{\scriptstyle(+13.40)}$ & $93.34\,{\scriptstyle(+16.47)}$ & $\mathbf{77.60}\,{\scriptstyle(+2.44)}$ \\
\midrule
MPPI & LeWM & $64.85$ & $64.36$ & $86.06$ & $67.23$ \\
 & LeWM + TS & $67.06\,{\scriptstyle(+2.21)}$ & $70.70\,{\scriptstyle(+6.34)}$ & $92.89\,{\scriptstyle(+6.83)}$ & $68.39\,{\scriptstyle(+1.16)}$ \\
 & LeWM + CGS & $\mathbf{75.71}\,{\scriptstyle(+10.86)}$ & $\mathbf{73.72}\,{\scriptstyle(+9.37)}$ & $\mathbf{94.49}\,{\scriptstyle(+8.44)}$ & $\mathbf{71.44}\,{\scriptstyle(+4.21)}$ \\
\bottomrule
\end{tabular*}
\endgroup
\end{table}

\paragraph{Matched BC-prior quality.}
\label{app:prior-effectiveness}

We separately evaluate the quality of the matched BC prior before
sampling-based refinement.
At the first solve on 150 shared starts, we measure
\begin{equation}
  \rho =
  \frac{\lVert \widehat F_H(\mu_p)-z_g\rVert_2}
       {\lVert \widehat F_H(0)-z_g\rVert_2},
  \label{eq:prior-radius-ratio}
\end{equation}
where $\mu_p$ is the prior-predicted action sequence.
Lower values indicate greater predicted goal progress relative to that
representation's own zero-action baseline.
Prior-only runs execute $\mu_p$ as the sole candidate at $K=1$, $I=1$,
so PG-CEM and PG-MPPI are identical in this setting.
Each prior is trained separately on its frozen representation, as
described in Appendix~\ref{app:pg-training-details}.

On Cube, CGS yields the strongest prior rollout quality, reducing the
mean normalized goal-distance ratio from $0.262$ for LeWM to $0.116$,
while prior-only success is already near saturation.
On TwoRooms, all three matched priors reach $100\%$ prior-only success,
so initialization quality has little room to affect the final result.
On Reacher, the TS- and CGS-matched priors are weaker than the LeWM prior
at initialization. However, across the sample-budget sweep, these differences
narrow as sampling-based refinement incorporates more model-evaluated
candidates. Thus, weaker prior initialization can be partially corrected by
subsequent refinement, separating prior quality from the representation's
performance after model-based search.

\clearpage
\endgroup
\section{Proofs and Extended Theory}
\label{app:cgs_full_theory}

Sections~\ref{app:cgs_cosine_gram}--\ref{app:cgs_horizon_proof}
derive transition geometry and the planning Hessian from CGS, including
the square and rectangular action cases.
Section~\ref{app:cgs_mpc_background} fixes the quadratic planning problem
and Gaussian proposal coordinates. Sections~\ref{app:cgs_mppi_population}--\ref{app:cgs_ts_style}
then analyze MPPI, CEM, and gradient descent, respectively.
Section~\ref{app:cgs_prediction_constraint} discusses other training
objectives. The protocol for Figure~\ref{fig:cgs_mppi_mechanism} is reported
with the additional experimental results in
Appendix~\ref{app:cgs_figure3_audit}.

Throughout this appendix, we use the main context convention
\(C(\mathbf a)=\tfrac12\|z_H-z_g\|_2^2\), so
\(G_H=\nabla_{\mathbf a}^2C=\Gamma_H^\top\Gamma_H\) for linear rollouts.
The corresponding MPPI scale is \(s=\sigma_{\mathrm p}^2cH/\tau\).
Multiplying the cost and temperature by the same positive constant
preserves MPPI weights; positive cost rescaling leaves CEM elite selection
unchanged.

\subsection{From CGS Loss to Transition Geometry}
\label{app:cgs_cosine_gram}

The population statements below use two independent copies of a transition;
they do not assume that successive transitions within a training window are
independent. Applying the identities to temporally dependent pairs requires
a separate dependence argument. We also use the same local matrices
\(A,B\) for encoded transitions and predictor rollouts. Transferring an
encoder-side CGS bound to a separately parameterized predictor requires
agreement of their local transition maps; it does not follow from the CGS
loss alone.

Let \((z,a)\) and \((z',a')\) be independent draws from the same
transition distribution, with \(x=[z^\top,a^\top]^\top\),
\(x'=[z'^\top,a'^\top]^\top\), and the corresponding differences
\(\Delta z=Dz+Ba\) and \(\Delta z'=Dz'+Ba'\).
The population version of the practical cosine CGS objective is
\begin{equation}
    \mathcal L_{\mathrm{CGS}}^{\mathrm{pop}}
    =
    \mathbb E
    \left[
        \left(
        \frac{\Delta z^\top\Delta z'}
             {\|\Delta z\|_2\|\Delta z'\|_2}
        -
        \frac{a^\top a'}
             {\|a\|_2\|a'\|_2}
        \right)^2
    \right].
    \label{eq:app_cgs_cosine_objective}
\end{equation}
Assume the fixed-radius idealization
\begin{equation}
    \|\Delta z_t\|_2=r_\Delta,
    \qquad
    \|a_t\|_2=r_a,
    \qquad
    r_\Delta,r_a>0.
    \label{eq:app_cgs_fixed_radius}
\end{equation}
With \(c=r_\Delta^2/r_a^2\), recall the Gram loss from the main context
\begin{equation}
    \mathcal L_G(c)
    =\mathbb E
      \left[(\Delta z^\top\Delta z'-c\,a^\top a')^2\right].
    \label{eq:app_cgs_gram_objective}
\end{equation}
Then
\begin{equation}
    \mathcal L_{\mathrm{CGS}}^{\mathrm{pop}}
    =r_\Delta^{-4}\mathcal L_G(c).
    \label{eq:app_cgs_cosine_to_gram}
\end{equation}
Thus the population cosine objective is exactly a scaled Gram-matching
objective under Eq.~\eqref{eq:app_cgs_fixed_radius}.
Away from this idealization, an approximate relation can be justified
under quantitatively small relative fluctuations of the transition and
action norms around $r_\Delta$ and $r_a$, respectively; mere boundedness
of the norms does not ensure a small approximation error. The analysis below uses the fixed-radius model.

\subsubsection{Proof of Proposition~\ref{prop:cgs_joint_straightening}}
\label{app:cgs_joint_straightening_proof}

We give the calculation for general \(d,d_a\) under
Assumption~\ref{ass:cgs_joint_coverage}.
The feature coverage condition gives
\(\rho I_{d+d_a}\preceq\Sigma_x\).
For any \(v\in\mathbb R^{d+d_a}\), bounded features and Cauchy--Schwarz give
\[
  v^\top\Sigma_xv=\mathbb E[(v^\top x_t)^2]
\leq\|v\|_2^2\mathbb E\|x_t\|_2^2
\leq\bar\rho\|v\|_2^2.
\]
Thus \(\rho I_{d+d_a}\preceq\Sigma_x\preceq\bar\rho I_{d+d_a}\).
The analysis in the main context specializes to \(d=d_a\) with \(B\) invertible; the Gram
identity and spectral equivalence below do not require a square \(B\).

Let
\begin{equation}
    M_\Delta:=\begin{bmatrix}D&B\end{bmatrix}\in\mathbb R^{d\times(d+d_a)},
    \qquad
    P_c:=
    \begin{bmatrix}
        0_{d\times d}&0_{d\times d_a}\\
        0_{d_a\times d}&cI_{d_a}
    \end{bmatrix}.
    \label{eq:app_cgs_M_Pc}
\end{equation}
Because \(\Delta z_t=M_\Delta x_t\),
\begin{equation}
    \Delta z^\top\Delta z'-c\,a^\top a'
    =x^\top Qx',
    \qquad
    Q:=M_\Delta^\top M_\Delta-P_c.
    \label{eq:app_cgs_bilinear_form}
\end{equation}
For independent samples \(x,x'\),
\begin{equation}
    \mathcal L_G(c)
    =\mathbb E[(x^\top Qx')^2]
    =\operatorname{tr}(Q\Sigma_xQ\Sigma_x)
    =\left\|\Sigma_x^{1/2}Q\Sigma_x^{1/2}\right\|_F^2.
\label{eq:app_cgs_trace_identity}
\end{equation}
Moreover,
\begin{equation}
    Q=
    \begin{bmatrix}
        D^\top D&D^\top B\\
        B^\top D&B^\top B-cI_{d_a}
    \end{bmatrix}.
    \label{eq:app_cgs_Q_blocks}
\end{equation}
Since the singular
values of the linear map
\(Q\mapsto\Sigma_x^{1/2}Q\Sigma_x^{1/2}\) lie between \(\rho\) and
\(\bar\rho\),
\begin{equation}
    \rho^2\|Q\|_F^2
    \leq
    \left\|\Sigma_x^{1/2}Q\Sigma_x^{1/2}\right\|_F^2
    \leq
    \bar\rho^2\|Q\|_F^2.
\end{equation}
Finally,
\begin{equation}
    \|Q\|_F^2
    =\|D^\top D\|_F^2+2\|D^\top B\|_F^2
    +\|B^\top B-cI_{d_a}\|_F^2=\mathcal R(A,B).
\end{equation}
Together with Eq.~\eqref{eq:app_cgs_trace_identity}, this proves the
spectral equivalence in Proposition~\ref{prop:cgs_joint_straightening}.
Under positive definite joint coverage, \(\mathcal L_G(c)=0\) if and only if
\(Q=0\).
The upper-left block gives \(D^\top D=0\), hence \(D=0\) and \(A=I_d\);
the lower-right block gives \(B^\top B=cI_{d_a}\).  Conversely, these two
conditions make every block of \(Q\) zero.  When \(d\geq d_a\), such a
full-column-rank \(B\) exists, so zero loss is attainable.

\subsubsection{Proof of Corollary~\ref{cor:cgs_loss_to_epsilon}}
\label{app:cgs_loss_to_geometry}

Proposition~\ref{prop:cgs_joint_straightening} gives
\(\mathcal R(A,B)\leq\ell/\rho^2\). It first yields the Frobenius-norm
defect bounds
\begin{equation}
    \|D^\top D\|_F\leq\frac{\sqrt\ell}{\rho}, \quad
    \|D^\top B\|_F\leq\frac{\sqrt\ell}{\sqrt2\,\rho}, \quad
    \|B^\top B-cI_{d_a}\|_F\leq\frac{\sqrt\ell}{\rho}.
\end{equation}
Using
\(\|D\|_{\op}^2=\|D^\top D\|_{\op}\leq\|D^\top D\|_F\) and
\(\|\cdot\|_{\op}\leq\|\cdot\|_F\) gives the spectral-norm bounds
\begin{equation}
\begin{aligned}
    \|D\|_{\op}^2
    &\leq\frac{\sqrt\ell}{\rho},\\
    \|D^\top B\|_{\op}
    &\leq\frac{\sqrt\ell}{\sqrt2\,\rho},\\
    \|B^\top B-cI_{d_a}\|_{\op}
    &\leq\frac{\sqrt\ell}{\rho}.
\end{aligned}
    \label{eq:app_cgs_three_defect_bounds}
\end{equation}
Setting
\begin{equation}
    \varepsilon:=\frac{\sqrt\ell}{\rho}\max\{1,(2c)^{-1/2},c^{-1}\}.
    \label{eq:app_cgs_loss_to_epsilon}
\end{equation}
gives the three scaled conditions in
Corollary~\ref{cor:cgs_loss_to_epsilon}. Finally, when
\(\varepsilon<1\),
\begin{equation}
    c(1-\varepsilon)I_{d_a}
    \preceq B^\top B\preceq
    c(1+\varepsilon)I_{d_a},
\end{equation}
so \(B\) has full column rank and
\begin{equation}
    \kappa(B^\top B)
    \leq
    \frac{1+\varepsilon}{1-\varepsilon}.
    \label{eq:app_cgs_B_condition}
\end{equation}

The mixed defect is stronger than a bound obtained only from
\(\|D\|_{\op}^2\leq\varepsilon\).  It directly controls
\(\|D^\top B\|_{\op}=O(\sqrt c\,\varepsilon)\), suppressing the first-order
term \(B^\top DB\) in \(G_H\).  This is what produces
the \(O_H(\varepsilon)\) leading term in
Theorem~\ref{thm:cgs_horizon_curvature}.

\subsubsection{Coverage and dimension assumptions}

\paragraph{Joint coverage and state--action correlation.}
\label{app:cgs_joint_coverage}
Assumption~\ref{ass:cgs_joint_coverage} adapts the bounded-feature and
feature coverage conditions of \citet[Definition~2.1 and Assumption~2.2]{yin2022nearoptimal}
to the concatenated state--action feature \(x_t=[z_t^\top,a_t^\top]^\top\).
Our boundedness condition is \(\|x_t\|_2^2\leq\bar\rho\) almost surely,
which directly implies the second-moment upper bound
\(\Sigma_x\preceq\bar\rho I_{d+d_a}\).
The bound \(\rho I_{d+d_a}\preceq\Sigma_x\preceq\bar\rho I_{d+d_a}\) means that
\(\rho\|v\|_2^2\leq\mathbb E[(v^\top x_t)^2]\leq\bar\rho\|v\|_2^2\)
for every joint state--action direction \(v\).  Its lower bound requires
variation in every such direction; its upper bound controls the second
moment.  This is a population excitation condition of the kind used in
linear regression and system identification, where lower bounds on
covariate Gram matrices control estimation error \citep{matni2019tutorial}.
It is not a claim of empirical excitation for a dependent trajectory.

The proof requires neither independence of \(z_t\) and \(a_t\) nor zero
cross-covariance.  Define \(\Sigma_z=\mathbb E[z_tz_t^\top]\),
\(\Sigma_a=\mathbb E[a_ta_t^\top]\), and
\(\Sigma_{za}=\mathbb E[z_ta_t^\top]=\Sigma_{az}^\top\).
When \(\Sigma_z\succ0\), the Schur complement gives
\(\Sigma_x\succ0\) if and only if the linear action innovation has
positive definite second-moment matrix,
\begin{equation}
    \Sigma_{a\mid z}^{\mathrm{lin}}
    :=\Sigma_a-\Sigma_{az}\Sigma_z^{-1}\Sigma_{za}
    \succ0.
    \label{eq:app_linear_action_innovation}
\end{equation}
Here the innovation \(a_t-\Sigma_{az}\Sigma_z^{-1}z_t\) is the residual
after the best linear prediction of the action from the state.
This uncentered second moment need not equal a conditional covariance.
Thus state-dependent actions are allowed.
For standardized scalar \(z_t,a_t\) with correlation \(\gamma\), the
eigenvalues of \(\Sigma_x\) are \(1-|\gamma|\) and \(1+|\gamma|\).
Strong correlation is compatible with coverage, but makes \(\rho\) small
and weakens the loss-to-geometry bounds.  A fixed positive lower bound is
therefore a quantitative assumption, not merely the exclusion of exact
linear dependence.  Neither \(d=d_a\) nor invertibility of \(B\) implies it.

\paragraph{Singular joint second moment and the data subspace.}
\label{app:cgs_data_subspace}
For a transition distribution with \(\mathbb E\|x_t\|_2^2<\infty\), define
\(\mathcal S_x:=\operatorname{range}(\Sigma_x)
=(\ker\Sigma_x)^\perp\).
This is the smallest linear subspace containing \(x_t\) almost surely:
\(v\in\ker\Sigma_x\) if and only if
\(\mathbb E[(v^\top x_t)^2]=0\), or \(v^\top x_t=0\) almost surely.
It is a population subspace, distinct from a finite sample span.
Let \(U_x\in\mathbb R^{(d+d_a)\times r_x}\) have orthonormal columns spanning
\(\mathcal S_x\), where \(r_x=\operatorname{rank}(\Sigma_x)>0\), and set
\(\widetilde x_t=U_x^\top x_t\),
\(\widetilde\Sigma_x=U_x^\top\Sigma_xU_x\), and \(\Pi_x=U_xU_x^\top\).
Coverage on this subspace means
\(\rho I_{r_x}\preceq\widetilde\Sigma_x\preceq\bar\rho I_{r_x}\),
equivalently \(\rho\Pi_x\preceq\Sigma_x\preceq\bar\rho\Pi_x\).
For a fixed finite-dimensional distribution, some such constants always
exist on its nonzero support; a useful uniform lower bound is additional.

Writing \(\widetilde Q=U_x^\top Q U_x\), the trace identity gives
\(\mathcal L_G(c)=
\|\widetilde\Sigma_x^{1/2}\widetilde Q
\widetilde\Sigma_x^{1/2}\|_F^2\), hence
\(\rho^2\|\widetilde Q\|_F^2\leq\mathcal L_G(c)
\leq\bar\rho^2\|\widetilde Q\|_F^2\).
In particular, zero loss identifies \(U_x^\top Q U_x=0\), not necessarily
the separate blocks of \(Q\), because \(U_x\) can mix states and actions.
For example, take \(d=d_a=1\), \(c=1\), and \(z_t=a_t\) uniformly in
\(\{-1,1\}\).  With \(A=-1/2\) and \(B=1/2\), we have \(\Delta z_t=-a_t\) and
\(\mathcal L_G(1)=0\), although \(A\ne I_1\) and \(B^\top B\ne I_1\);
even \(B\) is invertible in this example.
Thus arbitrary projection onto \(\mathcal S_x\) does not preserve the
full straightening or rollout conclusions.  Assumption~\ref{ass:cgs_joint_coverage} retains
positive definite joint coverage in the original state--action coordinates.

\paragraph{Dimension requirement.}
Under joint coverage, the zero-loss geometry \(B^\top B=cI_{d_a}\)
is feasible whenever \(d\geq d_a\).  For \(d=d_a\), \(B/\sqrt c\) is
orthogonal; for \(d>d_a\), its columns are orthonormal and its image has
dimension \(d_a\).  If \(d<d_a\), rank deficiency gives
\(\|B^\top B-cI_{d_a}\|_F^2\geq c^2(d_a-d)\), so
\(\mathcal L_G(c)\geq\rho^2c^2(d_a-d)>0\).
Likewise, the \(\varepsilon\)-CGS regime with \(\varepsilon<1\) requires
\(d\geq d_a\), since \(B^\top B\succeq c(1-\varepsilon)I_{d_a}\).
This dimensional restriction concerns the action map \(B\), independently
of whether \(\Sigma_x\) is singular.

\subsection{Planning Hessian and Dimension Cases}
\label{app:cgs_horizon_proof}

Recall
\(\Gamma_H=[A^{H-1}B,\ldots,B]\),
\(G_H=\Gamma_H^\top\Gamma_H\), and
\(P_H=H^{-1}(\mathbf1_H\mathbf1_H^\top)\otimes I_{d_a}\), with
\(n_H=H \times d_a\). For an integer \(H\geq1\) and \(0\leq\varepsilon<1\), define
\begin{equation}
\begin{aligned}
    s_H(\varepsilon)
    &:=(1+\sqrt\varepsilon)^{H-1}-1,\\
    r_H(\varepsilon)
    &:=s_H(\varepsilon)-(H-1)\sqrt\varepsilon,\\
    \xi_H(\varepsilon)
    &:={}
    \varepsilon+2(H-1)\sqrt{1+\varepsilon}\,\varepsilon\\[-2pt]
    &\quad+(1+\varepsilon)
      \left(2r_H(\varepsilon)+s_H(\varepsilon)^2\right).
\end{aligned}
\label{eq:cgs_horizon_distortion_app}
\end{equation}

\subsubsection{Proof of Theorem~\ref{thm:cgs_horizon_curvature}}
\label{app:cgs_horizon_curvature_proof}

\paragraph{The ideal projector.}
The averaging matrix \(H^{-1}\mathbf1_H\mathbf1_H^\top\) is symmetric,
idempotent, and rank one. Hence \(P_H\) is an orthogonal projector of
rank \(d_a\), with range
\(\{\mathbf1_H\otimes u:u\in\mathbb R^{d_a}\}\) and kernel
\(\{\mathbf a:\sum_{t=0}^{H-1}a_t=0\}\).
When \(A=I_d\) and \(B^\top B=cI_{d_a}\), every block of \(G_H\) is \(cI_{d_a}\),
so \(G_H=cHP_H\). This gives the reference geometry with the correct
rank: equal curvature on cumulative-action directions and zero curvature
on cancelling directions. When CGS is approximate, the positive eigenspace
of \(G_H\) need not equal \(\operatorname{range}(P_H)\); the norm bound
below compares the matrices without assuming their eigenspaces coincide.

\paragraph{The CGS perturbation bound.}
Write \(A=I_d+D\).  For every integer \(k\geq0\),
\begin{equation}
    A^k=I_d+kD+R_k,
    \qquad
    R_k:=\sum_{j=2}^{k}\binom{k}{j}D^j.
    \label{eq:app_Ak_expansion}
\end{equation}
Since \(\|D\|_{\op}\leq\sqrt\varepsilon\), for
\(0\leq k\leq H-1\),
\begin{equation}
\begin{aligned}
    \|A^k-I_d\|_{\op}
    \leq(1+\sqrt\varepsilon)^k-1
    \leq s_H(\varepsilon),\\
    \|R_k\|_{\op}
    \leq(1+\sqrt\varepsilon)^k-1-k\sqrt\varepsilon
    \leq r_H(\varepsilon).
\end{aligned}
\label{eq:app_Ak_difference_bound}
\end{equation}
\label{eq:app_Ak_remainder_bound}
The third \(\varepsilon\)-CGS condition gives
\begin{equation}
    \|B\|_{\op}^2
    =\lambda_{\max}(B^\top B)
    \leq c(1+\varepsilon).
    \label{eq:app_B_operator_bound}
\end{equation}

Index the block rows and columns of \(G_H\) by action times
\(t,s\in\{0,\ldots,H-1\}\), and set \(p=H-1-t\) and \(q=H-1-s\).
This preserves the ordering used in the main context
\(\Gamma_H=[A^{H-1}B,\ldots,B]\), so
\begin{equation}
    [G_H]_{ts}=B^\top(A^p)^\top A^qB.
\end{equation}
Let \(F_k:=A^k-I_d=kD+R_k\).  Since every block of \(cHP_H\) is \(cI_{d_a}\),
\begin{equation}
    [G_H]_{ts}-cI_{d_a}
    ={}(B^\top B-cI_{d_a})+B^\top F_p^\top B+B^\top F_qB
       +B^\top F_p^\top F_qB.
\label{eq:app_horizon_block_expansion}
\end{equation}
Using Corollary~\ref{cor:cgs_loss_to_epsilon} and
Eq.~\eqref{eq:app_B_operator_bound},
\begin{equation}
\begin{aligned}
    \|B^\top F_p^\top B\|_{\op}
    &\leq p\|B\|_{\op}\|D^\top B\|_{\op}
       +\|B\|_{\op}^2\|R_p\|_{\op},\\
    \|B^\top F_qB\|_{\op}
    &\leq q\|B\|_{\op}\|D^\top B\|_{\op}
       +\|B\|_{\op}^2\|R_q\|_{\op},\\
    \|B^\top F_p^\top F_qB\|_{\op}
    &\leq\|B\|_{\op}^2\|F_p\|_{\op}\|F_q\|_{\op}.
\end{aligned}
\end{equation}
Therefore, uniformly in \(t,s\),
\begin{equation}
    \|[G_H]_{ts}-cI_{d_a}\|_{\op}
    \leq{}c\varepsilon
      +2(H-1)c\sqrt{1+\varepsilon}\,\varepsilon
      +c(1+\varepsilon)
        \left(2r_H(\varepsilon)+s_H(\varepsilon)^2\right)
    ={}c\,\xi_H(\varepsilon).
\label{eq:app_horizon_block_bound}
\end{equation}
Let \(E_H:=G_H-cHP_H\).  For a block vector
\(v=[v_0^\top,\ldots,v_{H-1}^\top]^\top\in\mathbb R^{n_H}\),
\begin{equation}
    \|E_Hv\|_2^2
    \leq
    \sum_{t=0}^{H-1}
    \left(c\xi_H(\varepsilon)
          \sum_{s=0}^{H-1}\|v_s\|_2\right)^2
    \leq c^2H^2\xi_H(\varepsilon)^2\|v\|_2^2.
\end{equation}
Hence
\(\|G_H-cHP_H\|_{\op}\leq cH\xi_H(\varepsilon)\).
To verify the leading coefficient, put \(h=H-1\). For fixed \(H\),
\(s_H=h\sqrt\varepsilon+\tfrac12h(h-1)\varepsilon
+O_H(\varepsilon^{3/2})\) and
\(r_H=\tfrac12h(h-1)\varepsilon+O_H(\varepsilon^{3/2})\).
The coefficient of \(\varepsilon\) in \(\xi_H\) is therefore
\(1+2h+h(h-1)+h^2=2H^2-3H+2\), proving the stated expansion.
This establishes the bound on \(\|G_H/(cH)-P_H\|_{\op}\) for any \(d\geq d_a\).

\paragraph{Comparison with temporal straightening and control isotropy.}
Retain
\(\|D\|_{\op}^2\leq\varepsilon\) and
\(\|B^\top B-cI_{d_a}\|_{\op}\leq c\varepsilon\).
These conditions already control the mixed drift--control defect indirectly:
\begin{equation}
    \|D^\top B\|_{\op}
    \leq
    \|D\|_{\op}\|B\|_{\op}
    \leq
    \sqrt{c(1+\varepsilon)\varepsilon}
    =
    O(\sqrt{\varepsilon}).
\end{equation}
Using this bound in Eq.~\eqref{eq:app_horizon_block_expansion} gives
\begin{equation}
\begin{aligned}
    \left\|\frac{G_H}{cH}-P_H\right\|_{\op}
    &\leq\varepsilon+(1+\varepsilon)(2s_H+s_H^2)\\
    &=2(H-1)\sqrt\varepsilon+O_H(\varepsilon).
\end{aligned}
\label{eq:app_cgs_temporal_only_hessian_bound}
\end{equation}
For \(H\geq2\), this square-root order is generally unavoidable under
these two conditions alone.

CGS directly controls the same mixed defect at the geometric error scale,
\begin{equation}
    \|D^\top B\|_{\op}\leq \sqrt{c}\,\varepsilon,
\end{equation}
and hence
\[
    \|B^\top D B\|_{\op}
    \leq
    \|D^\top B\|_{\op}\|B\|_{\op}
    \leq
    c\sqrt{1+\varepsilon}\,\varepsilon.
\]
The linear drift--control terms are therefore \(O(\varepsilon)\), yielding
the \(O_H(\varepsilon)\) Hessian perturbation bound in
Theorem~\ref{thm:cgs_horizon_curvature}. This compares the geometric
consequences of the two bounds under matched drift and control-isotropy
tolerances; it does not identify the CGS and temporal-straightening
training objectives. At \(H=1\), both bounds reduce to the
control-isotropy error \(\varepsilon\).

\paragraph{Controllability Gramian and effective conditioning.}
Recall the finite-horizon controllability Gramian and extend the
effective condition number from the main context to any nonzero matrix:
\begin{equation}
\begin{aligned}
    W_H&=\Gamma_H\Gamma_H^\top
      =\sum_{k=0}^{H-1}A^kBB^\top(A^\top)^k,\\[-2pt]
    \kappa_{\mathrm{eff}}(M)&:=
      \sigma_{\max}(M)/\sigma_{\min}^{+}(M).
\end{aligned}
    \label{eq:app_cgs_gramian_kappa_eff}
\end{equation}
Here \(M\neq0\), \(\sigma_{\min}^{+}(M)\) is its smallest positive
singular value, and \(\kappa\) denotes the Euclidean condition number
on a nonsingular restriction. For symmetric positive semidefinite \(M\),
this ratio also equals \(\lambda_{\max}(M)/\lambda_{\min}^{+}(M)\).
The matrices \(G_H=\Gamma_H^\top\Gamma_H\) and
\(W_H=\Gamma_H\Gamma_H^\top\) have the same positive eigenvalues and
singular values, namely the squared positive singular values of
\(\Gamma_H\). Hence
\(\kappa_{\mathrm{eff}}(G_H)=\kappa_{\mathrm{eff}}(W_H)\).
In particular,
\(\kappa_{\mathrm{eff}}(G_H)=\kappa_{\mathrm{eff}}(\Gamma_H)^2\);
conditioning the horizon map itself therefore takes the square root of
the Gram-matrix condition-number bound.

\subsubsection{Square and rectangular action maps}
\label{app:cgs_low_dim_actions}

We distinguish the dimension cases used by all three planners below,
extending Remark~\ref{rem:cgs_low_dim_actions}. Throughout this discussion,
assume \(d\geq d_a\), the three \(\varepsilon\)-CGS bounds with
\(0\leq\varepsilon<1\), and \(\xi_H<1\).
In particular, \(B^\top B\succeq c(1-\varepsilon)I_{d_a}\), so \(B\) has
full column rank.

\paragraph{State and action subspaces.}
Define the finite-horizon controllable state subspace and the action
subspace that affects the terminal state by
\begin{equation}
\begin{aligned}
    \mathcal C_H&:=\operatorname{range}(W_H)
    =\operatorname{span}\{\operatorname{range}(A^kB):0\leq k<H\},\\
    \mathcal T_H&:=\operatorname{range}(G_H)
    =\operatorname{range}(\Gamma_H^\top).
\end{aligned}
    \label{eq:app_cgs_controllable_subspaces}
\end{equation}
Their common dimension is
\(r_{\mathrm{ctrl}}:=\operatorname{rank}(\Gamma_H)\), with
\(d_a\leq r_{\mathrm{ctrl}}\leq\min\{d,n_H\}\).
These subspaces are determined by the dynamics and horizon, and differ
from the data subspace \(\mathcal S_x\).
A thin singular value decomposition
\(\Gamma_H=U_\Gamma\Sigma_\Gamma V_\Gamma^\top\), where
\(\Sigma_\Gamma=\diag(\sigma_1(\Gamma_H),\ldots,
\sigma_{r_{\mathrm{ctrl}}}(\Gamma_H))\) retains its positive singular values,
gives \(W_H=U_\Gamma\Sigma_\Gamma^2U_\Gamma^\top\) and
\(G_H=V_\Gamma\Sigma_\Gamma^2V_\Gamma^\top\).
Thus both restrictions are positive definite and
\(\kappa_{\mathrm{eff}}(G_H)
=\kappa(G_H|_{\mathcal T_H})
=\kappa(W_H|_{\mathcal C_H})\).
Restricting to these subspaces removes the exact null directions.

If $d=d_a$, full column rank makes $B$ invertible, and the last block
of $\Gamma_H$ gives $r_{\rm ctrl}=d_a$.
If $d>d_a$, $r_{\rm ctrl}$ can exceed $d_a$ because propagation through
$A$ can reach state directions outside $\operatorname{range}(B)$.

\paragraph{Bounds inherited from CGS.}
The proof of Theorem~\ref{thm:cgs_horizon_curvature} applies to rectangular
\(B\), giving \(\|G_H/(cH)-P_H\|_{\op}\leq\xi_H\).
To extend the notation from the main context to rectangular action maps, let
\(\mathcal S_H\) be a subspace spanned by orthonormal eigenvectors
associated with the largest \(r=d_a\) eigenvalues of \(G_H\).
In the square case this is exactly the subspace used in the main context
\(\mathcal S_H=\operatorname{range}(G_H)=\mathcal T_H\); in the
rectangular case \(\mathcal S_H\) can be a strict subspace of
\(\mathcal T_H\).
In orthonormal coordinates on \(\mathcal S_H\),
Theorem~\ref{thm:cgs_horizon_curvature} and Weyl's inequality give
\begin{equation}
    cH(1-\xi_H)I_r
    \preceq G_H|_{\mathcal S_H}
    \preceq cH(1+\xi_H)I_r,
    \qquad
    \kappa(G_H|_{\mathcal S_H})
    \leq\frac{1+\xi_H}{1-\xi_H}.
    \label{eq:app_cgs_low_dim_dominant_bound}
\end{equation}
Let $\Pi_H$ denote the orthogonal projector onto $\mathcal S_H$.
Unlike the fixed ideal projector $P_H$, $\Pi_H$ depends on the learned
dynamics; the two agree at ideal CGS geometry. These symbols are used
throughout the planner analysis.
When $r_{\rm ctrl}=r$, $\mathcal S_H=\mathcal T_H$ and every direction
that contributes to the planning cost is included. When $r_{\rm ctrl}>r$,
the spectral bound above covers only the leading $r$ directions.
The corresponding leading eigenspace of \(W_H\) has the same spectral
bounds, by the singular value decomposition above. This proves the
geometric assertions of Remark~\ref{rem:cgs_low_dim_actions}.

\paragraph{Bounds on all controllable directions.}
The following conditions extend the bound from \(\mathcal S_H\) to
\(\mathcal T_H\). Let \(\Pi_{\mathcal C_H}\) denote the orthogonal
projector onto \(\mathcal C_H\).
\begin{corollary}[Conditioning on the full controllable subspace]
\label{cor:cgs_full_controllable_bound}
Under the conditions of this discussion, if
\(r_{\mathrm{ctrl}}=d_a\), then
\begin{equation}
    \kappa_{\mathrm{eff}}(G_H)
    =\kappa(W_H|_{\mathcal C_H})
    \leq\frac{1+\xi_H}{1-\xi_H}.
    \label{eq:app_cgs_full_rank_da_bound}
\end{equation}
A sufficient condition is
\(A\,\operatorname{range}(B)\subseteq\operatorname{range}(B)\).
More generally, if a quantitative controllability bound
\(W_H\succeq\gamma_H cH\Pi_{\mathcal C_H}\) holds with a specified
\(\gamma_H>0\), then
\(\kappa_{\mathrm{eff}}(G_H)\leq(1+\xi_H)/\gamma_H\).
\end{corollary}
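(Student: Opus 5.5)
The plan is to work entirely with the singular value decomposition of $\Gamma_H$ and the spectral information that Theorem~\ref{thm:cgs_horizon_curvature} and Eq.~\eqref{eq:app_cgs_low_dim_dominant_bound} already give on the leading $r=d_a$ eigenvalues of $G_H$. From the thin SVD $\Gamma_H=U_\Gamma\Sigma_\Gamma V_\Gamma^\top$ we have $\kappa_{\mathrm{eff}}(G_H)=\kappa(G_H|_{\mathcal T_H})=\kappa(W_H|_{\mathcal C_H})$, so it suffices to bound the ratio of the largest to the smallest positive eigenvalue of $G_H$.

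\textbf{Case $r_{\mathrm{ctrl}}=d_a$.} Here $G_H$ has exactly $r=d_a$ positive eigenvalues, so $\mathcal S_H=\mathcal T_H$. Weyl's inequality applied to $\|G_H-cHP_H\|_{\op}\le cH\xi_H$ places the top $d_a$ eigenvalues of $G_H$ in $[cH(1-\xi_H),cH(1+\xi_H)]$, because $cHP_H$ has exactly $d_a$ eigenvalues equal to $cH$. Since $\xi_H<1$, the lower endpoint is positive, so these are precisely the positive eigenvalues, and the ratio bound \eqref{eq:app_cgs_full_rank_da_bound} follows.

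\textbf{Sufficient condition.} Suppose $A\,\operatorname{range}(B)\subseteq\operatorname{range}(B)$. Induction on $k$ gives $\operatorname{range}(A^kB)\subseteq\operatorname{range}(B)$ for every $k$. Hence $\mathcal C_H=\operatorname{range}(B)$, which has dimension $d_a$ because $B$ has full column rank under $B^\top B\succeq c(1-\varepsilon)I_{d_a}$. Therefore $r_{\mathrm{ctrl}}=d_a$, and the previous case applies.

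\textbf{General case.} For the largest eigenvalue, Weyl gives $\lambda_{\max}(G_H)\le cH\|P_H\|_{\op}+cH\xi_H=cH(1+\xi_H)$, and $\lambda_{\max}(W_H)=\lambda_{\max}(G_H)$. For the smallest positive eigenvalue, the hypothesis $W_H\succeq\gamma_H cH\Pi_{\mathcal C_H}$, combined with $\operatorname{range}(W_H)=\mathcal C_H$, gives $\lambda_{\min}^+(W_H)\ge\gamma_H cH$. This carries over to $G_H$ because $G_H$ and $W_H$ share their positive eigenvalues. Dividing the two bounds yields $(1+\xi_H)/\gamma_H$.

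The only step needing care is the general case, where the leading eigenspace $\mathcal S_H$ may be strictly smaller than $\mathcal T_H$. The extra directions could then have arbitrarily small curvature, which is why the argument uses the external controllability constant $\gamma_H$ for the lower bound rather than the perturbation bound.
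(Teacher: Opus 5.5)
Your proposal is correct and follows the paper's proof essentially step for step. In the case $r_{\mathrm{ctrl}}=d_a$ you use Weyl's inequality to place the leading $d_a$ eigenvalues, which are then all of the positive ones. For the sufficient condition you use induction on $\operatorname{range}(A^kB)\subseteq\operatorname{range}(B)$ together with full column rank of $B$. In the general case you take the ratio of the Weyl upper bound $cH(1+\xi_H)$ to the controllability lower bound $\gamma_H cH$.
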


\begin{proof}
If \(r_{\mathrm{ctrl}}=d_a\), the leading \(d_a\) eigenvalues are all
of the positive eigenvalues, so
Eq.~\eqref{eq:app_cgs_low_dim_dominant_bound} proves
Eq.~\eqref{eq:app_cgs_full_rank_da_bound}.
Under the stated invariance condition, induction gives
\(\operatorname{range}(A^kB)\subseteq\operatorname{range}(B)\) for all
\(k\). Since \(\Gamma_H\) contains \(B\) as its last block and \(B\)
has full column rank, \(r_{\mathrm{ctrl}}=d_a\).
For the general bound, quantitative controllability gives
\(\lambda_{\min}^{+}(W_H)\geq\gamma_H cH\), while
Theorem~\ref{thm:cgs_horizon_curvature} and Weyl's inequality give
\(\lambda_{\max}(W_H)=\lambda_1(G_H)\leq cH(1+\xi_H)\).
Taking the ratio proves the claim.
\end{proof}

\begin{remark}[Additional controllable directions]
\label{rem:cgs_additional_directions}
When \(r_{\mathrm{ctrl}}>d_a\), the remaining positive eigenvalues satisfy
\(0<\lambda_j(G_H)\leq cH\xi_H\) for
\(d_a<j\leq r_{\mathrm{ctrl}}\).
Controllability makes these eigenvalues positive, but a uniform lower
bound requires further information, even after restriction to
\(\mathcal C_H\).
For example, let \(d=2\), \(d_a=1\), \(H=2\), \(c=1\), and
\(0<\delta<1/4\). With \(e_1,e_2\) the standard basis, take
\(B=e_1\) and \(A=I_2+\delta e_2e_1^\top\).
Then \(\|D\|_{\op}^2=\delta^2\), \(D^\top B=0\), and \(B^\top B=1\),
so all three CGS bounds hold with \(\varepsilon=\delta^2\) and
\(\xi_2(\varepsilon)=\delta^2(1+\sqrt{1+\delta^2})^2<1\).
Direct calculation gives
\begin{equation}
    \Gamma_2=\begin{bmatrix}1&1\\\delta&0\end{bmatrix},
    \qquad
    G_2=\begin{bmatrix}1+\delta^2&1\\1&1\end{bmatrix},
    \qquad
    W_2=\begin{bmatrix}2&\delta\\\delta&\delta^2\end{bmatrix}.
    \label{eq:app_cgs_weak_direction_example}
\end{equation}
For every \(\delta>0\), both \(G_2\) and \(W_2\) are positive definite,
so \(\mathcal C_2=\mathbb R^2\). Their eigenvalues are
\(\lambda_\pm=(2+\delta^2\pm\sqrt{4+\delta^4})/2\).
As \(\delta\downarrow0\), \(\lambda_+\to2\) and
\(\lambda_-\sim\delta^2/2\), hence
\(\kappa_{\mathrm{eff}}(G_2)\sim4/\delta^2\), while
\(\|G_2/2-P_2\|_{\op}=\delta^2/2\).
The additional reachable direction becomes weak as the dynamics approach
ideal CGS geometry, consistently with the dominant-spectrum bound.
\end{remark}

\paragraph{Relation to temporal straightening.}
The controllable-subspace formulation follows
\citet[Remark~4.5 and Appendix~C.2]{wang2026ts}. Their Remark~C.6 also
notes that a lower bound on the smallest positive Gramian eigenvalue
requires additional controllability assumptions.
In the square case, their lower-bound argument uses
\(\|B^\top u\|_2\geq\sigma_{\min}(B)\|u\|_2\) for every state
vector \(u\). For rectangular \(B\), this inequality holds on
\(\operatorname{range}(B)\), which may be smaller than \(\mathcal C_H\).
Thus the shared nonzero-spectrum identity extends to all dimensions,
while quantitative conditioning on the full controllable subspace uses
conditions such as those in Corollary~\ref{cor:cgs_full_controllable_bound}.

\paragraph{Implications for the planner guarantees.}
For either dimension case, the MPPI, CEM, and gradient-descent
statements below use \(r=d_a\) and the leading eigenspace
\(\mathcal S_H\) defined above. Their respective hypotheses are unchanged;
Eq.~\eqref{eq:app_cgs_low_dim_dominant_bound} supplies the spectral bounds
on these directions. CEM additionally requires the perturbation and mean
neighborhood conditions in Proposition~\ref{prop:cgs_cem_population_full}. When \(r_{\mathrm{ctrl}}=d_a\),
\(\mathcal S_H=\mathcal T_H\), so these guarantees cover all action
directions that affect the terminal state.
Under the more general quantitative controllability condition, the exact
MPPI map in Proposition~\ref{prop:cgs_mppi_population_full} also gives
\(\|J_{\mathrm M}|_{\mathcal T_H}\|_{\op}\leq(1+s\gamma_H)^{-1}\), where
\(s=\sigma_{\mathrm p}^2cH/\tau\): its eigenvalues on \(\mathcal T_H\) are
\((1+s\lambda_j(G_H)/(cH))^{-1}\).

\subsection{Quadratic Planning and Gaussian Proposals}
\label{app:cgs_mpc_background}

\subsubsection{Planning cost and nonlinear rollouts}
\label{app:cgs_quadratic_nonlinear}

\paragraph{Exact quadratic expansion.}
Let \(\mathbf a^\star\) be any unconstrained minimizer of
\(C(\mathbf a)=\tfrac12\|A^Hz_0+\Gamma_H\mathbf a-z_g\|_2^2\).  The normal
equation gives
\begin{equation}
    \Gamma_H^\top
    (A^Hz_0+\Gamma_H\mathbf a^\star-z_g)=0.
\end{equation}
Therefore
\begin{equation}
    C(\mathbf a)
    =C(\mathbf a^\star)
     +\tfrac12(\mathbf a-\mathbf a^\star)^\top
       G_H
       (\mathbf a-\mathbf a^\star).
    \label{eq:app_exact_quadratic_expansion}
\end{equation}
This identity is exact even when the goal is not reachable.

\paragraph{Planning cost gap.}
For a candidate sequence $\mathbf a$, define
$\Delta(\mathbf a):=C(\mathbf a)-C(\mathbf a^\star)$.
Equation~\eqref{eq:app_exact_quadratic_expansion} gives
$\Delta(\mathbf a)=\tfrac12\|G_H^{1/2}(\mathbf a-\mathbf a^\star)\|_2^2$.
For a proposal mean $\mu_i$, write $\Delta_i:=\Delta(\mu_i)$.
The minimum $C(\mathbf a^\star)$ need not be zero.

\paragraph{Nonlinear rollout.}
In practice, for a nonlinear terminal predictor \(z_H=\widehat F_H(\mathbf a)\),
\begin{equation}
    \nabla_{\mathbf a}^2C
    =J_{\widehat F_H}^\top J_{\widehat F_H}
     +\sum_{j=1}^d
       (\widehat F_H(\mathbf a)-z_g)_j
       \nabla_{\mathbf a}^2(\widehat F_H)_j.
    \label{eq:app_nonlinear_cost_hessian}
\end{equation}
Nonlinear rollouts introduce an additional residual-weighted second-order
term beyond \(J_{\widehat F_H}^\top J_{\widehat F_H}\).
This term need not be positive semidefinite and can alter the Hessian
spectrum or introduce negative curvature, complicating the relationship
between control geometry and sampling-based planner updates.
Our analysis focuses on linear dynamical models; extending it to nonlinear
rollouts is an exciting direction for future work.

\paragraph{Spectral assumptions for the planner bounds.}
The CGS-dependent results below assume the three $\varepsilon$-CGS bounds
with $0\leq\varepsilon<1$ from Corollary~\ref{cor:cgs_loss_to_epsilon}.
We use $M_H=G_H/(cH)$, $r=d_a$, and the subspaces of
Section~\ref{app:cgs_low_dim_actions}; each result states its additional
requirement on $\xi_H$.

\subsubsection{Gaussian proposals and update rules}

Let \(\mathsf P_i:=\mathcal N(\mu_i,\Sigma_i)\) be the proposal distribution
at iteration \(i\).
Draw \(K\) independent action sequences
\(\mathbf a^{(1)},\ldots,\mathbf a^{(K)}\sim\mathsf P_i\).

\paragraph{Proposal whitening.}
For a general positive-definite covariance \(\Sigma\), set
\begin{equation}
    X:=\Sigma^{-1/2}(\mathbf a-\mathbf a^\star),
    \qquad
    \nu:=\Sigma^{-1/2}(\mu-\mathbf a^\star).
\end{equation}
In these proposal-whitened coordinates, the planning Hessian becomes
\begin{equation}
    \Sigma^{1/2}G_H\Sigma^{1/2}.
    \label{eq:app_proposal_whitened_hessian}
\end{equation}
For a general proposal, isotropy relevant to the sampling update is therefore
measured by this proposal-whitened Hessian, not by \(G_H\) alone.
Even an isotropic planning cost can become anisotropic in these coordinates
if the proposal scales directions unequally. The MPPI and CEM bounds in Sections~\ref{app:cgs_mppi_population}
and~\ref{app:cgs_cem_population} use $\Sigma=\sigma_{\mathrm p}^2I_{n_H}$, where
$\sigma_{\mathrm p}>0$ is the sampling standard deviation, as in the main
text, so that Theorem~\ref{thm:cgs_horizon_curvature} applies directly.

\paragraph{MPPI.}
Following path-integral control and MPPI
\citep{kappen2005path,theodorou2010path,williams2017mppi}, for temperature
\(\tau>0\), MPPI assigns soft weights
\begin{equation}
    w_k
    :=\exp\!\left(-\frac{C(\mathbf a^{(k)})}{\tau}\right)
\end{equation}
and updates the proposal mean by the self-normalized estimator
\begin{equation}
    \widehat\mu_{i+1}^{\mathrm M}
    :=\frac{\sum_{k=1}^Kw_k\mathbf a^{(k)}}
            {\sum_{k=1}^Kw_k}.
    \label{eq:app_mppi_empirical_update}
\end{equation}
Adding a constant to \(C\) does not change the normalized weights.  The
population map is
\begin{equation}
    \mu_{i+1}^{\mathrm M}
    :=\frac{\mathbb E_{\mathsf P_i}[w(\mathbf a)\mathbf a]}
            {\mathbb E_{\mathsf P_i}[w(\mathbf a)]}.
    \label{eq:app_mppi_population_update_background}
\end{equation}
The analysis below conditions on the proposal covariance.

\paragraph{CEM.}
Following the cross-entropy method
\citep{rubinstein1999crossentropy,deboer2005cem}, for an elite fraction
\(\alpha\in(0,1)\), let
\(K_{\rm elite}=\alpha K\in\mathbb N\) be the number of samples with lowest cost.  The empirical
mean and covariance updates are
\begin{equation}
\begin{aligned}
    \widehat\mu_{i+1}^{\mathrm C}
    &:=\frac{1}{K_{\rm elite}}\sum_{j\in\mathcal E_i}\mathbf a^{(j)},\\[-2pt]
    \widehat\Sigma_{i+1}^{\mathrm C}
    &:=\frac{1}{K_{\rm elite}}\sum_{j\in\mathcal E_i}
       (\mathbf a^{(j)}-\widehat\mu_{i+1}^{\mathrm C})
       (\mathbf a^{(j)}-\widehat\mu_{i+1}^{\mathrm C})^\top.
\end{aligned}
    \label{eq:app_cem_empirical_update}
\end{equation}
where \(\mathcal E_i\) is the elite index set.  The theory conditions on the
current covariance and analyzes the population mean map.  A full coupled
mean--covariance convergence result is not claimed.

\subsubsection{A shared bound for repeated mean updates}

\begin{corollary}[Mean error with finite samples and updates]
\label{cor:cgs_finite_budget_app}
Fix $A,B,z_0,z_g$, the proposal covariance $\Sigma$, and the projector
$\Pi_H$. Let $\mu_i$ be the proposal mean after $i$ updates and define
$\nu_i:=\Sigma^{-1/2}(\mu_i-\mathbf a^\star)$ and $e_i:=\Pi_H\nu_i$.
For fixed $I\geq1$ and $\zeta\in(0,1)$, define
\begin{equation}
    \mathcal D:=\{\nu:\|\nu\|_2\leq R_\nu,\ \|\Pi_H\nu\|_2\leq R_e\},
    \qquad
    T_{\rm exit}:=\inf\{i\geq0:\nu_i\notin\mathcal D\},
    \label{eq:app_planner_first_exit}
\end{equation}
with $\inf\varnothing=\infty$. Let $\mathcal F_i$ be the history before
batch $i$ is drawn, so $\nu_i$ is $\mathcal F_i$-measurable.
Let the population map $T$ satisfy
$\|\Pi_HT(\nu)\|_2\leq q\|\Pi_H\nu\|_2$, with $0<q<1$, for $\nu\in\mathcal D$.
Suppose each fresh $K$-sample update obeys
$\|\Pi_H(\widehat T_{K,i}(\nu)-T(\nu))\|_2\leq\epsilon_K(\zeta/I)$
with conditional probability at least $1-\zeta/I$ given $\mathcal F_i$,
uniformly for the current mean in $\mathcal D$.
The one-step sampling estimates used below take $R=R_\nu$.
Assume $\|e_0\|_2\leq R_e$ and $\epsilon_K(\zeta/I)\leq(1-q)R_e$.
There is an event of probability at least $1-\zeta$ on which every update
with $0\leq i<\min(I,T_{\rm exit})$ satisfies
$\|e_{i+1}\|_2\leq q\|e_i\|_2+\epsilon_K(\zeta/I)$,
and, whenever $I\leq T_{\rm exit}$,
\begin{equation}
    \|e_I\|_2
    \leq q^I\|e_0\|_2
      +\frac{1-q^I}{1-q}\epsilon_K(\zeta/I).
    \label{eq:app_finite_iteration_bound}
\end{equation}
For the budget statement, let $0<\epsilon_{\rm mean}<\|e_0\|_2$ be a
mean-error tolerance, distinct from the planning-cost tolerance in the main context
$\epsilon_{\rm tar}$. If
$\epsilon_K(\eta)\leq C_{\mathrm{samp}}\sqrt{[r+\log(1/\eta)]/K}$,
then $\|e_I\|_2\leq\epsilon_{\rm mean}$ is ensured on the same event whenever
$I\leq T_{\rm exit}$, up to logarithmic factors, by
\begin{equation}
    I\gtrsim\frac{\log(2\|e_0\|_2/\epsilon_{\rm mean})}{-\log q},
    \qquad
    K\gtrsim\frac{C_{\mathrm{samp}}^2}{(1-q)^2\epsilon_{\rm mean}^2}
      \left(r+\log\frac I\zeta\right).
    \label{eq:app_cgs_KI_budget}
\end{equation}
\end{corollary}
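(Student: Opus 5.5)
The plan is a one-step recursion for $\|e_i\|_2$ on a single good event, followed by unrolling. For each $i<I$, let $G_i$ be the event that the fresh $K$-sample update at step $i$ satisfies $\|\Pi_H(\widehat T_{K,i}(\nu_i)-T(\nu_i))\|_2\leq\epsilon_K(\zeta/I)$. This event matters only when $\nu_i\in\mathcal D$; on $\{\nu_i\notin\mathcal D\}$, which is $\mathcal F_i$-measurable, we declare $G_i$ to hold trivially. By hypothesis, $\Pr(G_i^c\mid\mathcal F_i)\leq\zeta/I$, because the bound is uniform over current means in $\mathcal D$ and $\nu_i$ is $\mathcal F_i$-measurable. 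Taking expectations and applying a union bound over $i=0,\dots,I-1$, the event $G:=\bigcap_i G_i$ has probability at least $1-\zeta$. No independence across iterations is needed, only the conditional statement; this is why the first-exit time $T_{\rm exit}$ is introduced.

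On $G$, fix $i<\min(I,T_{\rm exit})$, so $\nu_i\in\mathcal D$. Write $\nu_{i+1}=\widehat T_{K,i}(\nu_i)$ and project with $\Pi_H$. The triangle inequality together with the contraction hypothesis $\|\Pi_HT(\nu)\|_2\leq q\|\Pi_H\nu\|_2$ on $\mathcal D$ gives $\|e_{i+1}\|_2\leq q\|e_i\|_2+\epsilon_K(\zeta/I)$. If $I\leq T_{\rm exit}$, this recursion holds for all $i<I$. Unrolling it and summing the geometric series $\sum_{j<I}q^j=(1-q^I)/(1-q)$ yields Eq.~\eqref{eq:app_finite_iteration_bound}.

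The assumptions $\|e_0\|_2\leq R_e$ and $\epsilon_K\leq(1-q)R_e$ show that the recursion preserves $\|e_i\|_2\leq R_e$: if $\|e_i\|_2\leq R_e$ then $\|e_{i+1}\|_2\leq qR_e+(1-q)R_e=R_e$. So exit from $\mathcal D$ can only occur through the constraint $\|\nu\|\leq R_\nu$. That constraint is why the conclusion is stated conditionally on $I\leq T_{\rm exit}$ rather than proved unconditionally. The only delicate point is to state the measurability and conditioning correctly. One must make sure that $\widehat T_{K,i}$ uses the fresh batch $i$ and that $\mathcal D$-membership is decided before that batch is drawn.

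For the budget statement, split the tolerance in half. The condition $q^I\|e_0\|_2\leq\epsilon_{\rm mean}/2$ holds once $I\geq\log(2\|e_0\|_2/\epsilon_{\rm mean})/(-\log q)$. Separately, $\epsilon_K(\zeta/I)/(1-q)\leq\epsilon_{\rm mean}/2$ holds once $C_{\rm samp}\sqrt{[r+\log(I/\zeta)]/K}\leq(1-q)\epsilon_{\rm mean}/2$. Solving the second condition for $K$ gives Eq.~\eqref{eq:app_cgs_KI_budget} up to constant factors. Since $(1-q^I)\leq1$, the two halves combine to give $\|e_I\|_2\leq\epsilon_{\rm mean}$ on the same event $G$ whenever $I\leq T_{\rm exit}$. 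The condition $\epsilon_{\rm mean}<\|e_0\|_2$ only ensures that the logarithm in the bound on $I$ is positive.
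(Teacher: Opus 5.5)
Your proposal is correct and follows essentially the same route as the paper: a stopped union bound that uses the $\mathcal F_i$-measurability of the in-domain event, the one-step recursion from the triangle inequality and projected contraction, invariance of the radius-$R_e$ ball, unrolling of the geometric series, and the half-tolerance split for the budget. The only cosmetic difference is that you guard each batch by $\{\nu_i\in\mathcal D\}$ rather than the paper's $\{i<T_{\rm exit}\}$. Both events are $\mathcal F_i$-measurable, and either one suffices.
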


\begin{proof}
Let $B_i$ be the event that the batch-$i$ sampling bound fails.
On $\{i<T_{\rm exit}\}$, the conditional failure probability given
$\mathcal F_i$ is at most $\zeta/I$. Since this event is
$\mathcal F_i$-measurable, the tower property and a union bound give
\begin{equation}
\begin{aligned}
    \Prob\!\left(\bigcup_{i=0}^{I-1}
        (\{i<T_{\rm exit}\}\cap B_i)\right)
    &\leq\sum_{i=0}^{I-1}
        \E\!\left[\mathbf1_{\{i<T_{\rm exit}\}}
        \Prob(B_i\mid\mathcal F_i)\right]
    \leq\zeta.
\end{aligned}
    \label{eq:app_planner_stopped_failure}
\end{equation}
On the complementary event,
$\|e_{i+1}\|_2\leq q\|e_i\|_2+\epsilon_K(\zeta/I)$
for every $0\leq i<\min(I,T_{\rm exit})$.
The invariance condition keeps the active iterates inside the radius-$R_e$
ball through these updates, including the output of the last such update.
Thus, on this event, exit cannot be caused by the active-radius condition
alone; full-mean boundedness is not implied.
Unrolling the recursion gives
Eq.~\eqref{eq:app_finite_iteration_bound} whenever $I\leq T_{\rm exit}$.
It suffices to require
$q^I\|e_0\|_2\leq\epsilon_{\rm mean}/2$ and
$\epsilon_K(\zeta/I)\leq(1-q)\epsilon_{\rm mean}/2$;
solving these inequalities gives Eq.~\eqref{eq:app_cgs_KI_budget}.
\end{proof}

\paragraph{Probability interpretation.}
Let $E_I:=\{I\leq T_{\rm exit}\}$ and let $G_I$ denote the event that
Eq.~\eqref{eq:app_finite_iteration_bound} holds. The guarantee is
$\Prob(E_I\cap G_I^c)\leq\zeta$, not a conditional success probability
$\Prob(G_I\mid E_I)\geq1-\zeta$.
If $\Prob(E_I)>0$, it only implies
$\Prob(G_I^c\mid E_I)\leq\zeta/\Prob(E_I)$.
When full-mean boundedness holds almost surely throughout the input
iterations, active-radius invariance yields the unconditional
$1-\zeta$ error guarantee.
No exit-probability bound is asserted: bounding only $e_i$ does not control
drift in the other coordinates. The condition $I\leq T_{\rm exit}$ concerns
$\nu_0,\ldots,\nu_{I-1}$ and does not require $\nu_I\in\mathcal D$.
Updating the proposal covariance, changing the dynamics, or replanning
from a new initial state requires corresponding uniform bounds.

\subsection{MPPI: Mean Update and Planning Cost}
\label{app:cgs_mppi_population}

We use $r$, $\mathcal S_H$, and $\Pi_H$ as defined in
Section~\ref{app:cgs_low_dim_actions}; the formulas below apply to both
square and rectangular action maps.

\subsubsection{Population mean update}
\label{app:cgs_mppi_population_proof}

\begin{proposition}[Exact MPPI population map]
\label{prop:cgs_mppi_population_full}
Let \(X\sim\mathcal N(\nu,I_{n_H})\),
\(M_H=G_H/(cH)\), and
\(s=\sigma_{\mathrm p}^2cH/\tau\).  Define the tilted law
\begin{equation}
    \pi_\nu^{\mathrm M}(x)
    \propto
    \exp\!\left(-\frac{s}{2}x^\top M_Hx\right)
    \mathcal N(x;\nu,I_{n_H}).
    \label{eq:app_mppi_tilted_law}
\end{equation}
Then
\begin{equation}
\begin{aligned}
    T_{\mathrm M}(\nu)&:=\mathbb E_{\pi_\nu^{\mathrm M}}[X]
      =(I_{n_H}+sM_H)^{-1}\nu,\\[-2pt]
    J_{\mathrm M}&=(I_{n_H}+sM_H)^{-1}.
\end{aligned}
    \label{eq:app_mppi_exact_map}
\end{equation}
If \(\xi_H<1\), then on the top-\(r\) eigenspace \(\mathcal S_H\),
\begin{equation}
    q_{\mathrm M}:=\|J_{\mathrm M}|_{\mathcal S_H}\|_{\op}
    =\frac{1}{1+(\sigma_{\mathrm p}^2/\tau)\lambda_r(G_H)}
    \leq\frac{1}{1+s(1-\xi_H)},
    \label{eq:app_mppi_contraction}
\end{equation}
and
\begin{equation}
    \kappa(J_{\mathrm M}|_{\mathcal S_H})
    \leq
    \frac{1+s(1+\xi_H)}{1+s(1-\xi_H)}.
    \label{eq:app_mppi_update_condition}
\end{equation}
At ideal geometry,
\begin{equation}
    J_{\mathrm M}^0
    =\frac{1}{1+s}P_H+(I_{n_H}-P_H),
    \label{eq:app_mppi_ideal_jacobian}
\end{equation}
and
\begin{equation}
    \|J_{\mathrm M}-J_{\mathrm M}^0\|_{\op}
    \leq s\xi_H.
    \label{eq:app_mppi_jacobian_perturbation}
\end{equation}
\end{proposition}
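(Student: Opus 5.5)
The plan is a direct Gaussian completion-of-squares computation, followed by spectral bookkeeping that uses Theorem~\ref{thm:cgs_horizon_curvature} and Eq.~\eqref{eq:app_cgs_low_dim_dominant_bound}.

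\textbf{Step 1: the tilted law is Gaussian.} In whitened coordinates \(X=\sigma_{\mathrm p}^{-1}(\mathbf a-\mathbf a^\star)\), Eq.~\eqref{eq:app_exact_quadratic_expansion} gives
\[
C(\mathbf a)/\tau=\mathrm{const}+\tfrac{\sigma_{\mathrm p}^2}{2\tau}X^\top G_HX=\mathrm{const}+\tfrac s2X^\top M_HX .
\]
The constant cancels in the self-normalized weights, so the population update Eq.~\eqref{eq:app_mppi_population_update_background} is the mean of \(\pi_\nu^{\mathrm M}\). Combine the exponents \(-\tfrac12[x^\top(I+sM_H)x-2\nu^\top x]\) and complete the square. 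This shows \(\pi_\nu^{\mathrm M}=\mathcal N\bigl((I+sM_H)^{-1}\nu,(I+sM_H)^{-1}\bigr)\). The inverse exists because \(M_H\succeq0\), so \(I+sM_H\succeq I\). Hence \(T_{\mathrm M}\) is linear and \(J_{\mathrm M}=(I+sM_H)^{-1}\).

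\textbf{Step 2: contraction and conditioning.} \(J_{\mathrm M}\) is a spectral function of \(M_H\), so it shares the eigenvectors of \(G_H\). On the top-\(r\) eigenspace \(\mathcal S_H\), its eigenvalues are \((1+(\sigma_{\mathrm p}^2/\tau)\lambda_j(G_H))^{-1}\) for \(j\le r\). The largest of these corresponds to \(\lambda_r(G_H)\), which gives the equality in Eq.~\eqref{eq:app_mppi_contraction}. By Eq.~\eqref{eq:app_cgs_low_dim_dominant_bound}, \(\lambda_j(G_H)/(cH)\in[1-\xi_H,1+\xi_H]\) for \(j\le r\). This yields the upper bound \(1/(1+s(1-\xi_H))\), and the ratio of extreme eigenvalues gives Eq.~\eqref{eq:app_mppi_update_condition}. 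In the square case, \(\lambda_r(G_H)=\sigma_r(G_H)\), which matches Theorem~\ref{thm:cgs_mppi_population}.

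\textbf{Step 3: ideal geometry and perturbation.} At \(M_H=P_H\), \(P_H\) is an orthogonal projector, so
\[
(I+sP_H)^{-1}=(1+s)^{-1}P_H+(I-P_H),
\]
which can be checked by multiplying out and using \(P_H^2=P_H\). For the perturbation bound, use the resolvent identity
\[
J_{\mathrm M}-J_{\mathrm M}^0=(I+sM_H)^{-1}\,s(P_H-M_H)\,(I+sP_H)^{-1}.
\]
Both inverse factors have operator norm at most \(1\), since \(M_H\) and \(P_H\) are PSD. Theorem~\ref{thm:cgs_horizon_curvature} bounds the middle factor by \(s\xi_H\), which gives Eq.~\eqref{eq:app_mppi_jacobian_perturbation}.

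\textbf{Main obstacle.} The only real subtlety is Step 1. The population map is defined with the true cost \(C\), not with the shifted quadratic, and \(\mathbf a^\star\) may be non-unique when \(G_H\) is singular. Any minimizer works, because Eq.~\eqref{eq:app_exact_quadratic_expansion} holds exactly for every minimizer and the constant \(C(\mathbf a^\star)\) cancels. I also need integrability of the weights, which is immediate because they are bounded by \(1\) when \(M_H\succeq0\). Everything else is routine spectral calculus.
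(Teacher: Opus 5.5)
Your proposal is correct and follows the paper's proof essentially step for step. You complete the square to get a Gaussian with precision $I_{n_H}+sM_H$, map the eigenvalues of $M_H$ on $\mathcal S_H$ through $u\mapsto(1+su)^{-1}$ using the bounds from Theorem~\ref{thm:cgs_horizon_curvature}, use $P_H^2=P_H$ for the ideal Jacobian, and apply the resolvent identity with both inverse factors of norm at most one. Your extra Step~1 check, via Eq.~\eqref{eq:app_exact_quadratic_expansion}, that the tilted law is the actual MPPI population update for any minimizer $\mathbf a^\star$ is something the paper leaves implicit, so it adds to the argument without changing the route.
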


\begin{proof}
The log unnormalized density in Eq.~\eqref{eq:app_mppi_tilted_law} is
\begin{equation}
    -\frac12\|x-\nu\|_2^2-\frac{s}{2}x^\top M_Hx.
\end{equation}
Completing the square gives a Gaussian with precision \(I_{n_H}+sM_H\) and mean
\((I_{n_H}+sM_H)^{-1}\nu\), proving
Eq.~\eqref{eq:app_mppi_exact_map}.  Since \(J_{\mathrm M}\) is a matrix
function of \(M_H\), they share eigenvectors.  On \(\mathcal S_H\),
Theorem~\ref{thm:cgs_horizon_curvature} gives
\begin{equation}
    1-\xi_H\leq\lambda_j(M_H)\leq1+\xi_H.
\end{equation}
Applying \(u\mapsto(1+su)^{-1}\) proves
Eqs.~\eqref{eq:app_mppi_contraction} and
\eqref{eq:app_mppi_update_condition}; the smallest dominant eigenvalue also
gives \(q_{\mathrm M}=[1+(\sigma_{\mathrm p}^2/\tau)\lambda_r(G_H)]^{-1}\).
Substituting \(M_H=P_H\)
and using \(P_H^2=P_H\) proves
Eq.~\eqref{eq:app_mppi_ideal_jacobian}.  Finally, the resolvent identity
gives
\begin{equation}
    J_{\mathrm M}-J_{\mathrm M}^0
    =(I_{n_H}+sM_H)^{-1}s(P_H-M_H)(I_{n_H}+sP_H)^{-1}.
\end{equation}
Both inverse factors have operator norm at most one because
\(M_H,P_H\succeq0\), which proves
Eq.~\eqref{eq:app_mppi_jacobian_perturbation}.
\end{proof}

For \(d=d_a\), \(r=d\) and
\(\mathcal S_H=\operatorname{range}(G_H)\), so this proves
Theorem~\ref{thm:cgs_mppi_population} with the same \(s\) and
\(q_{\mathrm M}\) as in the main context.

The contraction depends on both shape and scale.  CGS controls spectral
spread through \(\xi_H\), but cosine matching does not identify \(c\).
Therefore \(\tau\) and \(\sigma_{\mathrm p}^2\) must still be calibrated so that
\(s=\sigma_{\mathrm p}^2cH/\tau\) is non-negligible.

\subsubsection{Sampling error in one update}
\label{app:cgs_finite_sample}

Assume $\xi_H<1$ and $\|\nu\|_2\leq R$.
All samples in a batch are independent.

\begin{proposition}[Sampling error in one MPPI update]
\label{prop:cgs_finite_k_concentration}
For MPPI, draw \(X_1,\ldots,X_K\sim\mathcal N(\nu,I_{n_H})\) and define
\begin{equation}
\begin{aligned}
    \widetilde w_k&:=\exp\!\left(-\frac{s}{2}X_k^\top M_HX_k\right),\\[-2pt]
    \widehat T_{\mathrm M,K}
    &:=\frac{\sum_{k=1}^K\widetilde w_kX_k}{\sum_{k=1}^K\widetilde w_k}.
\end{aligned}
\end{equation}
These weights use the cost gap $\Delta$ in place of $C$, which leaves
normalized weights unchanged. Let
\begin{equation}
    \underline Z_{\mathrm M}(R)
    :=
    \frac{e^{-R^2/2}}
    {(1+s(1+\xi_H))^{r/2}
     (1+s\xi_H)^{(n_H-r)/2}}.
    \label{eq:app_mppi_normalizer_lower_bound}
\end{equation}
There is a universal constant \(C_{\mathrm{conc}}>0\) such that, for every
\(\zeta\in(0,1)\), if
\begin{equation}
    K\geq
    \frac{8}{\underline Z_{\mathrm M}(R)^2}
    \log\frac4\zeta,
    \label{eq:app_mppi_minimum_K}
\end{equation}
then, with probability at least \(1-\zeta\),
\begin{equation}
    \left\|
        \Pi_H(\widehat T_{\mathrm M,K}-T_{\mathrm M}(\nu))
    \right\|_2
    \leq
    \frac{C_{\mathrm{conc}}(1+R)}{\underline Z_{\mathrm M}(R)}
    \sqrt{\frac{r\log(4r/\zeta)}{K}}.
    \label{eq:app_mppi_finite_K_bound}
\end{equation}

\end{proposition}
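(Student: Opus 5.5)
The approach is to treat $\widehat T_{\mathrm M,K}$ as a self-normalized importance-sampling estimator and split it into a numerator and a denominator. Set $w(x):=\exp(-\tfrac{s}{2}x^\top M_Hx)\in(0,1]$, $Z:=\mathbb E[w(X)]$, $N:=\mathbb E[w(X)\Pi_HX]$, together with their empirical averages $Z_K:=K^{-1}\sum_k\widetilde w_k$ and $N_K:=K^{-1}\sum_k\widetilde w_k\Pi_HX_k$. Then $\Pi_HT_{\mathrm M}(\nu)=N/Z$ and $\Pi_H\widehat T_{\mathrm M,K}=N_K/Z_K$. I would use the standard decomposition
\[
\frac{N_K}{Z_K}-\frac{N}{Z}=\frac{N_K-N}{Z_K}+\frac{N}{Z}\cdot\frac{Z-Z_K}{Z_K},
\]
and bound each piece once $Z_K$ is shown to stay a constant fraction of $Z$.

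\emph{Step 1 (lower bound on $Z$).} Completing the square, as in the proof of Proposition~\ref{prop:cgs_mppi_population_full}, gives
\[
Z=\det(I_{n_H}+sM_H)^{-1/2}\exp\!\bigl(-\tfrac12\nu^\top[I_{n_H}-(I_{n_H}+sM_H)^{-1}]\nu\bigr).
\]
The bracketed matrix is between $0$ and $I_{n_H}$, so the exponential factor is at least $e^{-R^2/2}$. For the determinant I would use Theorem~\ref{thm:cgs_horizon_curvature}, which gives $\|M_H-P_H\|_{\op}\leq\xi_H$, together with Weyl's inequality and the fact that $P_H$ has $r$ eigenvalues equal to $1$ and the rest equal to $0$. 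It follows that the top $r$ eigenvalues of $M_H$ are at most $1+\xi_H$ and the remaining $n_H-r$ are at most $\xi_H$. Hence $Z\geq\underline Z_{\mathrm M}(R)$.

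\emph{Step 2 (denominator).} Since $\widetilde w_k\in[0,1]$, Hoeffding's inequality gives $|Z_K-Z|\leq\sqrt{\log(4/\zeta)/(2K)}$ with probability at least $1-\zeta/2$. Under the minimum-$K$ condition \eqref{eq:app_mppi_minimum_K}, this deviation is at most $\underline Z_{\mathrm M}(R)/4\leq Z/4$. So $Z_K\geq 3Z/4$, and therefore $1/Z_K\leq 4/(3\underline Z_{\mathrm M}(R))$.

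\emph{Step 3 (numerator).} Fix an orthonormal basis $e_1,\dots,e_r$ of $\mathcal S_H$. For each $j$, the scalar $e_j^\top X$ is Gaussian with mean of magnitude at most $R$ and unit variance. Since $|w(X)e_j^\top X|\leq|e_j^\top X|$, the centered variable $w(X)e_j^\top X-\mathbb E[w(X)e_j^\top X]$ is sub-Gaussian with $\psi_2$-norm $\lesssim 1+R$. A sub-Gaussian Hoeffding bound at level $\zeta/(2r)$, followed by a union bound over the $r$ coordinates, gives
\[
\|N_K-N\|_2\lesssim(1+R)\sqrt{r\log(4r/\zeta)/K}
\]
with probability at least $1-\zeta/2$. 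The ratio term is handled by $\|N\|_2/Z=\|\Pi_HT_{\mathrm M}(\nu)\|_2\leq\|J_{\mathrm M}\|_{\op}\|\nu\|_2\leq R$, because $J_{\mathrm M}=(I+sM_H)^{-1}$ has norm at most one.

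\emph{Combining.} On the intersection of the two events, which has probability at least $1-\zeta$, the error is at most
\[
\frac{4}{3\underline Z_{\mathrm M}(R)}\Bigl[\|N_K-N\|_2+R\,|Z_K-Z|\Bigr].
\]
Both terms are of order $(1+R)\sqrt{r\log(4r/\zeta)/K}/\underline Z_{\mathrm M}(R)$, and absorbing the constants into $C_{\mathrm{conc}}$ gives \eqref{eq:app_mppi_finite_K_bound}.

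The main obstacle is the numerator in Step 3. The samples $X_k$ are unbounded, so I need to verify that multiplying by a weight in $[0,1]$ preserves sub-Gaussianity with a constant uniform in $\nu$ over the ball of radius $R$. I would first try the direct comparison of tails, $\Pr(|wY|>t)\leq\Pr(|Y|>t)$, followed by the standard centering lemma for sub-Gaussian variables. Failing that, I would truncate at $|e_j^\top X|\leq R+\sqrt{2\log(4rK/\zeta)}$, which costs only logarithmic factors.
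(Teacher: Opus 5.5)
Your proposal is correct and follows the paper's proof essentially step for step. You use the same ingredients: the Gaussian-integral lower bound $Z\geq\underline Z_{\mathrm M}(R)$ from the eigenvalue bounds on $M_H$, Hoeffding for the bounded weights, coordinatewise sub-Gaussian concentration with a union bound over the $r$ active coordinates, and the ratio decomposition with $\|T_{\mathrm M}(\nu)\|_2\leq R$. Yours is only slightly more explicit: you extract $Z_K\geq 3Z/4$ from the minimum-$K$ condition where the paper uses $Z/2$, and you justify the weighted coordinates' sub-Gaussian norm via tail domination and centering.
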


\begin{proof}
Let \(Z:=\mathbb E[\widetilde w_1]\).  A Gaussian integral gives
\begin{equation}
\begin{aligned}
    Z
    =\det(I_{n_H}+sM_H)^{-1/2}\exp\!\left(
      -\tfrac12\nu^\top sM_H(I_{n_H}+sM_H)^{-1}\nu\right).
\end{aligned}
    \label{eq:app_mppi_normalizer_exact}
\end{equation}
The top \(r\) eigenvalues of \(M_H\) are at most \(1+\xi_H\), the
remaining eigenvalues are at most \(\xi_H\), and
\(0\preceq sM_H(I_{n_H}+sM_H)^{-1}\preceq I_{n_H}\).  Therefore
\(Z\geq\underline Z_{\mathrm M}(R)\).  Since \(0<\widetilde w_k\leq1\),
Hoeffding's inequality~\citep{hoeffding1963probability} and
Eq.~\eqref{eq:app_mppi_minimum_K} imply, with
probability at least \(1-\zeta/2\),
\begin{equation}
    \left|K^{-1}\sum_{k=1}^K\widetilde w_k-Z\right|
    \leq\sqrt{\frac{\log(4/\zeta)}{2K}}
    \leq\frac{\underline Z_{\mathrm M}(R)}{2}.
    \label{eq:app_mppi_denominator_event}
\end{equation}

In an orthonormal basis of \(\mathcal S_H\), each coordinate of
\(\widetilde w_k\Pi_HX_k\) has centered sub-Gaussian norm at most \(C_0(1+R)\).
Coordinate-wise concentration and a union bound give, with probability at
least \(1-\zeta/2\),
\begin{equation}
\begin{aligned}
    &\left\|K^{-1}\sum_{k=1}^K\widetilde w_k\Pi_HX_k
      -\mathbb E[\widetilde w_1\Pi_HX_1]\right\|_2\\[-2pt]
    &\qquad\leq C_1(1+R)
      \sqrt{\frac{r\log(4r/\zeta)}{K}}.
\end{aligned}
    \label{eq:app_mppi_numerator_event}
\end{equation}
On the intersection of these events, the empirical denominator is at least
\(Z/2\).  Since
\(\mathbb E[\widetilde w_1\Pi_HX_1]=Z\Pi_HT_{\mathrm M}(\nu)\) and
\(\|T_{\mathrm M}(\nu)\|_2\leq\|\nu\|_2\leq R\), the standard ratio
decomposition and \(Z\geq\underline Z_{\mathrm M}(R)\) prove
Eq.~\eqref{eq:app_mppi_finite_K_bound} after absorbing numerical constants
into \(C_{\mathrm{conc}}\).
\end{proof}

The bound controls the sampling error in $\mathcal S_H$.
The remaining coordinates can still affect the importance weights, which
is why $n_H-r$ appears in the normalizer bound.
Combining it with Corollary~\ref{cor:cgs_finite_budget_app} gives a
projected mean-error bound for repeated updates. The next result states
the corresponding planning cost bound when all nonzero eigenvalues are
covered.

\subsubsection{Planning cost gap for MPPI}
\label{app:cgs_mppi_cost_bounds}
The result in the main context reports the planning cost at the proposal mean.
Assume $\xi_H<1$ and $r_{\rm ctrl}=r$, so
$\mathcal S_H=\operatorname{range}(G_H)$.
Keep $A,B,z_0,z_g$, $\sigma_{\mathrm p}$, and $\tau$ fixed, and define
\begin{equation}
    \nu_i:=\sigma_{\mathrm p}^{-1}(\mu_i-\mathbf a^\star),\qquad
    \Delta_i:=C(\mu_i)-C(\mathbf a^\star)
             =\tfrac12\sigma_{\mathrm p}^2\nu_i^\top G_H\nu_i.
\end{equation}
The identity follows from Eq.~\eqref{eq:app_exact_quadratic_expansion}.
The rank condition includes $d=d_a$ and the rectangular case with
$r_{\rm ctrl}=d_a$, as characterized in Section~\ref{app:cgs_low_dim_actions}.
Use a fresh batch of $K$ samples at each update and define $T_{\rm exit}$
as in Eq.~\eqref{eq:app_planner_first_exit}, here with
$\mathcal D=\{\nu:\|\nu\|_2\leq R\}$.
Let $\epsilon_K(\zeta/I)$ be the
right-hand side of Eq.~\eqref{eq:app_mppi_finite_K_bound} with failure
probability $\zeta/I$. Define the importance-weight factor
$\mathcal A_{\rm M}(R):=C_{\mathrm{conc}}(1+R)/\underline Z_{\rm M}(R)$,
with $\underline Z_{\rm M}(R)$ from
Eq.~\eqref{eq:app_mppi_normalizer_lower_bound}, and set
\begin{equation}
    b_K:=\sigma_{\mathrm p}\sqrt{cH(1+\xi_H)/2}\,\epsilon_K(\zeta/I).
    \label{eq:app_mppi_cost_sampling_error}
\end{equation}
The minimum sample count in Eq.~\eqref{eq:app_mppi_minimum_K} is also
required with failure probability $\zeta/I$.

\begin{corollary}[Planning cost with finite samples and updates]
Under these conditions, there is an event of probability at least
$1-\zeta$ on which the following bounds hold in the indicated ranges:
\begin{equation}
\begin{aligned}
    \sqrt{\Delta_{i+1}}&\leq q_{\rm M}\sqrt{\Delta_i}+b_K,
        &&0\leq i<\min(I,T_{\rm exit}),\\
    \Delta_I&\leq\left[
        q_{\rm M}^I\sqrt{\Delta_0}
        +\frac{1-q_{\rm M}^I}{1-q_{\rm M}}b_K
    \right]^2,
        &&I\leq T_{\rm exit}.
\end{aligned}
    \label{eq:app_mppi_cost_iteration_bound}
\end{equation}
For $0<\epsilon_{\rm tar}<\Delta_0$, it suffices to choose
\begin{equation}
    I\geq\left\lceil
        \frac{\log(4\Delta_0/\epsilon_{\rm tar})}{-2\log q_{\rm M}}
    \right\rceil,
    \qquad
    b_K\leq\frac{1-q_{\rm M}}{2}\sqrt{\epsilon_{\rm tar}}
    \label{eq:app_mppi_cost_budget_conditions}
\end{equation}
to obtain $\Delta_I\leq\epsilon_{\rm tar}$ on the same event whenever
$I\leq T_{\rm exit}$. These guarantees use the same stopped-process interpretation as
Corollary~\ref{cor:cgs_finite_budget_app}, not success probabilities
conditional on non-exit. Substituting the sampling bound
gives sufficient sample and update budgets for the cost bound in the main context.
\end{corollary}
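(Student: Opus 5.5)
Three ingredients combine here: the exact quadratic expansion of the cost, the exact population contraction from Proposition~\ref{prop:cgs_mppi_population_full}, and the one-step sampling bound of Proposition~\ref{prop:cgs_finite_k_concentration}. These get stitched together through the stopped-process argument of Corollary~\ref{cor:cgs_finite_budget_app}.

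First, I would use Eq.~\eqref{eq:app_exact_quadratic_expansion} to write $\Delta_i=\tfrac12\sigma_{\mathrm p}^2\nu_i^\top G_H\nu_i=\tfrac12\sigma_{\mathrm p}^2\|G_H^{1/2}\nu_i\|_2^2$. Since $r_{\rm ctrl}=r$, we have $\mathcal S_H=\operatorname{range}(G_H)$ and $G_H^{1/2}=G_H^{1/2}\Pi_H$, so only $e_i=\Pi_H\nu_i$ enters. The natural metric is therefore $\|\cdot\|_{G}:=\sigma_{\mathrm p}\|G_H^{1/2}\cdot\|_2/\sqrt2$, in which $\sqrt{\Delta_i}=\|\nu_i\|_G$.

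Next, I would split one update as
\[
\nu_{i+1}=T_{\rm M}(\nu_i)+(\widehat T_{{\rm M},K,i}(\nu_i)-T_{\rm M}(\nu_i)),
\]
and apply the triangle inequality in $\|\cdot\|_G$.

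\emph{Population term.} $J_{\rm M}=(I+sM_H)^{-1}$ is a function of $G_H$, so it commutes with $G_H^{1/2}$. Hence
\[
\|J_{\rm M}\nu\|_G=\tfrac{\sigma_{\mathrm p}}{\sqrt2}\|J_{\rm M}G_H^{1/2}\nu\|_2\le q_{\rm M}\|\nu\|_G .
\]
This holds because $G_H^{1/2}\nu\in\mathcal S_H$ and $q_{\rm M}=\|J_{\rm M}|_{\mathcal S_H}\|_{\op}$. Notably, no cross-norm conversion is needed here. This gives the $q_{\rm M}\sqrt{\Delta_i}$ term.

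\emph{Sampling term.} I would bound it by $\|G_H^{1/2}\|_{\op}\le\sqrt{cH(1+\xi_H)}$, which follows from Theorem~\ref{thm:cgs_horizon_curvature}. Multiplying by $\|\Pi_H(\widehat T-T)\|_2$ and then by $\sigma_{\mathrm p}/\sqrt2$ yields exactly $b_K$ from Eq.~\eqref{eq:app_mppi_cost_sampling_error}. The projection is justified since $G_H^{1/2}=G_H^{1/2}\Pi_H$.

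I would invoke Proposition~\ref{prop:cgs_finite_k_concentration} at failure level $\zeta/I$ with $R=R_\nu$. Its hypothesis $\|\nu\|_2\le R$ holds whenever $i<T_{\rm exit}$, and the minimum-$K$ condition is assumed at level $\zeta/I$.

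The probabilistic bookkeeping is the step I would handle most carefully, though it copies the proof of Corollary~\ref{cor:cgs_finite_budget_app}. Let $B_i$ be the batch-$i$ failure event. The set $\{i<T_{\rm exit}\}$ is $\mathcal F_i$-measurable. The conditional bound holds uniformly over means in $\mathcal D$, and batches are fresh. Combining these with the tower property and a union bound gives $\Prob(\bigcup_i\{i<T_{\rm exit}\}\cap B_i)\le\zeta$. On the complement, the one-step recursion holds for all $0\le i<\min(I,T_{\rm exit})$.

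Unrolling the recursion by induction when $I\le T_{\rm exit}$ gives
\[
\sqrt{\Delta_I}\le q_{\rm M}^I\sqrt{\Delta_0}+b_K\sum_{j<I}q_{\rm M}^j ,
\]
and squaring gives the stated bound. Unlike the generic corollary, no active-radius invariance is needed, because $\mathcal D$ here only constrains $\|\nu\|_2$.

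For the budget, it suffices to make each summand at most $\sqrt{\epsilon_{\rm tar}}/2$:
\begin{itemize}
\item $q_{\rm M}^I\sqrt{\Delta_0}\le\sqrt{\epsilon_{\rm tar}}/2$, which is equivalent to $I\ge\log(4\Delta_0/\epsilon_{\rm tar})/(-2\log q_{\rm M})$;
\item $b_K/(1-q_{\rm M})\le\sqrt{\epsilon_{\rm tar}}/2$, using $(1-q_{\rm M}^I)\le1$.
\end{itemize}
The sum is then at most $\sqrt{\epsilon_{\rm tar}}$, so $\Delta_I\le\epsilon_{\rm tar}$. Here $q_{\rm M}<1$ because $s(1-\xi_H)>0$.
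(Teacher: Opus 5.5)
Your proposal is correct and follows essentially the same argument as the paper. Both apply the triangle inequality in the $\tfrac{\sigma_{\mathrm p}}{\sqrt2}\|G_H^{1/2}\cdot\|_2$ seminorm, get the $q_{\rm M}\sqrt{\Delta_i}$ term from commutation of $J_{\rm M}$ with $G_H^{1/2}$, get the $b_K$ term from $G_H^{1/2}=G_H^{1/2}\Pi_H$ together with $\lambda_{\max}(G_H)\le cH(1+\xi_H)$, use the stopped union bound over fresh batches on $\mathcal D=\{\|\nu\|_2\le R\}$ (with no active-radius invariance needed), and split the budget so each bracketed term is at most $\sqrt{\epsilon_{\rm tar}}/2$.
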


\begin{proof}
Apply the stopped union-bound argument of
Eq.~\eqref{eq:app_planner_stopped_failure} to the full-mean radius-$R$
region. The fresh-batch estimate holds conditionally on the past whenever
the input mean is in this region. Thus, on an event of probability at
least $1-\zeta$,
$\|\Pi_H(\nu_{i+1}-J_{\rm M}\nu_i)\|_2\leq\epsilon_K(\zeta/I)$
for all $0\leq i<\min(I,T_{\rm exit})$.
Since $\mathcal S_H=\operatorname{range}(G_H)$,
$G_H^{1/2}(I_{n_H}-\Pi_H)=0$. Theorem~\ref{thm:cgs_horizon_curvature} gives
$\lambda_{\max}(G_H)\leq cH(1+\xi_H)$, hence
\begin{equation}
    \frac{\sigma_{\mathrm p}}{\sqrt2}\|G_H^{1/2}(\nu_{i+1}-J_{\rm M}\nu_i)\|_2\leq b_K.
\end{equation}
Moreover, $J_{\rm M}$ commutes with $G_H^{1/2}$ and has operator norm
$q_{\rm M}$ on $\operatorname{range}(G_H)$, so
\begin{equation}
    \frac{\sigma_{\mathrm p}}{\sqrt2}\|G_H^{1/2}J_{\rm M}\nu_i\|_2
    \leq q_{\rm M}\frac{\sigma_{\mathrm p}}{\sqrt2}\|G_H^{1/2}\nu_i\|_2
    =q_{\rm M}\sqrt{\Delta_i}.
\end{equation}
The triangle inequality proves the one-update bound before exit.
Summing the geometric series proves the bound after $I$ updates whenever
$I\leq T_{\rm exit}$.
The choices in Eq.~\eqref{eq:app_mppi_cost_budget_conditions} bound each
term in the square brackets by $\sqrt{\epsilon_{\rm tar}}/2$.
\end{proof}

In the rectangular case, if $\mathcal S_H$ contains only the leading
$d_a$ eigenvectors and additional positive eigenvalues remain outside it,
the same argument bounds the cost contribution
$\tfrac12\sigma_{\mathrm p}^2\nu_i^\top\Pi_HG_H\Pi_H\nu_i$.
A bound for the full planning cost gap requires control of all positive
eigenvalues, as in Corollary~\ref{cor:cgs_full_controllable_bound}.

\subsection{CEM: Local Mean Update}
\label{app:cgs_cem_population}

Fix an elite fraction $\alpha\in(0,1)$ and use
$r=\operatorname{rank}(P_H)=d_a$, $\mathcal S_H$, and $\Pi_H$ from
Section~\ref{app:cgs_low_dim_actions}.

\subsubsection{Population mean update near the optimum}

For \(X\sim\mathcal N(\nu,I_{n_H})\), let \(t_\alpha(M,\nu)\) be the
\(\alpha\)-quantile of \(X^\top MX\), and define
\begin{equation}
    T_{\mathrm C}(\nu;M)
    :=\mathbb E\!\left[
        X\,\middle|\,
        X^\top MX\leq t_\alpha(M,\nu)
    \right].
    \label{eq:app_cem_population_map}
\end{equation}

\begin{proposition}[Exact local CEM Jacobian and projector limit]
\label{prop:cgs_cem_population_full}
For every nonzero \(M\succeq0\), \(T_{\mathrm C}(0;M)=0\), and, for
\(X\sim\mathcal N(0,I_{n_H})\),
\begin{equation}
\begin{aligned}
    J_{\mathrm C}(M)
    &:=D_\nu T_{\mathrm C}(0;M)\\[-2pt]
    &=\frac1\alpha\mathbb E\!\left[
      XX^\top\mathbf1\{X^\top MX\leq t_\alpha(M,0)\}\right].
\end{aligned}
    \label{eq:app_cem_exact_jacobian}
\end{equation}
Let \(q_{\alpha,r}\) be the \(\alpha\)-quantile of
\(\chi_r^2\), and define
\begin{equation}
    \beta_{\alpha,r}
    :=\frac1r\mathbb E\!\left[\chi_r^2\mid
      \chi_r^2\leq q_{\alpha,r}\right]\in(0,1).
    \label{eq:app_cem_beta}
\end{equation}
At \(M=P_H\),
\begin{equation}
    J_{\mathrm C}(P_H)
    =\beta_{\alpha,r}P_H+(I_{n_H}-P_H).
    \label{eq:app_cem_ideal_jacobian}
\end{equation}

For \(0\leq\delta<1/2\), let \(\Pi_r(M)\) project onto the top \(r\)
eigenvectors of \(M\), and define
\begin{equation}
    \omega_{\alpha,n_H,r}(\delta)
    :=\sup_{\substack{M\succeq0\\\|M-P_H\|_{\op}\leq\delta}}
    \bigl\|\Pi_r(M)J_{\mathrm C}(M)\Pi_r(M)
    -\beta_{\alpha,r}\Pi_r(M)\bigr\|_{\op}.
    \label{eq:app_cem_modulus}
\end{equation}
Then
\begin{equation}
    \omega_{\alpha,n_H,r}(\delta)\longrightarrow0
    \qquad\text{as }\delta\downarrow0.
    \label{eq:app_cem_modulus_limit}
\end{equation}
Consequently, if \(\xi_H<1/2\), the top-\(r\) subspace
\(\mathcal S_H\) of \(M_H\) is invariant under \(J_{\mathrm C}(M_H)\) and,
in orthonormal coordinates on this subspace,
\begin{equation}
    \left\|J_{\mathrm C}(M_H)|_{\mathcal S_H}
      -\beta_{\alpha,r}I_r\right\|_{\op}
    \leq\omega_{\alpha,n_H,r}(\xi_H).
    \label{eq:app_cem_active_balance}
\end{equation}
If \(\beta_{\alpha,r}+\omega_{\alpha,n_H,r}(\xi_H)<1\), then every
\begin{equation}
    q_{\mathrm C}\in
    (\beta_{\alpha,r}+\omega_{\alpha,n_H,r}(\xi_H),1)
\end{equation}
satisfies
\(\|\Pi_H T_{\mathrm C}(\nu;M_H)\|_2
\leq q_{\mathrm C}\|\Pi_H\nu\|_2\)
for all sufficiently small full mean vectors \(\nu\), where
\(\Pi_H=\Pi_r(M_H)\). In particular, it is a local contraction factor
for means in \(\mathcal S_H\).
\end{proposition}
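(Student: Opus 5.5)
The plan is to treat the statement as four separate claims, in order: the fixed point and exact Jacobian at $\nu=0$, the projector limit, the continuity modulus, and then the local contraction. For the first claim, write $T_{\mathrm C}(\nu;M)=\alpha^{-1}\int x\,\mathbf 1\{x^\top Mx\le t_\alpha(M,\nu)\}\,\mathcal N(x;\nu,I_{n_H})\,dx$. The acceptance region is invariant under $x\mapsto-x$, and so is $\mathcal N(\cdot;0,I_{n_H})$. Hence $T_{\mathrm C}(0;M)=0$.

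\emph{Step 1 (Jacobian at zero).} Differentiate the integral in $\nu$ at $\nu=0$. Differentiating the density gives $\nabla_\nu\mathcal N(x;\nu,I)=(x-\nu)\mathcal N(x;\nu,I)$, which produces $\alpha^{-1}\mathbb E[XX^\top\mathbf 1\{\cdot\}]$. Differentiating the quantile gives a boundary term proportional to $\nabla_\nu t_\alpha(M,\nu)|_{\nu=0}$. This derivative vanishes because $\nu\mapsto t_\alpha(M,\nu)$ is even. The evenness follows since the law of $X^\top MX$ is unchanged under $\nu\mapsto-\nu$. The boundary integral $\int_{\{x^\top Mx=t\}}x\,(\cdots)$ is also odd. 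Justifying the differentiation needs the law of $X^\top MX$ to be continuous with positive density at $t_\alpha$. This holds since $M\ne0$ makes it a nondegenerate weighted sum of $\chi^2_1$ variables. Dominated convergence then applies, giving Eq.~\eqref{eq:app_cem_exact_jacobian}.

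\emph{Step 2 (ideal projector).} At $M=P_H$, split $X=P_HX+(I-P_H)X$ into independent Gaussian pieces. The statistic $X^\top P_HX=\|P_HX\|^2\sim\chi^2_r$ depends only on the first piece. So the cross terms vanish and the complement block is $I-P_H$. The active block is $\alpha^{-1}\mathbb E[P_HXX^\top P_H\mathbf 1\{\|P_HX\|^2\le q_{\alpha,r}\}]$. By rotational invariance within $\operatorname{range}(P_H)$, it is a multiple of $P_H$, and taking the trace identifies the multiple as $\beta_{\alpha,r}$. Note that $\beta_{\alpha,r}<1$ because the conditional mean of $\chi^2_r$ below its own quantile is less than $r$.

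\emph{Step 3 (modulus and invariance).} First, $J_{\mathrm C}(M)$ is a spectral function of $M$. For orthogonal $U$, $J_{\mathrm C}(UMU^\top)=UJ_{\mathrm C}(M)U^\top$, and in an eigenbasis of $M$ the coordinate sign flips $X_j\mapsto-X_j$ preserve both the event and the law. Hence $J_{\mathrm C}(M)$ is diagonal in that basis, commutes with $M$, and leaves every eigenspace invariant, in particular $\mathcal S_H$. Next, the map $M\mapsto J_{\mathrm C}(M)$ is continuous at $P_H$. The quantile $t_\alpha(M,0)$ is continuous because the limiting law $\chi^2_r$ has continuous, strictly positive density, and the indicators then converge a.s., so dominated convergence applies. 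Finally, $\Pi_r(M)\to P_H$ by Davis--Kahan, since the eigengap is at least $1-2\delta>0$. Combining these gives Eq.~\eqref{eq:app_cem_modulus_limit}. Eq.~\eqref{eq:app_cem_active_balance} then follows by taking $\delta=\xi_H$ and using $\|M_H-P_H\|_{\op}\le\xi_H$ from Theorem~\ref{thm:cgs_horizon_curvature}.

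\emph{Step 4 (local contraction, main obstacle).} A naive Taylor expansion only gives $\Pi_HT_{\mathrm C}(\nu)=\Pi_HJ\nu+o(\|\nu\|)$. The claim, however, needs a bound in terms of $\|\Pi_H\nu\|$ even when $\nu$ has a large off-subspace component. The plan is to use the sign-flip symmetry again, now at nonzero $\nu$. Reflecting the $\mathcal S_H$ eigencoordinates preserves $X^\top M_HX$, so $\Pi_HT_{\mathrm C}(\nu)$ is odd in $\Pi_H\nu$ and vanishes when $\Pi_H\nu=0$. Writing $\nu=\nu_\perp+\Pi_H\nu$, this gives
\[
\Pi_HT_{\mathrm C}(\nu)=\Bigl[\int_0^1\Pi_H D_\nu T_{\mathrm C}(\nu_\perp+\theta\Pi_H\nu)\,d\theta\Bigr]\Pi_H\nu .
\]
Smoothness of $T_{\mathrm C}$ in $\nu$ near $0$ follows as in Step 1. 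Its $\nu$-Jacobian at $0$ satisfies $\Pi_HD_\nu T_{\mathrm C}(0)\Pi_H=J_{\mathrm C}(M_H)|_{\mathcal S_H}$, whose norm is at most $\beta_{\alpha,r}+\omega$ by Step 3. Continuity of the Jacobian in $\nu$ then makes the bracketed operator have norm below any $q_{\mathrm C}>\beta_{\alpha,r}+\omega$ for all sufficiently small $\nu$. Establishing this continuity is where I expect the real work: it needs continuity of $t_\alpha(M_H,\nu)$ and of the boundary terms in $\nu$, proved by the same density arguments as in Step 1.
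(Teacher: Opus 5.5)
Your proposal is correct and follows essentially the same route as the paper. Both arguments use central symmetry to get $T_{\mathrm C}(0;M)=0$ and the Jacobian formula, and the independent split of $X$ at $P_H$ for the ideal Jacobian. Both then use dominated convergence with the spectral gap for the modulus, the eigenbasis sign-flip argument for invariance of $\mathcal S_H$, and reflection symmetry plus integration along $\nu_\perp+\theta\Pi_H\nu$ for the local contraction. The differences are cosmetic: you kill the quantile term via evenness of $t_\alpha(M,\cdot)$ rather than $\nabla_\nu F(0,t)=0$, and you argue continuity at $P_H$ directly rather than by contradiction.
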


\begin{proof}
For fixed nonzero \(M\succeq0\), define
\begin{equation}
    F(\nu,t)
    :=\mathbb P_{X\sim\mathcal N(\nu,I_{n_H})}(X^\top MX\leq t).
\end{equation}
For nonzero $M\succeq0$, the quadratic form has a positive continuous
density for $t>0$, and its $\alpha$-quantile is strictly positive.
The Gaussian density is smooth in $\nu$, so the implicit function theorem
applies to $F(\nu,t_\alpha(M,\nu))=\alpha$ near $\nu=0$ and gives
a differentiable quantile.
By central symmetry,
\begin{equation}
    \nabla_\nu F(0,t)
    =\mathbb E[X\mathbf1\{X^\top MX\leq t\}]
    =0,
\end{equation}
so \(D_\nu t_\alpha(M,0)=0\).  Differentiating the unnormalized elite
first moment under the integral sign gives
Eq.~\eqref{eq:app_cem_exact_jacobian}.

For \(M=P_H\), decompose
\(X=P_HX+(I_{n_H}-P_H)X\).  The components are independent, and the elite event
is \(\|P_HX\|_2^2\leq q_{\alpha,r}\).  Rotational symmetry on
\(\operatorname{range}(P_H)\) gives conditional second moment
\(\beta_{\alpha,r}P_H\); the orthogonal component remains standard
Gaussian and the cross moments vanish.  This proves
Eq.~\eqref{eq:app_cem_ideal_jacobian}.  Conditioning a non-degenerate
\(\chi_r^2\) variable on a strict lower-tail event reduces its mean, so
\(0<\beta_{\alpha,r}<1\).

To prove continuity, let \(M_n\to P_H\).  For
\(X\sim\mathcal N(0,I_{n_H})\),
\(X^\top M_nX\to X^\top P_HX\) almost surely.  The limit has the
continuous \(\chi_r^2\) law, so the corresponding \(\alpha\)-quantiles
converge to \(q_{\alpha,r}\).  The elite indicators then converge almost
surely outside a null boundary event.  Since
\(\|XX^\top\|_{\op}=\|X\|_2^2\) is integrable, dominated convergence in
Eq.~\eqref{eq:app_cem_exact_jacobian} gives
\(J_{\mathrm C}(M_n)\to J_{\mathrm C}(P_H)\).  The spectral gap of
\(P_H\) also gives \(\Pi_r(M_n)\to P_H\) whenever
\(\|M_n-P_H\|_{\op}<1/2\).  These two facts imply
Eq.~\eqref{eq:app_cem_modulus_limit} by contradiction.

In an eigenbasis of \(M_H\), the elite event depends only on squared
coordinates.  All off-diagonal conditional second moments vanish, so
\(J_{\mathrm C}(M_H)\) commutes with \(M_H\) and leaves
\(\mathcal S_H\) invariant.  The balance bound follows from the definition
of \(\omega_{\alpha,n_H,r}\). To obtain the projected contraction, write
\(\nu=e+\eta\), with \(e=\Pi_H\nu\) and
\(\eta=(I_{n_H}-\Pi_H)\nu\). Reflection symmetry of the active coordinates gives
\(\Pi_HT_{\mathrm C}(\eta;M_H)=0\). Continuity of the derivative near zero
bounds \(\|\Pi_HD_\nu T_{\mathrm C}(\nu;M_H)\Pi_H\|_{\op}\) by
\(q_{\mathrm C}\) on a sufficiently small ball. Integrating along
\(\eta+te\), \(0\leq t\leq1\), proves the claimed contraction.
\end{proof}

Multiplying the cost by a positive scalar does not change the CEM elite set.
Thus its ideal contraction depends on \(\alpha\) and \(r\), while CGS
controls the deviation from a balanced active update through \(\xi_H\).
When CEM also updates covariance, repeated application requires a uniform
bound on the proposal-whitened matrices
\(\Sigma_i^{1/2}G_H\Sigma_i^{1/2}\); no full coupled
mean--covariance contraction is asserted.

\paragraph{Conditions and dimension cases.}
The modulus $\omega_{\alpha,n_H,r}$ gives continuity at the ideal
projector; no linear rate in $\xi_H$ is asserted.
Contraction requires both $\xi_H<1/2$ and
$\beta_{\alpha,r}+\omega_{\alpha,n_H,r}(\xi_H)<1$, with the full mean
inside the neighborhood specified in the proposition.
When $r_{\rm ctrl}=r$, the result covers every direction that affects the
cost. When $r_{\rm ctrl}>r$, it controls the leading $r$ directions;
the additional positive eigenvalues still enter the elite event and need
not have a comparable contraction factor.

\subsubsection{Finite-sample and repeated mean updates}

For the one-step sampling bound, assume $\xi_H<1$ and $\|\nu\|_2\leq R$.
All samples in a batch are independent.

\begin{proposition}[Sampling error in one CEM update]
\label{prop:cgs_cem_finite_k_concentration}
For CEM, draw a fresh batch
\(X_1,\ldots,X_K\sim\mathcal N(\nu,I_{n_H})\), assume
\(K_{\rm elite}:=\alpha K\in\mathbb N\), let \(\widehat t_\alpha\) be the
\(K_{\rm elite}\)-th order statistic of \(Y_j:=X_j^\top M_HX_j\), and define
\begin{equation}
    \widehat T_{\mathrm C,K}
    :=\frac{1}{K_{\rm elite}}\sum_{j=1}^K
      X_j\mathbf1\{Y_j\leq\widehat t_\alpha\}.
\end{equation}
There is a universal constant \(C_{\mathrm{conc}}>0\) such that, for every \(\zeta\in(0,1)\), with probability at least
\(1-\zeta\),
\begin{equation}
\begin{aligned}
    &\left\|
        \Pi_H(\widehat T_{\mathrm C,K}-T_{\mathrm C}(\nu;M_H))
    \right\|_2\\
    &\quad\leq
    \frac{C_{\mathrm{conc}}(1+R)}{\alpha}
    \sqrt{\frac{r\log(6r/\zeta)}{K}}
    +\frac{R_{\mathrm C,K}(R,\zeta)}{\alpha}
     \sqrt{\frac{\log(6/\zeta)}{2K}},
\end{aligned}
    \label{eq:app_cem_finite_K_bound}
\end{equation}
where
\begin{equation}
    R_{\mathrm C,K}(R,\zeta)
    :=R+\sqrt r+\sqrt{2\log(3K/\zeta)}.
    \label{eq:app_cem_RK}
\end{equation}
Thus the projected CEM mean update converges to its population map at the
canonical \(K^{-1/2}\) rate, up to logarithmic factors.
\end{proposition}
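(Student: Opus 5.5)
The plan mirrors the proof of Proposition~\ref{prop:cgs_finite_k_concentration}, split into a numerator term and a threshold term. Write $\Pi_H\widehat T_{\mathrm C,K}=K_{\rm elite}^{-1}\sum_j\Pi_HX_j\mathbf 1\{Y_j\le\widehat t_\alpha\}$. Introduce the population quantile $t_\alpha:=t_\alpha(M_H,\nu)$ and the oracle estimator $\widetilde T:=(\alpha K)^{-1}\sum_j\Pi_HX_j\mathbf 1\{Y_j\le t_\alpha\}$. The error then splits as
\[
\Pi_H(\widehat T_{\mathrm C,K}-T_{\mathrm C})=(\widetilde T-\Pi_HT_{\mathrm C})+(\Pi_H\widehat T_{\mathrm C,K}-\widetilde T).
\]
The first piece should produce the first term of Eq.~\eqref{eq:app_cem_finite_K_bound}. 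The second should produce the $R_{\mathrm C,K}$ term. The failure budget $\zeta$ is split into three parts of $\zeta/3$, which explains the constants $6r/\zeta$, $6/\zeta$ and $3K/\zeta$.

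\textbf{Oracle term.} Note that $\mathbb E[\Pi_HX\mathbf 1\{Y\le t_\alpha\}]=\alpha\,\Pi_HT_{\mathrm C}(\nu;M_H)$, since the elite event has probability exactly $\alpha$ (the law of $Y$ is continuous). Fix an orthonormal basis of $\mathcal S_H$. Each coordinate $\langle u_i,X_j\rangle\mathbf 1\{\cdot\}$ is sub-Gaussian with norm $\lesssim1+R$, because $\langle u_i,X_j\rangle\sim\mathcal N(\langle u_i,\nu\rangle,1)$ with $|\langle u_i,\nu\rangle|\le R$, and multiplying by an indicator preserves sub-Gaussianity. Apply Bernstein/Hoeffding-type concentration to each of the $r$ coordinates with failure $\zeta/(6r)$ each, take a union bound, and divide by $\alpha$. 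This gives $C(1+R)\alpha^{-1}\sqrt{r\log(6r/\zeta)/K}$ with probability $\ge1-\zeta/3$.

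\textbf{Threshold term.} The two sums differ only on indices $j$ with $Y_j$ between $t_\alpha$ and $\widehat t_\alpha$. Their number is $N:=|\#\{Y_j\le t_\alpha\}-K_{\rm elite}|$. This holds because $\widehat t_\alpha$ is the $K_{\rm elite}$-th order statistic and ties occur with probability zero: the two index sets are nested and differ by exactly $N$ elements. By Hoeffding, $\#\{Y_j\le t_\alpha\}$ is a Binomial$(K,\alpha)$ count, so $N\le K\sqrt{\log(6/\zeta)/(2K)}$ with probability $\ge1-\zeta/3$. Each differing term satisfies $\|\Pi_HX_j\|_2\le\|\Pi_H\nu\|_2+\|\Pi_H(X_j-\nu)\|_2\le R+\max_j\|\Pi_H(X_j-\nu)\|_2$. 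Gaussian norm concentration (Lipschitz concentration of the $\chi_r$ norm) combined with a union bound over $j\le K$ gives $\max_j\|\Pi_H(X_j-\nu)\|_2\le\sqrt r+\sqrt{2\log(3K/\zeta)}$ with probability $\ge1-\zeta/3$. Multiplying, the second piece is at most $(\alpha K)^{-1}N\,R_{\mathrm C,K}(R,\zeta)$, which is exactly the second term.

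A union bound over the three events finishes the proof. The main obstacle I expect is the threshold term, specifically the exact counting identity between the empirical and oracle elite sets. One needs continuity of the law of $Y=X^\top M_HX$, which holds since $M_H\ne0$ when $\xi_H<1$. One also has to be careful that the bound on $\|\Pi_HX_j\|$ is taken over all indices rather than only the swapped ones. A secondary issue is stating the sub-Gaussian constant uniformly in $\|\nu\|_2\le R$, which is handled by centering at $\nu$.
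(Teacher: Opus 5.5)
Your proposal is correct and follows essentially the same route as the paper: the same oracle-threshold split with failure budget $\zeta/3$ per event, coordinatewise sub-Gaussian concentration at the fixed threshold $t_\alpha$, a count of the swapped elite indices using nestedness and the absence of ties, and a union-bounded Gaussian norm tail that gives $R_{\mathrm C,K}(R,\zeta)$. The only difference is that you bound the swapped count with Hoeffding for the Binomial$(K,\alpha)$ count at the single point $t_\alpha$, whereas the paper uses the uniform Dvoretzky--Kiefer--Wolfowitz inequality; since only that one threshold matters, your pointwise bound suffices and gives the same constant $\sqrt{\log(6/\zeta)/(2K)}$.
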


\begin{proof}
Let \(F_Y\) and \(F_{Y,K}\) be the population and empirical CDFs of
\(Y=X^\top M_HX\), and let \(t_\alpha\) be its population
\(\alpha\)-quantile.  The law of \(Y\) is continuous because \(M_H\) has
at least \(r\) positive eigenvalues.  The
Dvoretzky--Kiefer--Wolfowitz inequality gives, with probability at least
\(1-\zeta/3\),
\begin{equation}
    \sup_t|F_{Y,K}(t)-F_Y(t)|
    \leq
    u_K:=\sqrt{\frac{\log(6/\zeta)}{2K}}.
    \label{eq:app_cem_DKW_event}
\end{equation}
With probability one there are no ties, so
\(F_{Y,K}(\widehat t_\alpha)=\alpha\).  On the event above,
\(|F_{Y,K}(t_\alpha)-\alpha|\leq u_K\); because threshold sets are nested,
the two selected index sets differ in at most \(Ku_K\) samples.

A Gaussian norm tail bound and a union bound give, with probability at least
\(1-\zeta/3\),
\begin{equation}
    \max_{1\leq j\leq K}\|\Pi_HX_j\|_2
    \leq R_{\mathrm C,K}(R,\zeta).
    \label{eq:app_cem_max_event}
\end{equation}
Thus replacing \(t_\alpha\) by \(\widehat t_\alpha\) changes the
projected empirical first moment by at most \(R_{\mathrm C,K}(R,\zeta)u_K\).

At the fixed threshold \(t_\alpha\), every coordinate of
\(\Pi_HX_j\mathbf1\{Y_j\leq t_\alpha\}\) is sub-Gaussian with centered
norm at most \(C_0(1+R)\).  Therefore, with probability at least
\(1-\zeta/3\),
\begin{equation}
\begin{aligned}
    &\left\|
        K^{-1}\sum_{j=1}^K
        \Pi_HX_j\mathbf1\{Y_j\leq t_\alpha\}
        -\mathbb E[\Pi_HX\mathbf1\{Y\leq t_\alpha\}]
    \right\|_2\\
    &\qquad\leq
    C_1(1+R)
    \sqrt{\frac{r\log(6r/\zeta)}{K}}.
\end{aligned}
\end{equation}
Both empirical and population CEM means divide their first moments by
\(\alpha\).  Combining the three events proves
Eq.~\eqref{eq:app_cem_finite_K_bound}.
\end{proof}

\paragraph{Repeated updates.}
Assume the contraction conditions of
Proposition~\ref{prop:cgs_cem_population_full}, and choose $q_{\rm C}$
and a full-mean neighborhood radius $R_\nu$ as in that proposition.
Keep the proposal covariance fixed and draw a fresh independent batch
at each update. Let $e_i=\Pi_H\nu_i$ and let $\epsilon_K^{\rm C}(\eta)$ denote the
right-hand side of Eq.~\eqref{eq:app_cem_finite_K_bound} with $R=R_\nu$
and failure probability $\eta$.
Assume $\|e_0\|_2\leq R_e$ and
$\epsilon_K^{\rm C}(\zeta/I)\leq(1-q_{\rm C})R_e$, and define
$T_{\rm exit}$ by Eq.~\eqref{eq:app_planner_first_exit} with these radii.
Corollary~\ref{cor:cgs_finite_budget_app} gives an event of probability
at least $1-\zeta$ on which, whenever $I\leq T_{\rm exit}$,
\begin{equation}
    \|e_I\|_2\leq q_{\rm C}^I\|e_0\|_2
        +\frac{1-q_{\rm C}^I}{1-q_{\rm C}}\epsilon_K^{\rm C}(\zeta/I).
    \label{eq:app_cem_finite_iteration_bound}
\end{equation}
This controls the projected mean error, with the additional logarithms
from the empirical elite threshold retained in $\epsilon_K^{\rm C}$.
As in Corollary~\ref{cor:cgs_finite_budget_app}, this statement bounds
failure before exit; it does not condition on the trajectory remaining
in the full-mean neighborhood.
The covariance remains fixed throughout the analyzed updates.

\subsection{Gradient Descent: Planning Cost Convergence}
\label{app:cgs_ts_style}

We consider unconstrained gradient descent on the quadratic cost $C$ of
Section~\ref{app:cgs_mpc_background}. Let $\mathbf a_i$ be its iterates,
$\Delta_i:=C(\mathbf a_i)-C(\mathbf a^\star)$, and $h>0$ the step size.

\subsubsection{Convergence from the CGS spectral bounds}

\begin{proposition}[Gradient descent under CGS geometry]
\label{prop:cgs_gd_cost_convergence}
Assume the three $\varepsilon$-CGS bounds with $0\leq\varepsilon<1$
and $\xi_H<1$.
Gradient descent with $h=1/(cH)$ satisfies
\begin{equation}
\begin{aligned}
    \mathbf a_{i+1}&=\mathbf a_i-h\nabla C(\mathbf a_i),\\
    \mathbf a_{i+1}-\mathbf a^\star
        &=\left(I_{n_H}-\frac{G_H}{cH}\right)(\mathbf a_i-\mathbf a^\star).
\end{aligned}
    \label{eq:app_gd_exact_update}
\end{equation}
On the leading $r=d_a$ eigenspace $\mathcal S_H$,
\begin{equation}
    \left\|\left(I_{n_H}-\frac{G_H}{cH}\right)\bigg|_{\mathcal S_H}\right\|_{\op}
    \leq\xi_H.
    \label{eq:app_gd_dominant_contraction}
\end{equation}
Define the cost contribution of these directions by
\begin{equation}
    \Delta_i^{\rm dom}:=\tfrac12(\mathbf a_i-\mathbf a^\star)^\top
        \Pi_HG_H\Pi_H(\mathbf a_i-\mathbf a^\star).
\end{equation}
For every integer $I\geq1$,
\begin{equation}
    \Delta_I^{\rm dom}\leq\xi_H^{2I}\Delta_0^{\rm dom}.
    \label{eq:app_gd_dominant_cost_bound}
\end{equation}
If $r_{\rm ctrl}=r$, then $\Delta_i^{\rm dom}=\Delta_i$, and the same
bound holds for the full planning cost gap.
\end{proposition}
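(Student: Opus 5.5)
The proof is linear algebra on the exact quadratic structure, followed by the spectral bounds already established for \(M_H=G_H/(cH)\).

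\emph{Step 1: the update identity.} Because \(C\) is quadratic, Eq.~\eqref{eq:app_exact_quadratic_expansion} gives \(\nabla C(\mathbf a)=G_H(\mathbf a-\mathbf a^\star)\). This holds even when the goal is unreachable, since the normal equation removes the constant residual term. Substituting \(h=1/(cH)\) into the gradient step and subtracting \(\mathbf a^\star\) yields Eq.~\eqref{eq:app_gd_exact_update}.

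\emph{Step 2: the contraction on \(\mathcal S_H\).} Let \(N:=I_{n_H}-M_H\). Since \(N\) is a polynomial in \(M_H\), it shares the eigenvectors of \(G_H\) and leaves \(\mathcal S_H\) invariant; in particular \(N\) commutes with \(\Pi_H\). On \(\mathcal S_H\), Eq.~\eqref{eq:app_cgs_low_dim_dominant_bound} (Theorem~\ref{thm:cgs_horizon_curvature} plus Weyl's inequality) gives \(1-\xi_H\leq\lambda_j(M_H)\leq1+\xi_H\) for \(j\leq r\). Hence each eigenvalue \(1-\lambda_j(M_H)\) of \(N|_{\mathcal S_H}\) lies in \([-\xi_H,\xi_H]\). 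Because \(N\) is symmetric, its operator norm on \(\mathcal S_H\) is at most \(\xi_H\), which is Eq.~\eqref{eq:app_gd_dominant_contraction}.

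\emph{Step 3: the cost bound.} Set \(e_i=\Pi_H(\mathbf a_i-\mathbf a^\star)\). Since \(\Pi_H\) commutes with \(N\), we get \(e_{i+1}=Ne_i\), and so \(e_I=N^Ie_0\). Write
\[
\Delta_i^{\rm dom}=\tfrac12\|G_H^{1/2}e_i\|_2^2 .
\]
The matrix \(G_H^{1/2}\) also commutes with \(N\), so
\[
\|G_H^{1/2}e_I\|_2=\|N^IG_H^{1/2}e_0\|_2\leq\xi_H^I\|G_H^{1/2}e_0\|_2 ,
\]
using that \(G_H^{1/2}e_0\in\mathcal S_H\). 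Squaring gives Eq.~\eqref{eq:app_gd_dominant_cost_bound}.

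\emph{Step 4: the case \(r_{\rm ctrl}=r\).} Here \(\mathcal S_H=\operatorname{range}(G_H)\), so \(G_H(I_{n_H}-\Pi_H)=0\). Therefore \(\Pi_HG_H\Pi_H=G_H\) and \(\Delta_i^{\rm dom}=\Delta_i\).

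The only point needing care is that the argument uses only the leading-\(r\) spectral bounds, not exact alignment with \(P_H\). The commutation of \(\Pi_H\) with \(M_H\) holds because \(\Pi_H\) is defined as a spectral projector of \(G_H\) itself. If eigenvalues tie at the boundary of the top-\(r\) set, I would choose \(\mathcal S_H\) as any invariant top-\(r\) eigenspace, which the definition already permits.

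The main caveat is that directions outside \(\mathcal S_H\) with small positive curvature, at most \(cH\xi_H\), converge slowly. This is why the full-gap statement needs \(r_{\rm ctrl}=r\). I expect no further obstacle.
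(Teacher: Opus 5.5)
Your proposal is correct and follows essentially the same argument as the paper: the gradient identity from the exact quadratic expansion, commutation of $\Pi_H$ with $G_H$, the per-eigendirection multiplier $1-\lambda_j(G_H)/(cH)\in[-\xi_H,\xi_H]$ on $\mathcal S_H$ from Theorem~\ref{thm:cgs_horizon_curvature} with Weyl's inequality, and $\Pi_H G_H\Pi_H=G_H$ when $r_{\rm ctrl}=r$. Your coordinate-free step, which uses that $G_H^{1/2}$ commutes with $I_{n_H}-G_H/(cH)$, is the same computation as the paper's eigenbasis argument that squares each coordinate and weights it by $\lambda_j(G_H)/2$.
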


\begin{proof}
Equation~\eqref{eq:app_exact_quadratic_expansion} gives
$\nabla C(\mathbf a)=G_H(\mathbf a-\mathbf a^\star)$, proving the
update formula. The projector $\Pi_H$ commutes with $G_H$.
In an eigenbasis, each of the leading $r$ error coordinates is multiplied
by $1-\lambda_j(G_H)/(cH)$, whose absolute value is at most $\xi_H$
by Theorem~\ref{thm:cgs_horizon_curvature}.
Squaring each coordinate and weighting it by $\lambda_j(G_H)/2$ proves
the one-step cost bound; iteration gives
Eq.~\eqref{eq:app_gd_dominant_cost_bound}.
When $\mathcal S_H=\operatorname{range}(G_H)$, all other directions
have zero cost contribution, so $\Delta_i^{\rm dom}=\Delta_i$.
\end{proof}

\paragraph{Step size and full-spectrum conditioning.}
More generally, let $L=\lambda_{\max}(G_H)$ and
$m=\lambda_{\min}^{+}(G_H)>0$. The update on a positive-eigenvalue
direction has multiplier $1-h\lambda_j(G_H)$.
Every such direction contracts when $0<h<2/L$.
Minimizing $\max\{|1-hm|,|1-hL|\}$ gives
$h_\star=2/(L+m)$ and contraction factor
$q_{\rm GD}=(\kappa_{\rm eff}(G_H)-1)/(\kappa_{\rm eff}(G_H)+1)$.
Consequently, $\Delta_I\leq q_{\rm GD}^{2I}\Delta_0$.
When $r_{\rm ctrl}=r$, the CGS spectral bound gives $q_{\rm GD}\leq\xi_H$.
At ideal geometry, one step with $h=1/(cH)$ reaches the minimum cost.

\paragraph{Dimension cases.}
If $d=d_a$, $r_{\rm ctrl}=r$ and the full cost bound applies.
It also applies when $d>d_a$ but $r_{\rm ctrl}=d_a$, for example under
the invariance condition in Corollary~\ref{cor:cgs_full_controllable_bound}.
If $r_{\rm ctrl}>d_a$, the bound with factor $\xi_H$ covers the dominant
cost contribution. The remaining positive eigenvalues may be arbitrarily
small, so their gradient updates may contract arbitrarily slowly.
A quantitative bound on all positive eigenvalues, such as the one in
Corollary~\ref{cor:cgs_full_controllable_bound}, instead controls
$q_{\rm GD}$ for the full cost.
These statements concern ordinary gradient descent on $C$; the Adam
optimizer used in our experiments requires a separate optimizer analysis.

\subsubsection{Comparison using temporal straightening and control isotropy}

The bound below uses only $\|A-I_d\|_{\op}$ and $\|B^\top B-cI_{d_a}\|_{\op}$.
It recovers the temporal-straightening-style conditioning argument in the
square-action case. Proposition~\ref{prop:cgs_gd_cost_convergence} above
uses the full CGS spectral bound, including drift--control decoupling.

\begin{corollary}[Bound in the square-action case]
\label{cor:cgs_ts_style_condition}
Suppose \(d_a=d\), and define
\(\delta_A:=\|A-I_d\|_{\op}<1\) and
\(\delta_B:=c^{-1}\|B^\top B-cI_{d_a}\|_{\op}<1\).  Then \(B\) is invertible and
\begin{equation}
\begin{aligned}
    \kappa_{\mathrm{eff}}(G_H)
    &=\kappa(W_H)\\
    &\leq
    \frac{1+\delta_B}{1-\delta_B}
    \frac{\sum_{k=0}^{H-1}(1+\delta_A)^{2k}}
         {\sum_{k=0}^{H-1}(1-\delta_A)^{2k}}\\
    &\leq
    \frac{1+\delta_B}{1-\delta_B}
    \left(\frac{1+\delta_A}{1-\delta_A}\right)^{2(H-1)}.
\end{aligned}
    \label{eq:app_cgs_ts_style_condition}
\end{equation}
Under the \(\varepsilon\)-CGS bounds,
\begin{equation}
    \kappa_{\mathrm{eff}}(G_H)
    \leq
    \frac{1+\varepsilon}{1-\varepsilon}
    \left(\frac{1+\sqrt\varepsilon}{1-\sqrt\varepsilon}\right)^{2(H-1)}.
    \label{eq:app_cgs_ts_style_epsilon}
\end{equation}
\end{corollary}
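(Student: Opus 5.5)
We work with the controllability Gramian \(W_H=\sum_{k=0}^{H-1}A^kBB^\top(A^\top)^k\) and use the identity \(\kappa_{\mathrm{eff}}(G_H)=\kappa_{\mathrm{eff}}(W_H)\) from the proof of Theorem~\ref{thm:cgs_horizon_curvature}. First, since \(d_a=d\) and \(\delta_B<1\), we have \(B^\top B\succeq c(1-\delta_B)I_d\succ0\), so \(B\) is invertible. Since \(\|A-I_d\|_{\op}=\delta_A<1\), \(A\) is invertible too. The last block of \(\Gamma_H\) is \(B\), so \(W_H\succeq BB^\top\succ0\). Hence \(\kappa_{\mathrm{eff}}(W_H)=\kappa(W_H)\), which gives the first equality.

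For the first inequality we bound \(\lambda_{\max}(W_H)\) and \(\lambda_{\min}(W_H)\) term by term. The matrix \(BB^\top\) has the same spectrum as \(B^\top B\), so \(c(1-\delta_B)I_d\preceq BB^\top\preceq c(1+\delta_B)I_d\). It follows that
\[
c(1-\delta_B)A^k(A^\top)^k\preceq A^kBB^\top(A^\top)^k\preceq c(1+\delta_B)A^k(A^\top)^k .
\]
For any unit \(u\), \(\|(A^\top)^ku\|_2\) lies between \(\sigma_{\min}(A)^k\) and \(\|A\|_{\op}^k\). We have \(\|A\|_{\op}\leq1+\delta_A\), and by Weyl's inequality for singular values \(\sigma_{\min}(A)\geq1-\delta_A\). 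Summing over \(k\) gives
\[
u^\top W_Hu\in\Bigl[c(1-\delta_B)\textstyle\sum_k(1-\delta_A)^{2k},\;c(1+\delta_B)\sum_k(1+\delta_A)^{2k}\Bigr].
\]
Taking the ratio of the two endpoints yields the middle expression in Eq.~\eqref{eq:app_cgs_ts_style_condition}.

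For the second inequality, write \(x=(1+\delta_A)^2\), \(y=(1-\delta_A)^2\), and \(\rho_0=x/y\geq1\). Since \(x^k=\rho_0^k y^k\leq\rho_0^{H-1}y^k\) for \(0\leq k\leq H-1\), termwise domination gives
\[
\frac{\sum_{k=0}^{H-1}x^k}{\sum_{k=0}^{H-1}y^k}\leq\rho_0^{H-1}=\Bigl(\frac{1+\delta_A}{1-\delta_A}\Bigr)^{2(H-1)} .
\]

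Finally, the \(\varepsilon\)-CGS bounds of Corollary~\ref{cor:cgs_loss_to_epsilon} give \(\delta_A=\|D\|_{\op}\leq\sqrt\varepsilon\) and \(\delta_B\leq\varepsilon\). Both are below \(1\) because \(\varepsilon<1\). The bound is increasing in \(\delta_A\) and in \(\delta_B\), so substituting these upper bounds proves Eq.~\eqref{eq:app_cgs_ts_style_epsilon}.

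The only delicate step is the lower bound \(\sigma_{\min}(A^k)\geq(1-\delta_A)^k\). This follows from submultiplicativity of the smallest singular value for square matrices, \(\sigma_{\min}(XY)\geq\sigma_{\min}(X)\sigma_{\min}(Y)\), together with \(\sigma_{\min}(I+D)\geq1-\|D\|_{\op}\). I expect this to be routine. Note that the drift–control coupling term \(D^\top B\) plays no role in this argument, which is why the bound is only \(O(\sqrt\varepsilon)\)-sharp in the exponent.
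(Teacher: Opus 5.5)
Your proof is correct and follows essentially the same route as the paper. Both bound the quadratic form \(x^\top W_Hx=\sum_{k}\|B^\top(A^\top)^kx\|_2^2\) above and below using the extreme singular values of \(B\) and of \(A\), compare the two geometric sums termwise, and then substitute \(\delta_A\le\sqrt\varepsilon\) and \(\delta_B\le\varepsilon\). Your Loewner sandwich of \(BB^\top\) is simply a restatement of the paper's \(\kappa(B)^2=\kappa(B^\top B)\) step, and you make explicit two facts the paper leaves implicit: \(\sigma_{\min}(A^k)\ge\sigma_{\min}(A)^k\), and \(W_H\succeq BB^\top\succ0\).
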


\begin{proof}
For every unit vector \(x\),
\begin{equation}
    x^\top W_Hx
    =\sum_{k=0}^{H-1}\|B^\top(A^\top)^kx\|_2^2.
\end{equation}
Thus
\begin{equation}
\begin{aligned}
    \lambda_{\max}(W_H)
    &\leq\sigma_{\max}(B)^2
      \sum_{k=0}^{H-1}\sigma_{\max}(A)^{2k},\\
    \lambda_{\min}(W_H)
    &\geq\sigma_{\min}(B)^2
      \sum_{k=0}^{H-1}\sigma_{\min}(A)^{2k}.
\end{aligned}
\end{equation}
The assumptions give
\(\sigma_{\max}(A)\leq1+\delta_A\),
\(\sigma_{\min}(A)\geq1-\delta_A\), and
\(\kappa(B)^2=\kappa(B^\top B)\leq(1+\delta_B)/(1-\delta_B)\).
Dividing the two bounds proves the first inequality in
Eq.~\eqref{eq:app_cgs_ts_style_condition}.  The ratio of the two positive
sums is at most the largest termwise ratio, proving the second.  Under the
\(\varepsilon\)-CGS bounds, \(\delta_A\leq\sqrt\varepsilon\) and
\(\delta_B\leq\varepsilon\), which gives
Eq.~\eqref{eq:app_cgs_ts_style_epsilon}.
\end{proof}

\subsection{Geometry under Other Training Objectives}
\label{app:cgs_prediction_constraint}

In the trained world model, \(A\) and \(B\) are jointly induced by encoder
and predictor parameters \(\vartheta=(\theta,\phi)\). Prediction and
other training objectives also shape these parameters, and their preferred representations
may differ from those that minimize CGS alone. CGS can still improve control
geometry while preserving the structure required by the other training
objectives, even when its loss remains nonzero. The presence of another loss
does not itself imply a positive
loss floor: the objectives may share a zero-CGS solution.

Let \(\mathcal L_{\mathrm{other}}
:=\mathcal L_{\mathrm{pred}}+\mathcal L_{\mathrm{aux}}\) collect prediction
and all weighted non-CGS auxiliary objectives, with
\(\mathcal L_{\mathrm{aux}}=0\) when none are present. For LeWM,
\(\mathcal L_{\mathrm{aux}}=\lambda_{\mathrm{sig}}\mathcal L_{\mathrm{sig}}\),
so \(\mathcal L_{\mathrm{other}}=\mathcal L_{\mathrm{LeWM}}\).
Using the same total-loss notation as in
Eq.~\eqref{eq:cgs-lewm-training}, consider
\begin{equation}
    \mathcal L(\vartheta)
    =\mathcal L_{\mathrm{other}}(\vartheta)
     +\lambda_{\mathrm{CGS}}\mathcal L_{\mathrm{CGS}}(\vartheta),
    \label{eq:app_cgs_training_objective}
\end{equation}
and the nonempty set of models meeting a prescribed tolerance
\(\ell_{\mathrm{other}}\geq0\) for these other objectives,
\begin{equation}
    \Theta_{\ell_{\mathrm{other}}}
    :=\{\vartheta:\mathcal L_{\mathrm{other}}(\vartheta)\leq\ell_{\mathrm{other}}\}.
    \label{eq:app_prediction_feasible_set}
\end{equation}
A positive loss floor arises if these requirements keep the Gram defect
uniformly away from zero. One sufficient condition is that every
\(\vartheta\in\Theta_{\ell_{\mathrm{other}}}\) must retain temporal drift
\begin{equation}
    \|A_\vartheta-I_d\|_{\op}\geq\delta_{\mathrm{drift}}>0,
    \label{eq:app_irreducible_temporal_drift}
\end{equation}
and Assumption~\ref{ass:cgs_joint_coverage} holds uniformly on this
set with coverage constant at least \(\rho\). Write
\(D_\vartheta:=A_\vartheta-I_d\) and keep \(c>0\) fixed. Then
Proposition~\ref{prop:cgs_joint_straightening} implies
\begin{equation}
    \inf_{\vartheta\in\Theta_{\ell_{\mathrm{other}}}}
    \mathcal L_G^\vartheta(c)
    \geq\rho^2\delta_{\mathrm{drift}}^4.
    \label{eq:app_cgs_prediction_floor}
\end{equation}
Indeed,
\(\|D_\vartheta^\top D_\vartheta\|_F^2
 \geq\|D_\vartheta\|_{\op}^4\geq\delta_{\mathrm{drift}}^4\).
This gives a positive CGS Gram-loss floor whenever fitting the other
objectives requires irreducible temporal drift. More generally, the same
argument applies whenever those objectives keep \(\mathcal R(A,B)\)
uniformly bounded away from zero over \(\Theta_{\ell_{\mathrm{other}}}\).
A large value of \(\|(A-I_d)z_t\|_2\) alone does not establish such a floor:
\(A-I_d\) may still be reducible without compromising the other objectives.

A positive loss floor limits how closely the model can approach ideal CGS
geometry. Feasible adjustments to \(A\) and \(B\) can still improve control
geometry. CGS can favor feasible reductions in drift--control coupling and
adjustments to the control map \(B\), even when temporal drift must be
retained. The fixed-drift calculation below illustrates this remaining
freedom through the Gram-defect surrogate; its optimizer describes the
jointly trained model only when compatible with the other objectives.

\subsubsection{Effect of irreducible drift on the control map}

For fixed \(D=A-I_d\), the \(B\)-dependent part of
\(\mathcal R(A,B)\) is
\begin{equation}
    \mathcal R_B(D,B)
    :=2\|D^\top B\|_F^2+\|B^\top B-cI_{d_a}\|_F^2.
    \label{eq:app_fixed_D_objective}
\end{equation}
Let
\(\lambda_k^D:=\lambda_{d-k+1}(DD^\top)\), \(1\leq k\leq d\).
Thus \(0\leq\lambda_1^D\leq\cdots\leq\lambda_d^D\) lists the drift
eigenvalues in nondecreasing order, while \(\lambda_j(M)\) retains the
nonincreasing convention from the main context.

\begin{proposition}[Optimal control map under fixed temporal drift]
\label{prop:cgs_fixed_d_optimal_b}
For fixed \(D\) and \(d\geq d_a\), a minimizer of
Eq.~\eqref{eq:app_fixed_D_objective} selects the left singular directions
of \(B\) from eigenspaces associated with the \(d_a\) smallest eigenvalues
of \(DD^\top\), with squared singular values
\begin{equation}
    \sigma_k^2(B_\star)
    =(c-\lambda_k^D)_+,
    \qquad
    k=1,\ldots,d_a,
    \label{eq:app_fixed_D_optimal_B}
\end{equation}
where \((u)_+:=\max\{u,0\}\).  The right singular vectors are arbitrary.
\end{proposition}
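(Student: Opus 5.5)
The plan is to rewrite $\mathcal R_B(D,B)$ as a function of the Gram matrix $S:=BB^\top\in\R^{d\times d}$ and then solve a spectral problem. Using $\|D^\top B\|_F^2=\operatorname{tr}(B^\top DD^\top B)=\operatorname{tr}(DD^\top S)$ and
\[
\|B^\top B-cI_{d_a}\|_F^2=\operatorname{tr}((B^\top B)^2)-2c\operatorname{tr}(B^\top B)+c^2d_a=\operatorname{tr}(S^2)-2c\operatorname{tr}(S)+c^2d_a,
\]
we get $\mathcal R_B=\operatorname{tr}(S^2)-2\operatorname{tr}((cI_d-DD^\top)S)+c^2d_a$. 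Completing the square gives
\[
\mathcal R_B=\|S-(cI_d-DD^\top)\|_F^2+\text{const}(D).
\]
So minimizing over $B$ is equivalent to finding the best Frobenius approximation of the symmetric matrix $N:=cI_d-DD^\top$ by a PSD matrix $S$ of rank at most $d_a$. Every such $S$ is realizable as $BB^\top$ with $B\in\R^{d\times d_a}$ because $d\geq d_a$, and this realization leaves the right singular vectors of $B$ free, which explains the last sentence of the statement.

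The eigenvalues of $N$ are $c-\lambda_k^D$, and $N$ shares its eigenvectors with $DD^\top$. The largest eigenvalues of $N$ correspond to the smallest drift eigenvalues $\lambda_1^D\le\cdots$. The key step is the claim that the PSD rank-$\le d_a$ projection of $N$ is $\sum_{k\le d_a}(c-\lambda_k^D)_+\,u_ku_k^\top$, where $u_k$ is an eigenvector of $DD^\top$ for $\lambda_k^D$. Reading off $B_\star=\sum_k\sqrt{(c-\lambda_k^D)_+}\,u_kv_k^\top$ with arbitrary orthonormal $v_k$ then gives Eq.~\eqref{eq:app_fixed_D_optimal_B} with left singular directions in the stated eigenspaces.

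To prove the claim, I would compare eigenvalues. For PSD $S$ with eigenvalues $s_1\ge\cdots\ge s_{d_a}\ge0$ (and the rest zero), the von Neumann trace inequality gives $\operatorname{tr}(NS)\le\sum_j \mu_j s_j$, where $\mu_j$ are the eigenvalues of $N$ in decreasing order. Hence
\[
\|S-N\|_F^2\ge\sum_{j\le d_a}(s_j-\mu_j)^2+\sum_{j>d_a}\mu_j^2,
\]
with equality when $S$ is diagonal in $N$'s eigenbasis with matching order. The remaining problem separates coordinatewise over $s_j\ge0$, and each term is minimized at $s_j=(\mu_j)_+$. This is the Eckart--Young argument with a PSD constraint added. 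The inequality needs $\mu_j$ arranged to pair with the nonincreasing $s_j$, and the constraint $s_j=0$ for $j>d_a$ forces the top $d_a$ eigenvalues of $N$ to be used.

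The main obstacle is this von Neumann step: making the lower bound rigorous and handling ties or negative $\mu_j$. For negative $\mu_j$ the optimum is $s_j=0$, which makes the choice of directions nonunique. That is why the statement says \emph{a} minimizer selects directions ``from eigenspaces'' associated with the smallest drift eigenvalues. I would state the result as existence of such a minimizer, and note that when some $\lambda_k^D\ge c$ the corresponding column of $B_\star$ is zero, so any choice of direction gives the same value.
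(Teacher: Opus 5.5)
Your proposal is correct and is essentially the paper's proof. The paper also uses von Neumann's trace inequality to fix the orientation of $B$, pairing the largest $\sigma_k^2(B)$ with the smallest eigenvalues of $DD^\top$. It then minimizes the decoupled scalar terms $2\lambda_k^D\upsilon_k+(\upsilon_k-c)^2$ over $\upsilon_k\geq0$, and these agree with your $(s_k-\mu_k)^2$ up to constants.

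Your completing-the-square view, as the nearest PSD rank-$d_a$ approximation of $cI_d-DD^\top$, is an equivalent repackaging. The identity $\operatorname{tr}(NS)=c\operatorname{tr}(S)-\operatorname{tr}(DD^\top S)$, which is exactly the paper's split, also settles your concern about an indefinite $N$: the trace inequality then only needs to be applied to the PSD pair $DD^\top$, $BB^\top$.
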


\begin{proof}
Let \(\upsilon_k:=\sigma_k^2(B)\), ordered as
\(\upsilon_1\geq\cdots\geq \upsilon_{d_a}\geq0\).  Since
\begin{equation}
\begin{aligned}
    \|D^\top B\|_F^2&=\operatorname{tr}(DD^\top BB^\top),\\[-2pt]
    \|B^\top B-cI_{d_a}\|_F^2&=\sum_{k=1}^{d_a}(\upsilon_k-c)^2.
\end{aligned}
\end{equation}
the second term is independent of the singular vectors.  For fixed
\((\upsilon_k)\), von Neumann's trace inequality pairs the largest \(\upsilon_k\) with
the smallest eigenvalues of \(DD^\top\).  Writing \(U_B\) for the left
singular vectors,
\begin{equation}
    \min_{U_B}\mathcal R_B(D,B)
    =\sum_{k=1}^{d_a}\bigl[2\lambda_k^D\upsilon_k+(\upsilon_k-c)^2\bigr].
\end{equation}
Each term is strictly convex on \(\upsilon_k\geq0\) and is minimized at
\(\upsilon_k=(c-\lambda_k^D)_+\).  Since \(\lambda_k^D\) is nondecreasing, these
minimizers are nonincreasing and respect the ordering of \((\upsilon_k)\).
\end{proof}

If \(\dim\ker(D^\top)\geq d_a\), the defect objective can place the action
map inside \(\ker(D^\top)\) while retaining \(B^\top B=cI_{d_a}\).
Thus, in this surrogate, nonzero temporal drift can coexist with decoupled
and balanced action effects. When action-controlled
directions overlap with non-removable temporal dynamics, CGS favors weaker
drift directions. Unequal \(\lambda_k^D<c\) yield unequal positive singular
values in \(B_\star\), whereas all \(\lambda_k^D\geq c\) yield zero singular
values, even when the drift eigenvalues differ.  A common reduction of all singular values is only
a scale change; unequal reductions directly worsen control conditioning.
This calculation minimizes the unweighted defect \(\mathcal R\).
For a general correlated \(\Sigma_x\), spectral equivalence bounds
\(\mathcal L_G\) but does not make its minimizer identical to that of
\(\mathcal R\); the calculation illustrates the tradeoff in this surrogate.

\paragraph{Reacher as an illustrative case.}

Reacher illustrates a regime in which one-step action geometry is only an
indirect reference for visual latent change. Joint torques affect
$\Delta z_t$ through acceleration, velocity, configuration, and momentum, so
accurate prediction may require a non-negligible drift term in
\[
    \Delta z_t = D z_t + B a_t,
    \qquad D=A-I_d .
\]
For fixed $D$, Proposition~\ref{prop:cgs_fixed_d_optimal_b} gives
\[
    \sigma_k^2(B^\star)=(c-\lambda_k^D)_+ .
\]
Thus stronger drift lowers the preferred action sensitivity along the
corresponding latent direction. For fixed $\lambda_k^D$, overly large
sensitivities receive stronger downward pressure, although differences in
$\lambda_k^D$ prevent any general ordering of how much the largest and
smallest singular values change. The spectral spread can therefore shrink
mainly through suppression of strong action sensitivities, while the weakest
sensitivity changes little.

This distinction can affect planners differently. For quadratic planning,
optimal fixed-step gradient descent depends on
$\kappa_{\mathrm{eff}}(G_H)=L/m$, whereas the slowest active-direction
population MPPI contraction is
\[
    q_M=\frac{1}{1+(\sigma_p^2/\tau)m}.
\]
Hence a change dominated by reducing $L$ with little increase in $m$ can
improve gradient-based conditioning more clearly than MPPI contraction.
CEM may show a similarly limited gain because its elite update depends on
the overall active spectral shape and the evolving proposal covariance, so
reducing excessive curvature alone need not make elite selection substantially
more balanced. This is consistent with the Reacher results, where straightening methods improve more significant under GD but modest on sampling-based planners.

\end{document}